\definecolor{darkgray}{rgb}{0.3, 0.3, 0.3}
\definecolor{darkgreen}{rgb}{0.0, 0.6, 0.3}
\definecolor{darkblue}{rgb}{0.0, 0.3, 0.8}
\definecolor{darkred}{rgb}{0.8, 0.1, 0.1}
\definecolor{darkpurple}{rgb}{0.4, 0.0, 0.6}
\definecolor{darkgold}{rgb}{0.8, 0.6, 0}
\newcommenter{\anote}{Aaron}{green!20}
\newcommenter{\dnote}{Daniel}{blue!20}
\newcommenter{\mnote}{Mateja}{orange!20}
\newcommenter{\gnote}{Gem}{purple!20}
\newcommenter{\gnotex}{Gem*}{purple!20}
\newcommenter{\remark}{Remark}{white}
\newcommand{\superSpace}{\mathcal{S}}
\newcommand{\multiSpace}{\mathcal{M}}
\newcommand{\goalSpace}{\mathcal{G}}
\newcommand{\contextGraph}{\kappa}
\newcommand{\sourceGraph}{\contextGraph_1}
\newcommand{\targetGraph}{\contextGraph_2}
\newcommand{\assump}{\alpha}
\newcommand{\goa}{\gamma}
\newcommand{\schemaContext}{\contextGraph}
\newcommand{\antecedent}{\assump}
\newcommand{\consequent}{\goa}
\newcommand{\theDelta}{\delta}
\newcommand{\sequent}[2]{\langle #1 \Vdash\! #2 \rangle}
\newcommand{\infSequent}[3]{\langle#1,#2 \Vdash\! #3\rangle}
\newcommand{\antecedentN}[1]{#1_{\assump}}
\newcommand{\consequentN}[1]{#1_{\goa}}
\tikzset{anchor/.append code=\let\tikz@auto@anchor\relax}
\tikzstyle{termrep} = [draw=black!30, rectangle, rounded corners = 3.7, minimum height = 0.27cm, minimum width = 0.27cm, inner sep=2pt]
\tikzstyle{termIrep} = [termrep, double={black!10}]
\tikzset{termpos/.style n args={3}{termrep, label = {[label distance=#3, anchor=center]#2:{\scriptsize#1\vphantom{(Mg)}}}}}
\tikzset{termIpos/.style n args={3}{termIrep, label = {[label distance=#3, anchor=center]#2:{\scriptsize#1\vphantom{(Mg)}}}}}
\tikzstyle{term} = [termrep, label = {[label distance=-5pt]-35:{\scriptsize#1\vphantom{(Mg)}}}]
\tikzstyle{termI} = [termrep, double={black!10}, label = {[label distance=-5pt]-35:{\scriptsize#1\vphantom{(Mg)}}}]
\tikzstyle{typerep} = [termrep, fill=black!10]
\tikzstyle{typeIrep} = [typerep, double={black!0}]
\tikzset{typepos/.style n args={3}{typerep, label = {[label distance=#3, anchor=center]#2:{\scriptsize#1\vphantom{(Mg)}}}}}
\tikzset{typeIpos/.style n args={3}{typeIrep, label = {[label distance=#3, anchor=center]#2:{\scriptsize#1\vphantom{(Mg)}}}}}
\tikzstyle{type} = [typerep, label = {[label distance=-5pt]-35:{\scriptsize#1\vphantom{(Mg)}}}]
\tikzstyle{typeI} = [typeIrep, label = {[label distance=-5pt]-35:{\scriptsize#1\vphantom{(Mg)}}}]
\tikzstyle{constructorrep} = [draw, rectangle, rounded corners = 4.2, minimum height = 0.32cm, minimum width = 0.32cm, inner sep=2.6pt]
\tikzstyle{constructorGrep} = [constructorrep]
\tikzstyle{constructorErep} = [constructorrep]
\tikzstyle{constructorIrep} = [constructorrep]
\tikzset{constructorpos/.style n args={3}{constructorrep, label = {[label distance=#3, anchor=center]#2:{\scriptsize#1\vphantom{(Mg)}}}}}
\tikzset{constructorGpos/.style n args={3}{constructorGrep, label = {[label distance=#3, anchor=center]#2:{\scriptsize#1\vphantom{(Mg)}}}}}
\tikzset{constructorEpos/.style n args={3}{constructorErep, label = {[label distance=#3, anchor=center]#2:{\scriptsize#1\vphantom{(Mg)}}}}}
\tikzset{constructorIpos/.style n args={3}{constructorIrep, label = {[label distance=#3, anchor=center]#2:{\scriptsize#1\vphantom{(Mg)}}}}}
\tikzstyle{constructor} = [constructorrep, label = {[label distance=-5pt]145:{\scriptsize\vphantom{(Mg)}#1}}]
\tikzstyle{constructorG} = [constructorGrep, label = {[label distance=-5pt]145:{\scriptsize\vphantom{(Mg)}#1}}]
\tikzstyle{constructorE} = [constructorErep, label = {[label distance=-5pt]145:{\scriptsize\vphantom{(Mg)}#1}}]
\tikzstyle{constructorI} = [constructorIrep, label = {[label distance=-5pt]145:{\scriptsize\vphantom{(Mg)}#1}}]
    \tikzstyle{termSE} = [termrep, label = {[label distance=-5pt]-35:{\scriptsize#1\vphantom{(Mg)}}}]
    \tikzstyle{termNE} = [termrep, label = {[label distance=-5pt]35:{\scriptsize#1\vphantom{(Mg)}}}]
    \tikzstyle{termSW} = [termrep, label = {[label distance=-5pt]-145:{\scriptsize#1\vphantom{(Mg)}}}]
    \tikzstyle{termNW} = [termrep, label = {[label distance=-5pt]145:{\scriptsize#1\vphantom{(Mg)}}}]
    \tikzstyle{termS} = [termrep, label = {[label distance=0.5pt,outer sep = 0pt,inner sep = 0pt]-90:{\scriptsize#1\vphantom{(Mg)}}}]
    \tikzstyle{termN} = [termrep, label = {[label distance=-3pt]90:{\scriptsize#1\vphantom{(Mg)}}}]
    \tikzstyle{termE} = [termrep, label = {[label distance=1pt,outer sep = 0pt,inner sep = 0pt]0:{\scriptsize#1\vphantom{(Mg)}}}]
    \tikzstyle{termW} = [termrep, label = {[label distance=1pt,outer sep = 0pt,inner sep = 0pt]180:{\scriptsize#1\vphantom{(Mg)}}}]
    \tikzstyle{termISE} = [termIrep, label = {[label distance=-5pt]-35:{\scriptsize#1\vphantom{(Mg)}}}]
    \tikzstyle{termINE} = [termIrep, label = {[label distance=-5pt]35:{\scriptsize#1\vphantom{(Mg)}}}]
    \tikzstyle{termISW} = [termIrep, label = {[label distance=-5pt]-145:{\scriptsize#1\vphantom{(Mg)}}}]
    \tikzstyle{termINW} = [termIrep, label = {[label distance=-5pt]145:{\scriptsize#1\vphantom{(Mg)}}}]
    \tikzstyle{termIS} = [termIrep, label = {[label distance=0.5pt,outer sep = 0pt,inner sep = 0pt]-90:{\scriptsize#1\vphantom{(Mg)}}}]
    \tikzstyle{termIN} = [termIrep, label = {[label distance=-3pt]90:{\scriptsize#1\vphantom{(Mg)}}}]
    \tikzstyle{termIE} = [termIrep, label = {[label distance=1pt,outer sep = 0pt,inner sep = 0pt]0:{\scriptsize#1\vphantom{(Mg)}}}]
    \tikzstyle{termIW} = [termIrep, label = {[label distance=1pt,outer sep = 0pt,inner sep = 0pt]180:{\scriptsize#1\vphantom{(Mg)}}}]
    \tikzstyle{typeSE} = [typerep, label = {[label distance=-5pt]-35:{\scriptsize#1\vphantom{(Mg)}}}]
    \tikzstyle{typeNE} = [typerep, label = {[label distance=-5pt]35:{\scriptsize#1\vphantom{(Mg)}}}]
    \tikzstyle{typeSW} = [typerep, label = {[label distance=-5pt]-145:{\scriptsize#1\vphantom{(Mg)}}}]
    \tikzstyle{typeNW} = [typerep, label = {[label distance=-5pt]145:{\scriptsize#1\vphantom{(Mg)}}}]
    \tikzstyle{typeS} = [typerep, label = {[label distance=0.5pt,outer sep = 0pt,inner sep = 0pt]-90:{\scriptsize#1\vphantom{(Mg)}}}]
    \tikzstyle{typeN} = [typerep, label = {[label distance=-3pt]90:{\scriptsize#1\vphantom{(Mg)}}}]
    \tikzstyle{typeE} = [typerep, label = {[label distance=1pt,outer sep = 0pt,inner sep = 0pt]0:{\scriptsize#1\vphantom{(Mg)}}}]
    \tikzstyle{typeW} = [typerep, label = {[label distance=1pt,outer sep = 0pt,inner sep = 0pt]180:{\scriptsize#1\vphantom{(Mg)}}}]
    \tikzstyle{typeISE} = [typeIrep, label = {[label distance=-5pt]-35:{\scriptsize#1\vphantom{(Mg)}}}]
    \tikzstyle{typeINE} = [typeIrep, label = {[label distance=-5pt]35:{\scriptsize#1\vphantom{(Mg)}}}]
    \tikzstyle{typeISW} = [typeIrep, label = {[label distance=-5pt]-145:{\scriptsize#1\vphantom{(Mg)}}}]
    \tikzstyle{typeINW} = [typeIrep, label = {[label distance=-5pt]145:{\scriptsize#1\vphantom{(Mg)}}}]
    \tikzstyle{typeIS} = [typeIrep, label = {[label distance=0.5pt,outer sep = 0pt,inner sep = 0pt]-90:{\scriptsize#1\vphantom{(Mg)}}}]
    \tikzstyle{typeIN} = [typeIrep, label = {[label distance=-3pt]90:{\scriptsize#1\vphantom{(Mg)}}}]
    \tikzstyle{typeIE} = [typeIrep, label = {[label distance=1pt,outer sep = 0pt,inner sep = 0pt]0:{\scriptsize#1\vphantom{(Mg)}}}]
    \tikzstyle{typeIW} = [typeIrep, label = {[label distance=1pt,outer sep = 0pt,inner sep = 0pt]180:{\scriptsize#1\vphantom{(Mg)}}}]
    \tikzstyle{constructorNW} = [constructorrep, label = {[label distance=-5pt]145:{\scriptsize\vphantom{(Mg)}#1}}]
    \tikzstyle{constructorNE} = [constructorrep, label = {[label distance=-5pt]35:{\scriptsize\vphantom{(Mg)}#1}}]
    \tikzstyle{constructorSE} = [constructorrep, label = {[label distance=-5pt]-35:{\scriptsize\vphantom{(Mg)}#1}}]
    \tikzstyle{constructorSW} = [constructorrep, label = {[label distance=-5pt]-145:{\scriptsize\vphantom{(Mg)}#1}}]
    \tikzstyle{constructorN} = [constructorrep, label = {[label distance=-3.5pt]90:{\scriptsize\vphantom{(Mg)}#1}}]
    \tikzstyle{constructorGE} = [constructorrep, label = {[label distance=-1pt]0:{\scriptsize\vphantom{(Mg)}#1}}]
    \tikzstyle{constructorS} = [constructorrep, label = {[label distance=1pt,outer sep = 0pt,inner sep = 0pt]-90:{\scriptsize\vphantom{(Mg)}#1}}]
    \tikzstyle{constructorW} = [constructorrep, label = {[label distance=-1pt]180:{\scriptsize\vphantom{(Mg)}#1}}]
    \tikzstyle{constructorE} = [constructorGrep, label = {[label distance=-1pt]0:{\scriptsize\vphantom{(Mg)}#1}}]
    \tikzstyle{constructorGNW} = [constructorGrep, label = {[label distance=-5pt]145:{\scriptsize\vphantom{(Mg)}#1}}]
    \tikzstyle{constructorGNE} = [constructorGrep, label = {[label distance=-5pt]35:{\scriptsize\vphantom{(Mg)}#1}}]
    \tikzstyle{constructorGSE} = [constructorGrep, label = {[label distance=-5pt]-35:{\scriptsize\vphantom{(Mg)}#1}}]
    \tikzstyle{constructorGSW} = [constructorGrep, label = {[label distance=-5pt]-145:{\scriptsize\vphantom{(Mg)}#1}}]
    \tikzstyle{constructorGN} = [constructorGrep, label = {[label distance=-3pt]90:{\scriptsize\vphantom{(Mg)}#1}}]
    \tikzstyle{constructorGE} = [constructorGrep, label = {[label distance=-1pt]0:{\scriptsize\vphantom{(Mg)}#1}}]
    \tikzstyle{constructorGS} = [constructorGrep, label = {[label distance=1pt,outer sep = 0pt,inner sep = 0pt]-90:{\scriptsize\vphantom{(Mg)}#1}}]
    \tikzstyle{constructorGW} = [constructorGrep, label = {[label distance=-1pt]180:{\scriptsize\vphantom{(Mg)}#1}}]
    \tikzstyle{constructorENW} = [constructorErep, label = {[label distance=-5pt]145:{\scriptsize\vphantom{(Mg)}#1}}]
    \tikzstyle{constructorENE} = [constructorErep, label = {[label distance=-5pt]35:{\scriptsize\vphantom{(Mg)}#1}}]
    \tikzstyle{constructorESE} = [constructorErep, label = {[label distance=-5pt]-35:{\scriptsize\vphantom{(Mg)}#1}}]
    \tikzstyle{constructorESW} = [constructorErep, label = {[label distance=-5pt]-145:{\scriptsize\vphantom{(Mg)}#1}}]
    \tikzstyle{constructorEN} = [constructorErep, label = {[label distance=-3pt]90:{\scriptsize\vphantom{(Mg)}#1}}]
    \tikzstyle{constructorEE} = [constructorErep, label = {[label distance=-1pt]0:{\scriptsize\vphantom{(Mg)}#1}}]
    \tikzstyle{constructorES} = [constructorErep, label = {[label distance=1pt,outer sep = 0pt,inner sep = 0pt]-90:{\scriptsize\vphantom{(Mg)}#1}}]
    \tikzstyle{constructorEW} = [constructorErep, label = {[label distance=-1pt]180:{\scriptsize\vphantom{(Mg)}#1}}]
    \tikzstyle{constructorINW} = [constructorIrep, label = {[label distance=-5pt]145:{\scriptsize\vphantom{(Mg)}#1}}]
    \tikzstyle{constructorINE} = [constructorIrep, label = {[label distance=-5pt]35:{\scriptsize\vphantom{(Mg)}#1}}]
    \tikzstyle{constructorISE} = [constructorIrep, label = {[label distance=-5pt]-35:{\scriptsize\vphantom{(Mg)}#1}}]
    \tikzstyle{constructorISW} = [constructorIrep, label = {[label distance=-5pt]-145:{\scriptsize\vphantom{(Mg)}#1}}]
    \tikzstyle{constructorIN} = [constructorIrep, label = {[label distance=-3pt]90:{\scriptsize\vphantom{(Mg)}#1}}]
    \tikzstyle{constructorIE} = [constructorIrep, label = {[label distance=-1pt]0:{\scriptsize\vphantom{(Mg)}#1}}]
    \tikzstyle{constructorIS} = [constructorIrep, label = {[label distance=1pt,outer sep = 0pt,inner sep = 0pt]-90:{\scriptsize\vphantom{(Mg)}#1}}]
    \tikzstyle{constructorIW} = [constructorIrep, label = {[label distance=-1pt]180:{\scriptsize\vphantom{(Mg)}#1}}]
\tikzstyle{constructorO} = [draw, dashed, trapezium, trapezium angle = 60, rounded corners = 10,  minimum width = 1.4cm, minimum height = 1.1cm, outer sep=-4pt]
\tikzstyle{construction} = [node distance = 0.7cm and 1cm, inner sep = 3pt, >={Latex[width=3.5pt,length=4pt]}]
\tikzstyle{decomposition} = [node distance = 1cm and 1cm, inner sep = 3pt, >={Latex[width=6pt,length=6pt]},thick]
\tikzstyle{dInline} = [baseline=-5pt, every node/.style={scale=0.93, inner sep = 2pt}]
\tikzstyle{vInline} = [minimum height = 1.25em]
\tikzstyle{omitting} = [node distance = 1.5em and 2em]
\tikzstyle{index label} = [fill=white, pos=0.4, inner sep=0.1pt, font=\tiny, circle]
\tikzstyle{decomp arrow label} = [fill=white, pos=0.4, inner sep=0.2pt, font=\footnotesize, circle]
\newlength{\illength}
\newcommand{\arrow}[1][]{\hspace{2pt}%
	\ifthenelse{\isempty{#1}}{%
		\tikz[construction,baseline=-3.1pt]{\draw[->] (0,0) to (0.5cm,0)}%
	}{%
		\settowidth{\illength}{\tiny#1}%
		\tikz[construction,baseline=-3.1pt]{\draw[->] (0pt,0) to node[index label, pos=0, xshift=7pt, inner sep=-0.25pt, anchor=west, scale=1]{$#1$} (1.2\illength+18pt,0pt)}%
	}%
	\hspace{2pt}%
}
\newcommandx{\cnode}[7][1,7]{%
	\ifthenelse{\isempty{#7}}{%
		\node[constructor#1 = {#2}, below = 0.5cm of #5](#4){#3};
	}{%
		\node[constructor#1 = {#2}, #7](#4){#3};
	}
	\draw[->] (#4) edge (#5);
	\foreach \x/\l[count = \i] in {#6} {\draw[->] (\x) edge node[index label]{\ifthenelse{\equal{\l}{\x}}{\i}{\l}} (#4);}
}
\newcommandx{\tnode}[5][1=term,5]{%
	\ifthenelse{\isempty{#5}}{%
		\node[#1 = {#2}](#4){#3};
	}{%
		\node[#1 = {#2}, #5](#4){#3};
	}
}
\newcommand{\incVert}{\mathit{iv}} 
\newcommand{\inA}[1]{\mathit{in}_{\hspace{-0.084em}A}(#1)} 
\newcommand{\outA}[1]{\mathit{out}_{\hspace{-0.084em}A}(#1)} 
\newcommand{\pb}{\mathit{Cr}}
\newcommand{\consl}{\mathit{co}}
\newcommand{\tokens}{\mathit{To}} 
\newcommand{\types}{\mathit{Ty}} 
\newcommand{\type}{\mathit{type}}
\newcommand{\tsystemn}{T} 
\newcommand{\constructors}{\mathit{Co}} 
\newcommand{\sig}{\mathit{sig}} 
\newcommand{\cspecificationn}{C} 
\newcommand{\specialise}{s}
\newcommand{\reify}{f}
\newcommand{\refine}{r}
\newcommand{\loosen}{\ell}
\newcommand{\Acirc}{\begin{tikzpicture}[scale=0.8,anchor = center]\footnotesize
	\draw (1.6,1.4) circle (0.8) node[xshift = -0.7cm, yshift = 0.55cm] {$A$} ;
	\end{tikzpicture}}
\newcommand{\Bcirc}{\begin{tikzpicture}[scale=0.8,anchor = center]\footnotesize
	\draw (1.4,1.5) circle (0.8) node[xshift = -0.7cm, yshift = 0.55cm] {$B$} ;
	\end{tikzpicture}}
\newcommand{\Ccirc}{\begin{tikzpicture}[scale=0.8,anchor = center]\footnotesize
	\draw (3.6,1.5) circle (0.8) node[xshift = 0.7cm, yshift = 0.55cm] {$C$};
	\end{tikzpicture}}
\newcommand{\AsubB}{\begin{tikzpicture}[scale=0.8,anchor = center]\footnotesize
	\draw (1.6,1.4) circle (0.5) node[xshift = -0.42cm, yshift = 0.45cm] {$A$} ;
	\draw (1.4,1.5) circle (1) node[xshift = -0.75cm, yshift = 0.7cm] {$B$} ;
	\end{tikzpicture}}
\newcommand{\BdisjC}{\begin{tikzpicture}[scale=0.8,anchor = center]\footnotesize
	\draw (1.4,1.5) circle (1) node[xshift = -0.75cm, yshift = 0.7cm] {$B$} ;
	\draw (3.6,1.5) circle (0.8) node[xshift = 0.65cm, yshift = 0.6cm] {$C$};
	\end{tikzpicture}}
\newcommand{\AsubBdisjC}{\begin{tikzpicture}[scale=0.8,anchor = center]\footnotesize
	\draw (1.6,1.4) circle (0.5) node[xshift = -0.42cm, yshift = 0.45cm] {$A$} ;
	\draw (1.4,1.5) circle (1) node[xshift = -0.75cm, yshift = 0.7cm] {$B$} ;
	\draw (3.6,1.5) circle (0.8) node[xshift = 0.65cm, yshift = 0.6cm] {$C$};
\end{tikzpicture}}
\newcommand{\AdisjC}{%
	\begin{tikzpicture}[scale=0.8,anchor = center]\footnotesize
		\draw (1.6,1.4) circle (0.5) node[xshift = -0.42cm, yshift = 0.45cm] {$A$} ;
		\draw (3.6,1.5) circle (0.8) node[xshift = 0.65cm, yshift = 0.6cm] {$C$};
\end{tikzpicture}}
\newcommand{\AdisjCsmall}{%
\begin{tikzpicture}[scale=0.5,anchor = center]\footnotesize
\draw (1.6,1.4) circle (0.5) node[xshift = -0.3cm, yshift = 0.32cm] {$A$} ;
\draw (3.6,1.5) circle (0.8) node[xshift = 0.43cm, yshift = 0.4cm] {$C$};
\end{tikzpicture}}
\newcommand{\AdisjCselectA}{%
\begin{tikzpicture}[scale=0.8,anchor = center]\footnotesize
\draw (1.6,1.4) circle (0.5) node[xshift = -0.42cm, yshift = 0.45cm] {$A$} ;
\draw (3.6,1.5) circle (0.8) node[xshift = 0.65cm, yshift = 0.6cm] {$C$};
\fill[black, rounded corners, opacity = 0.2] (0,0) rectangle (5,3);
\end{tikzpicture}}
\newcommand{\AdisjCselectAsmall}{%
\begin{tikzpicture}[scale=0.5,anchor = center]\footnotesize
\fill[black, rounded corners = 2, opacity = 0.2] (0.5,0.57) rectangle (4.9,2.7);
\draw[fill= white] (1.6,1.4) circle (0.5) node[xshift = -0.3cm, yshift = 0.32cm] {$A$} ;
\draw (3.6,1.5) circle (0.8) node[xshift = 0.43cm, yshift = 0.4cm] {$C$};
\end{tikzpicture}}
\newcommand{\AdisjCselectC}{%
\begin{tikzpicture}[scale=0.8,anchor = center]\footnotesize
\draw (1.6,1.4) circle (0.5) node[xshift = -0.42cm, yshift = 0.45cm] {$A$} ;
\draw (3.6,1.5) circle (0.8) node[xshift = 0.65cm, yshift = 0.6cm] {$C$};
\fill[black, opacity = 0.2] (3.6,1.5) circle (0.8);
\end{tikzpicture}}
\newcommand{\AdisjCselectCsmall}{%
\begin{tikzpicture}[scale=0.5,anchor = center]\footnotesize
	\fill[black, rounded corners = 2, opacity = 0.2] (0.5,0.57) rectangle (4.9,2.7);
\draw (1.6,1.4) circle (0.5) node[xshift = -0.3cm, yshift = 0.32cm] {$A$} ;
\draw[fill= white] (3.6,1.5) circle (0.8) node[xshift = 0.43cm, yshift = 0.4cm] {$C$};
\end{tikzpicture}}
\newcommand{\Acircsmall}{\begin{tikzpicture}[scale=0.5,anchor = center]\footnotesize
	\draw (1.6,1.4) circle (0.5) node[xshift = -0.35cm, yshift = 0.25cm] {$A$} ;
	\end{tikzpicture}}
\newcommand{\AcircsmallSelectNothing}{\begin{tikzpicture}[scale=0.5,anchor = center]\footnotesize
	\fill[black, rounded corners = 2, opacity = 0.2] (0.4,0.7) rectangle (2.3,2.4);
	\draw (1.6,1.4) circle (0.5) node[xshift = -0.35cm, yshift = 0.3cm] {$A$} ;
\end{tikzpicture}}
\newcommand{\AcircsmallSelect}{\begin{tikzpicture}[scale=0.5,anchor = center]\footnotesize
\fill[black, rounded corners = 2, opacity = 0.2] (0.4,0.7) rectangle (2.3,2.4);
\draw[fill = white] (1.6,1.4) circle (0.5) node[xshift = -0.35cm, yshift = 0.3cm] {$A$} ;
\end{tikzpicture}}
\newcommand{\Bcircsmall}{\begin{tikzpicture}[scale=0.5,anchor = center]\footnotesize
	\draw (1.4,1.5) circle (1) node[xshift = -0.54cm, yshift = 0.39cm] {$B$} ;
	\end{tikzpicture}}
\newcommand{\Ccircsmall}{\begin{tikzpicture}[scale=0.5,anchor = center]\footnotesize
	\draw (3.6,1.5) circle (0.8) node[xshift = 0.45cm, yshift = 0.35cm] {$C$};
	\end{tikzpicture}}
\newcommand{\AsubBsmall}{\begin{tikzpicture}[scale=0.5,anchor = center]\footnotesize
\draw (1.6,1.4) circle (0.5) node[xshift = -0.36cm, yshift = 0.23cm] {$A$} ;
\draw (1.4,1.5) circle (1) node[xshift = -0.54cm, yshift = 0.39cm] {$B$} ;
\end{tikzpicture}}
\newcommand{\AsubBsmallSelectNothing}{\begin{tikzpicture}[scale=0.5,anchor = center]\footnotesize
		\fill[black, rounded corners = 2, opacity = 0.2] (-0.1,0.3) rectangle (2.6,2.7);
		\draw (1.6,1.4) circle (0.5) node[xshift = -0.36cm, yshift = 0.23cm] {$A$} ;
		\draw (1.4,1.5) circle (1) node[xshift = -0.54cm, yshift = 0.39cm] {$B$} ;
\end{tikzpicture}}
\newcommand{\AsubBsmallSelect}{\begin{tikzpicture}[scale=0.5,anchor = center]\footnotesize
\fill[black, rounded corners = 2, opacity = 0.2] (-0.1,0.3) rectangle (2.6,2.7);
\draw[fill= white] (1.4,1.5) circle (1) node[xshift = -0.54cm, yshift = 0.39cm] {$B$} ;
\draw (1.6,1.4) circle (0.5) node[xshift = -0.36cm, yshift = 0.23cm] {$A$} ;
\end{tikzpicture}}
\newcommand{\BdisjCsmall}{\begin{tikzpicture}[scale=0.5,anchor = center]\footnotesize
\draw (1.4,1.5) circle (1) node[xshift = -0.54cm, yshift = 0.39cm] {$B$} ;
\draw (3.6,1.5) circle (0.8) node[xshift = 0.47cm, yshift = 0.35cm] {$C$};
\end{tikzpicture}}
\newcommand{\AsubBdisjCsmall}{\begin{tikzpicture}[scale=0.5,anchor = center]\footnotesize
\draw (1.6,1.4) circle (0.5) node[xshift = -0.36cm, yshift = 0.23cm] {$A$} ;
\draw (1.4,1.5) circle (1) node[xshift = -0.54cm, yshift = 0.39cm] {$B$} ;
\draw (3.6,1.5) circle (0.8) node[xshift = 0.47cm, yshift = 0.35cm] {$C$};
\end{tikzpicture}}
\newcommand{\circInline}[1]{%
\adjustbox{scale=0.42,valign=c,raise=0.02cm}{%
	\begin{tikzpicture}[scale=0.5,anchor = center]
	\draw[thick] (1.6,1.4) circle (0.8) node[anchor = false, above left, xshift = -0.25cm, yshift = .022cm] {#1} ;
	\end{tikzpicture}}}
\newcommand{\Acircinline}{%
	\adjustbox{scale=0.42,valign=c,raise=0.02cm}{%
		\begin{tikzpicture}[scale=0.5,anchor = center]
		\draw[thick] (1.6,1.4) circle (0.8) node[xshift = -0.7cm, yshift = 0.55cm] {$A$} ;
		\end{tikzpicture}}}
\newcommand{\Bcircinline}{%
	\adjustbox{scale=0.42,valign=c,raise=0.02cm}{%
		\begin{tikzpicture}[scale=0.5,anchor = center]
		\draw[thick] (1.4,1.5) circle (0.8) node[xshift = -0.7cm, yshift = 0.55cm] {$B$} ;
		\end{tikzpicture}}}
\newcommand{\Ccircinline}{%
	\adjustbox{scale=0.42,valign=c,raise=0.02cm}{%
		\begin{tikzpicture}[scale=0.5,anchor = center]
		\draw[thick] (3.6,1.5) circle (0.8) node[xshift = 0.7cm, yshift = 0.55cm] {$C$};
		\end{tikzpicture}}}
\newcommand{\AsubBinline}{%
	\adjustbox{scale=0.42,valign=c,raise=0.02cm}{%
		\begin{tikzpicture}[scale=0.5,anchor = center]
		\draw[thick] (1.7,1.4) circle (0.4) node[xshift = -0.3cm, yshift = 0.25cm] {$A$} ;
		\draw[thick] (1.4,1.5) circle (1) node[xshift = -0.55cm, yshift = 0.37cm] {$B$} ;
		\end{tikzpicture}}}
\newcommand{\subInline}[2]{%
	\adjustbox{scale=0.42,valign=c,raise=0.02cm}{%
		\begin{tikzpicture}[scale=0.5,anchor = center]
		\draw[thick] (1.7,1.4) circle (0.4) node[xshift = -0.3cm, yshift = 0.25cm] {#1} ;
		\draw[thick] (1.4,1.5) circle (1) node[xshift = -0.55cm, yshift = 0.37cm] {#2} ;
		\end{tikzpicture}}}
\newcommand{\disjInline}[2]{%
	\adjustbox{scale=0.42,valign=c,raise=0.02cm}{%
		\begin{tikzpicture}[scale=0.5,anchor = center]
		\draw[thick] (1.4,1.5) circle (1) node[xshift = -0.6cm, yshift = 0.37cm] {#1} ;
		\draw[thick] (3.6,1.5) circle (0.8) node[xshift = 0.5cm, yshift = 0.35cm] {#2};
		\end{tikzpicture}}}
\newcommand{\BdisjCinline}{%
	\adjustbox{scale=0.42,valign=c,raise=0.02cm}{%
		\begin{tikzpicture}[scale=0.5,anchor = center]
		\draw[thick] (1.4,1.5) circle (1) node[xshift = -0.55cm, yshift = 0.37cm] {$B$} ;
		\draw[thick] (3.6,1.5) circle (0.8) node[xshift = 0.45cm, yshift = 0.35cm] {$C$};
		\end{tikzpicture}}}
\newcommand{\AsubBdisjCinline}{%
	\adjustbox{scale=0.42,valign=c,raise=0.02cm}{%
		\begin{tikzpicture}[scale=0.5,anchor = center]
		\draw[thick] (1.7,1.4) circle (0.4) node[xshift = -0.32cm, yshift = 0.23cm] {$A$} ;
		\draw[thick] (1.4,1.5) circle (1) node[xshift = -0.58cm, yshift = 0.35cm] {$B$} ;
		\draw[thick] (3.6,1.5) circle (0.8) node[xshift = 0.48cm, yshift = 0.33cm] {$C$};
		\end{tikzpicture}}}
\newtheorem{theorem}{Theorem}[section]
\newtheorem{lemma}{Lemma}[section]
\theoremstyle{definition}
\newtheorem{definition}{Definition}[section]
\newtheorem{example}{Example}[section]
\newcommand{\myscale}{0.8}
\begin{document}

\title{Structure Transfer: an Inference-Based Calculus for the Transformation of Representations}

\author{\name Daniel Raggi \email daniel.raggi@cl.cam.ac.uk \\
       \name Gem Stapleton \email ges55@cam.ac.uk \\
       \name Mateja Jamnik \email mateja.jamnik@cl.cam.ac.uk \\
       \addr University of Cambridge, Cambridge, UK \\
       \AND
       \name Aaron Stockdill \email a.a.stockdill@sussex.ac.uk \\
       \name Grecia Garcia Garcia \email g.garcia-garcia@sussex.ac.uk \\
       \name Peter C-H. Cheng\email p.c.h.cheng@sussex.ac.uk \\
       \addr University of Sussex, Brighton, UK}


\maketitle

\begin{abstract}
Representation choice is of fundamental importance to our ability to communicate and reason effectively. A major unsolved problem, addressed in this paper, is how to devise \textit{representational-system (RS) agnostic} techniques that drive representation transformation and choice. We present a novel calculus, called \textit{structure transfer}, that enables representation transformation across diverse RSs. Specifically, given a \textit{source} representation drawn from a source RS, the rules of structure transfer allow us to generate a \textit{target} representation for a target RS.
The generality of structure transfer comes in part from its ability to ensure that the source representation and the generated target representation satisfy \textit{any} specified relation (such as semantic equivalence). This is done by exploiting \textit{schemas}, which encode knowledge about RSs. Specifically, schemas can express \textit{preservation of information} across relations between any pair of RSs, and this knowledge is used by structure transfer to derive a structure for the target representation which ensures that the desired relation holds. We formalise this using Representational Systems Theory~\cite{raggi2022rst}, building on the key concept of a \textit{construction space}. The abstract nature of construction spaces grants them the generality to model RSs of diverse kinds, including formal languages, geometric figures and diagrams, as well as informal notations. Consequently, structure transfer is a system-agnostic calculus that can be used to identify alternative representations in a wide range of practical settings.
\end{abstract}

\section{Introduction}\label{sec:introduction}
Symbols mediate how we store, use, and transmit knowledge. We invent, carve, utter and manipulate symbols. We let our symbols make predictions and solve problems for us, and we create machines which we command through symbols we made just for them. Symbols are organised into \textit{systems}, comprising a set of symbols along with their natural and prescribed relations to each other~\cite{Palmer1978fundamental,barwise2019visual,shimojima1999graphic}. Symbolic systems change, and new notations are often conceived. New programming languages are created, each time with a claim of supremacy on \textit{some} aspect. Educators and communicators look for new and more effective ways of presenting information. Given their utility and complexity, we place enormous value on the development of symbolic systems, and in the knowledge of how to use them effectively and what to use them for. But with respect to both logic and cognition, there is no one system of symbols that works best for \textit{every} task, and often a change of representation can determine whether we understand our task, and ultimately whether we can perform it effectively~\cite{polya:htsi,newell1972human,Cheng2001}. Thus, given any complex task for which the use of symbols is required, we face a large variety of symbolic systems -- each with its own advantages -- and it is up to us to decide which one to use, how to represent our task in it, and how to transfer potentially useful knowledge across systems. To be able to do all of this effectively we need to understand the connections between systems.

The major contribution of this paper is a calculus for transforming any given representation from a \textit{source} system into another representation in a \textit{target} system. This is done using only the structure of the given representation and knowledge about invariants -- that is, homomorphisms -- between the source and target systems. We formalise our approach using Representational Systems Theory (RST)~\cite{raggi2022rst} and, within this context, a representation is taken to be a syntactic entity whose structure can be encoded by RST. The specific research questions we address are:
\begin{enumerate}[topsep = 4pt, itemsep = 0pt, leftmargin = 2.75em]
	\item[RQ1:] Is there a general calculus for transforming a given representation into a new representation in such a way that any specified relation is guaranteed to hold between them? We show this is possible using the theory of \textit{schemas} developed in this paper.
	\item[RQ2:] Is it possible to do this rigorously and under limited or uncertain knowledge? Our approach explicitly includes derivation of new facts, extending the knowledge base. Moreover, even when insufficient facts are available, partial transformations are supported. The inference framework allows fuzzy or multi-valued logics for reasoning under uncertainty -- and still produce desirable transformations.
\end{enumerate}
As a result of the calculus presented in this paper we can transform between the representations of any two representational systems across any relation, given some knowledge of the invariants across systems, that is, knowledge about \textit{information being preserved through relations}. This has wide ranging applications, demonstrating the potential significance and impact of our new theory for scientific development, communication, and education. Such applications include automatically generating diagrams and figures from formal languages, or improving human-computer interaction in scientific software, theorem provers or computer algebra systems, and enabling creative problem-solving in machines -- which often requires the solver to consider different representations of the problem. We will also address the connections and applications of our calculus to some problems of related areas, from formal methods to analogy
, demonstrating the generality of our approach. While our methods are general, for concreteness we will focus primarily on one illustrative application: that of \textit{depiction and observation}, where a given statement is depicted graphically, and then consequences of the original statement are observed from the depiction. This is of particular interest to us because it demonstrates the notion of the \textit{observational advantages}~\cite{stapleton:wmaeroiafaoo} of some representations over others. 

\subsection{Example: the depict-and-observe process}\label{depict-and-observe}\vspace*{-1pt}
Consider the statement: \textit{$A$ is contained in $B$, and $B$ is disjoint from $C$}, otherwise written formally as $A \subseteq B \,\wedge\, B \cap C = \emptyset$.
There are infinitely-many conclusions that we could derive\pagebreak[3] from that statement. Of course, the reader may be guessing where we are going: $A \cap C = \emptyset$. This is a simple and obvious semantic consequence which is not syntactically explicit.

We could try to work out how the premise, $A \subseteq B \,\wedge\, B \cap C = \emptyset$, can be manipulated using inference rules, to conclude $A \cap C = \emptyset$. However, it is probably fair to assume that
\begin{wrapfigure}[5]{r}{0.26\linewidth}\vspace*{-2ex}
		\hfill%
		{\begin{tikzpicture}[]
			\draw (1.6,1.35) circle (0.4) node[xshift = -0.5cm, yshift = 0.37cm] {$A$} ;
			\draw (1.4,1.5) circle (0.8) node[xshift = -0.75cm, yshift = 0.65cm] {$B$} ;
			\draw (3.1,1.5) circle (0.67) node[xshift = 0.69cm, yshift = 0.58cm] {$C$};
		\end{tikzpicture}}
\end{wrapfigure}
most readers will \textit{interpret} the statement and imagine a depiction of the premises, similar to the figure shown here, from which the conclusion, $A \cap C = \emptyset$, can be observed without any additional syntactic manipulations.

Those with a formal mathematical education will easily produce some form of sentential argument wherein we introduce and eliminate symbols such as $\in$ and $\wedge$ (or the words `in' and `and'.). The sentential style of formal reasoning has been investigated exhaustively to the point where many methods and implementations exist for producing such arguments. However, to investigate and produce rigorous methods that formalise the process of depicting a picture and observing a conclusion from it, we need different tools. Specifically, we need to be able to encode diagrams over the same foundations in which we encode typical formal languages, and we need to be able to specify rigorously the relations between diagrammatic systems and sentential/linguistic systems. This, we contend, is provided by RST.


In this paper we will use the depict-and-observe process as a working example. This will demonstrate how RST can be used to understand, formally, the whole process from depicting the premises to observing the conclusion, using transformations between encodings of Set Algebra and Euler diagrams. Furthermore, we will illustrate the generality of our methods with some examples of a different nature.

\subsection{How are different systems connected?}\vspace*{-2pt}
The connections between symbolic systems are sometimes built-in by design (e.g., a high-level programming language and an assembly language), but sometimes they need to be discovered (e.g., the algebraic vs geometric representations of complex numbers). Some connections may underlie cognitive processes without being explicitly encoded into our external symbolic systems, though these are often manifested in our complex and consistent uses of metaphor and analogy~\cite{lakoff2008metaphors}, even in mathematics~\cite{lakoff2000mathematics}. Understanding these relations is fruitful, as it enables the transference of knowledge and expertise from one system to another. For example, in linear algebra, the discovery that linear maps can be represented as matrices (and, more importantly, that operations and properties of linear maps translate to simple operations and properties of matrices), allows us to apply computational methods to solve linear algebra problems. But many of the most commonly used and fruitful transformations exist outside of the purview of formal knowledge. For example, the use of diagrams is ubiquitous in reasoning and learning~\cite{larkin:wadiswttw,barwise:viavr,ainsworth1999functions}, and the benefits of diagrams are well-known~\cite{Cheng2001,cheng2002electrifying,cheng2011probably} and have been studied in terms of \textit{free rides} and \textit{observational advantages}~\cite{stapleton:wmaeroiafaoo,blake:efraoaisv}, yet there is no general theory that satisfactorily reveals how to systematically and effectively transfer information between \textit{any} pair of systems regardless of whether they are sentential or diagrammatic -- and whether they are considered formal or \textit{informal}.

The concepts and methods presented in this paper, built on the foundations of Representational Systems Theory~\cite{raggi2022rst}, exploit the capacity of RST to capture structure from diverse representational systems and extend it with the means for transforming representations. As we will see, the methods presented here are very general, not only because of the expressive generality inherited from RST, but because the transformation calculus is based on a simple principle: the \textit{transfer schema}, a formalism for capturing \textit{invariants across systems through arbitrary relations}, where transfer schema \textit{applications} allow us to derive the structure of the transformed representation. Transfer schemas can capture analogies between concepts in different representational systems, but more specifically they express concisely that \textit{some} information is preserved across relations.

In principle, any relation between the objects of two systems can be used for \textit{encoding} some objects of one system into the other. But the encoding is only fruitful if it preserves information \textit{desirably}. What this means is a question of pragmatics.
If we are interested in the composition and application of linear maps and we have computational power at our disposal, then the canonical transformation that relates each linear map to a matrix is \textit{desirable} because composition and application of linear maps is preserved -- as matrix multiplication -- across the transformation from linear maps into a field of matrices. Moreover, this structure is preserved \textit{desirably} because matrix multiplication is a simple operation given the availability of computational power.


Invariants between algebraic structures are generally captured by the notion of \textit{homomorphism}. These capture preservation of algebraic information (e.g., group operations,
\begin{wrapfigure}[5]{r}{0.39\textwidth}
	\vspace*{-2ex}
	\hfill\adjustbox{scale = 0.9}{%
		\begin{tikzpicture}[node distance=2.5cm, auto]\small
			\node (AA) {$A \times \cdots \times A$};
			\node (A) [above = 0.8cm of AA] {$A$};
			\node (BB) [right = 2.1cm of AA] {$B \times \cdots \times B$};
			\node (B) [above = 0.8cm of BB] {$B$};
			\draw[->] (AA) edge node [above,yshift = -0.06cm] {$(f,\ldots,f)$} (BB);
			\draw[->] (A) to node[above, yshift = -0.05cm] {$f$} (B);
			\draw[->] (AA) to node {$\mu_A$} (A);
			\draw[->] (BB) to node [swap] {$\mu_B$} (B);
	\end{tikzpicture}}
\end{wrapfigure}
vector space structure, and so forth.). A homomorphism from a set $A$ to a set $B$ is a function, $f \colon A \to B$, for which an operation $\mu_A$ of $A$ is \textit{preserved} as some operation $\mu_B$ of $B$. This is captured by the equation $f(\mu_A(a_1,\ldots,a_n)) = \mu_B(f(a_1),\ldots,f(a_n))$, and illustrated here (right) by a `commutative' diagram. 
Of course, the map $f$ and the operations $\mu_A$ and $\mu_B$ need not be functions and the diagram will still capture the core principle that
\begin{wrapfigure}[5]{r}{0.45\textwidth}
	\vspace*{-2ex}
	\hfill\adjustbox{scale = 0.9}{%


The calculus that we present in this paper, called \textit{structure transfer}, uses transfer schemas 
to produce a structure in the target system whose existence guarantees that a desired relation holds between the source and target representations. Our approach has the following properties, which address research questions RQ1 and RQ2 stated \begin{samepage}above:
\begin{enumerate}[itemsep=0pt,topsep=6pt, leftmargin = 2.5em]
	\item \textbf{Representational generality}: structure transfer is applicable to any system that can be captured by RST, thus inheriting its scope in terms of the variety of representational systems that we can apply it to. [RQ1]
	\item \textbf{Relational generality}: any set of relations between the representations across a pair of systems can be used to perform a transformation, as transfer schemas can capture invariants across systems through arbitrary relations. [RQ1]
	\item \textbf{Validity}: given a trusted knowledge-base, a transformation ensures that the desired relation holds between the source and target representations. [RQ1]
	\item \textbf{Partiality and extendability}: schemas are used as units of knowledge. New facts are inferred in the process of structure transfer, which means that the map (knowledge base) only needs to be partially specified a-priori. Existing knowledge will be used in an attempt to produce the transformation, and even when unsuccessful in producing a verified transformation, it may produce a partial, or unverified-yet-correct, transformation. [RQ2]
	\item \textbf{Logic-agnosticism}: we assume very little about the logic in which the schemas are encoded\footnote{In fact, any logic that can be encoded within the RST framework may be used. This means that the logic used to derive a transformation can itself be diagrammatic, or atypical in other ways.}, which opens the door for the use of fuzzy or multi-valued logics to encode uncertain knowledge and still produce transformations under uncertainty. [RQ2]
\end{enumerate}
\end{samepage}
%

Others have built systems for exploiting heterogeneity in ontologies and mathematical knowledge management~\cite{kutz2010carnap,rabe2013scalable,mossakowski2007heterogeneous}, and some have implemented tools for knowledge transference within theorem proving environments~\cite{huffman2013lifting,raggi2016automating}. Some heterogeneous systems have been built with a hard-coded transformation between two kinds of representations~\cite{barwise:hlrwd}. Our use of RST makes our approach more general, as it is not committed to any \textit{kind} of representation. A typical formal definition of \textit{language} is as a set of \textit{words}, defined as strings over a set of symbols, and a typical approach to defining the \textit{grammar} of a language involves rules for composing and decomposing words, yielding some structure (usually a rooted tree). RST weakens these assumptions by abstracting the explicit nature of the \textit{words}, which instead we call \textit{tokens}, and focusing on structure and classification -- that is, how tokens relate to one-another and what their \textit{types} are. This treatment of representational systems opens the door for modelling representations typically consider informal, such as geometric figures, plots, and other kinds of diagrams. But more importantly, by taking these representations seriously and within the same meta-theory as more typical systems, it allows us to reason rigorously about the relations between representations within a system {and} the relations between representations across systems.

In this paper we will focus on \textit{transformation} and the necessary theory for realising them, which includes concepts for effecting \textit{inference} in RST. This is part of a wider project, \textit{rep2rep}, which has a main goal to understand and produce tools that enable \textit{effective choice of representations}. \citeA{raggi2022rst} already introduced the foundations of Representational Systems Theory. Some of our previous work dealt with the problem of representation selection more informally~\cite{raggi2020re} and some dealt with the cognitive aspect~\cite{cheng2021cognitive} of representation processing.

\paragraph{Overview of this paper} First we will present preliminary knowledge and notations (Section~\ref{sec:preliminaries}) and an overview of RST (Section~\ref{sec:theory}).
The main contributions of this paper come in the sections that follow. In Section~\ref{sec:sequentsAndSchemas} we introduce the concept of a \textit{schema}, for expressing invariants within and across representational systems, leading to the crucial concept of a \textit{transfer schema application}. In Section~\ref{sec:StructureTransfer} we present \textit{structure transfer} with an algorithmic approach. In Section~\ref{sec:properties} we compare structure transfer to well-known methods, such as term rewriting, and we present some applications, including the use and discovery of analogies. We conclude in Section~\ref{sec:conclusion}. 

\section{Preliminary Concepts and Conventions}\label{sec:preliminaries}
Here we present a brief overview of standard concepts needed throughout the paper.

\paragraph{Functions} 
Given a function, $f\colon X \to Y$, and a set $Z \subseteq X$, the \textbf{restriction} of $f$ to $Z$ is denoted by $f|_Z\colon Z \to Y$. The \textbf{image} of $f$ with its domain restricted to $Z$ is denoted $f[Z]$. 

\paragraph{Graph Notation}
We extensively use directed, bipartite graphs, where the vertices and arrows (i.e., directed edges) are labelled. A \textbf{directed labelled bipartite graph}, or simply \textbf{graph}, is a tuple, $g=(V_1,V_2,A,\incVert,l_A,l_1,l_2)$, where: $V_1$ and $V_2$ are disjoint sets of \textbf{vertices}; $A$ is a set of \textbf{arrows} that is disjoint from $V_1\cup V_2$;  $\incVert\colon  A \to (V_1 \times V_2) \cup (V_2 \times V_1)$ is an \textbf{incident vertices function}; and $l_A\colon A\to L_A$, $l_1\colon V_1\to L_1$, and $l_2\colon V_2\to L_2$ are \textbf{labelling functions}, where $L_A$, $L_1$ and $L_2$ are sets of labels. 
We write $V_1(g)$, $V_2(g)$, and $A(g)$ for the sets of vertices and arrows in $g$ and set $V=V(g)=V_1\cup V_2$. 
The \textbf{neighbourhood} of a vertex, $v$, in $g$ is the smallest subgraph of $g$ containing $v$ and all the arrows incident with $v$. 
We also use the standard concept of \textbf{subgraph}.


\paragraph{Graph Operations} 
To perform operations on graphs, we need the notion of \textit{compatibility}: graphs $g = (V_1,V_2,A,\incVert,l_A,l_1,l_2)$ and $g' = (V_1',V_2',A',\incVert',l_A',l_1',l_2')$ are \textbf{compatible} provided their argument-wise union, $(V_1 \cup V_1', V_2 \cup V_2', A \cup A', \incVert \cup \incVert', l_A \cup l_A', l_1 \cup l_1', l_2 \cup l_2')$, is a graph. Whenever $g$ and $g'$ are compatible, their \textbf{union}, $g\cup g'=(V_1\cup V_1',...,l_2 \cup l_2')$ and \textbf{intersection}, $g\cap g'=(V_1\cap V_1',...,l_2\cap l_2')$ are graphs.

\paragraph{Graph Morphisms} To define various morphisms from $g = (V_1,V_2,A,\incVert,l_A,l_1,l_2)$ to $g' = (V_1',V_2',A',\incVert',l_A',l_1',l_2')$,  we exploit functions of the form  $f \colon V_1 \cup V_2 \cup A \to V_1' \cup V_2' \cup A'$, where $f[V_1]\subseteq V_1'$, $f[V_2]\subseteq V_2'$ and $f[A]\subseteq A'$. 
We write $f\colon g \to g'$ to mean such a function. A \textbf{homomorphism} is a function, $f \colon g \to g'$, such that for any $a\in A$ if $\incVert(a)= (v_i,v_j)$, then $\incVert'(f(a))= (f(v_i),f(v_j))$. The image, $f[g]$, of $f$ is taken to be the subgraph of $g'$ mapped to by $g$ under $f$. A homomorphism, $f\colon  g \to g'$, is \textbf{label-preserving} provided for any $x \in V_1\cup  V_2\cup  A$ it is the case that $l(x) = l'(f(x))$. We say $f$ is \textbf{label-preserving up to} a set $W \subseteq V_1 \cup V_2 \cup A$ provided it preserves labels for $(V_1 \cup V_2 \cup A) \setminus W$, but not necessarily for $W$. A bijective homomorphism is an \textbf{isomorphism}. Graphs $g$ and $g'$ are \textbf{isomorphic} whenever there exists an isomorphism $f\colon g \to g'$. A homomorphism, $f\colon g \to g'$ is a \textbf{monomorphism} provided $f$ is an isomorphism from $g$ to its image $f[g]$. Lastly, we need homomorphisms that map from, say, $g_1 \cup g_2$ to $h_1 \cup h_2$ that guarantees $g_1$ maps to $h_1$ and $g_2$ to $h_2$: more generally, given tuples, $(g_1,\ldots,g_n)$ and $(h_1,\ldots,h_n)$, we denote by $f\colon (g_1,\ldots,g_n) \to (h_1,\ldots,h_n)$, any homomorphism $f\colon g_1 \cup \ldots \cup g_n \to h_1 \cup \ldots \cup h_n$ such that the image of every $g_i$ under $f$ is contained in $h_i$, that is, $f[g_i] \subseteq h_i$.

\section{Representational System Theory}\label{sec:theory}
Representational System Theory was developed to understand the structure of representations and to facilitate transformations between them~\cite{raggi2022rst}. Most definitions in this section are from \citeA{raggi2022rst}; those without citations are new. In RST, a \textit{representational system} comprises three  spaces: grammatical, entailment and identification spaces. A \textit{grammatical space} encodes the relationship between \textit{tokens}, capturing which tokens build other tokens; for example, $a+1$ is built from $a$, $+$ and $1$. An \textit{entailment space} encodes inferential relations between  tokens, given some notion of entailment; for example, $a+0$ can be rewritten to $a$ given standard arithmetic entailment. An \textit{identification space} encodes properties of the tokens; for example, $3$ uses fewer pixels than $1+2$. To formalise these spaces, RST uses one fundamental abstraction: a \textit{construction space}.

\subsection{From Construction Spaces to Multi-Space Systems}

We present an overview of construction spaces and associated concepts that are necessary for the original contributions of Sections~\ref{sec:sequentsAndSchemas} and~\ref{sec:StructureTransfer}. Building on those concepts, we introduce the novel definition of 
a \textit{multi-space system}. Firstly, a construction space encodes information about how \textit{tokens} (i.e., representations) relate to one another. To define construction spaces we need the foundational concepts of a \textit{type system} and a \textit{constructor specification}.
\begin{definition}
	A \textbf{type system} is a pair, $\tsystemn = (\types,\leq)$, where $\types$ is a set whose elements are called \textbf{types}, and $\leq$ is a partial order over $\types$.\hfill \cite{raggi2022rst}
\end{definition}

Types encode classes of tokens which are, at a certain level of abstraction, indistinguishable. The partial order, $\leq$, captures hierarchies of abstraction relative to what is considered \textit{relevant} within a representational system.  For example, we can define the type \texttt{two} as an abstraction of instances of the numeral $2$, including two occurrences of it in $2+2$. Whilst types $\texttt{two}$ and $\texttt{three}$ distinguish $2$ and $3$, a \textit{supertype}, $\texttt{numeral}$, may capture both, along with the rest of the numerals. To encode this, we may define $\texttt{two} \leq \texttt{numeral}$ and $\texttt{three} \leq \texttt{numeral}$.

\begin{definition}
A \textbf{constructor specification} over a type system, $\tsystemn = (\types,\leq)$, is a pair, $\cspecificationn = (\constructors,\sig)$, where
	\begin{enumerate}[itemsep=1pt,topsep=6pt]
		\item $\constructors$ is a set, disjoint from $\types$, whose elements are called \textbf{constructors},
		\item $\sig$ 
		is a function over $\constructors$ such that, for any $c\in \constructors$, the \textbf{signature} of $c$, $\sig(c) = ([\tau_1,\ldots,\tau_n],\tau)$, where $[\tau_1,\ldots,\tau_n]$ is a sequence of types, $\tau$ is a type, and $n > 0$.
	\end{enumerate}
	Given $\sig(c) = ([\tau_1,\ldots,\tau_n],\tau)$, we call $[\tau_1,\ldots,\tau_n]$ the \textbf{input types} of $c$, and $\tau$ the \textbf{output type} of $c$.\hfill \cite{raggi2022rst}
\end{definition}

Constructors encode basic relations between representations, such as the fact that $a$, $+$ and $1$ produce $a+1$ when configured in a certain way. But a constructor may also encode the fact that $P$ and $\neg P \vee Q$ can be used to produce $Q$ through an inference rule.

\begin{example}[\textsc{Set Algebra}]
	The type system, $(\types,\leq)$, of \textsc{Set Algebra} is such that $\types$ includes types $\texttt{var}$, $\texttt{const}$, $\texttt{setExp}$, $\texttt{unaryOp}$, $\texttt{binOp}$, $\texttt{binRel}$, $\texttt{formula}$. This means that the tokens of \textsc{Set Algebra} will include set expressions (including variables and some constants), operator and relation symbols, and formulas. This is relatively typical in the specification (often called the signature) of a formal language. However, in \textsc{Set Algebra} we will also include many types in $\types$, which would typically be considered as terms. For example, for each $A$, $A \cup B$ and $\emptyset \subseteq A$, we will introduce a type: $\texttt{A}$, $\texttt{A\_union\_B}$ or $\texttt{empty\_subset\_A}$. This is a fairly atypical use of types in type theory, but it respects the token/type dichotomy of semiotics \cite{wetzel:tat} and Information Flow theory~\cite{barwise1997information}, while still allowing us to use types for their more typical type-theoretic purposes using their subtype order, $\leq$. For example, $\texttt{A} \leq \texttt{var} \leq \texttt{setExp}$ means that any instance of type $\texttt{A}$ will also fall under types $\texttt{var}$ and $\texttt{setExp}$. Similarly, we have $\texttt{union} \leq \texttt{binOp}$, $\texttt{empty\_subset\_A} \leq \texttt{formula}$, and so forth.
	
The constructors of \textsc{Set Algebra} include \texttt{infixRel}, which infixes a binary relation to produce a formula, with signature $\sig(\texttt{infixRel})=([\texttt{setExp},\texttt{binRel},\texttt{setExp}],\texttt{formula}).$ Given the subtype order, $\texttt{infixRel}$ will take as input things of more \textit{specialised} types, such as $[\texttt{A},\texttt{subset},\texttt{empty\_union\_B}]$, and may yield output $\texttt{A\_subset\_empty\_union\_B}$, where $\texttt{A\_subset\_empty\_union\_B} \leq \texttt{formula}.$
\end{example}

Of course, we have not yet formalised what it means for a constructor to `take inputs' or `yield outputs'. In order to capture this relation between constructors and their inputs and outputs, we use the fundamental concept of a \textit{structure graph}\footnote{\citeA{raggi2022rst} defines structure graphs from the concept of a \textit{configuration}, which is a structure graph with exactly one vertex labelled by a constructor; we do not need this level of conceptual refinement. The definitions are trivially equivalent.}, whose vertices are labelled with constructors and types. The conditions in the definition of structure graph ensure that for every vertex labelled by a constructor, $c$, its input/output arrows are connected to tokens whose types are subtypes of those prescribed by the signature of $c$. Structure graphs generalise the notion of a syntax tree.

\begin{definition} Let $C = (\constructors,\sig)$ be a constructor specification over a type system $(\types,\leq)$. A \textbf{structure graph} for $C$ is a graph, $(\tokens,\pb,A,\incVert,\mathit{index},\type,\consl)$, where
	\begin{enumerate}[itemsep=1pt,topsep=6pt]
		\item $\mathit{iv} \colon A \to (\tokens \times \mathit{Cr}) \cup (\mathit{Cr} \times \tokens)$ is such that, for every $u \in \pb$:
		\begin{enumerate}[itemsep=0pt,topsep=0pt]
            \item $u$ has exactly one outgoing arrow\footnote{The sets of incoming and outgoing arrows of $v$ in $g$ are, respectively, $\inA{v}=\{a \in A \colon \exists v' \!\in\! V\; \mathit{iv}(a) \!=\! (v',v)\}$ and $\outA{v}=\{a \in A \colon \exists v' \!\in\! V\; \mathit{iv}(a) \!=\! (v,v')\}$.}: $|\outA{u}| = 1$,
			\item $u$ has at least one incoming arrow: $|\inA{u}| > 0$,
		\end{enumerate}
        \item $\mathit{index} \colon A \to \mathbb{N}$ is a partial function,
		\item $\type \colon \tokens \to \types$ labels the elements of $\tokens$, called \textbf{tokens}, with types, and
		\item $\consl \colon \mathit{Cr} \to \constructors$ labels the elements of $\mathit{Cr}$, 
		with constructors
	\end{enumerate}
such that for every $a \in A$, and $t\in \tokens$, $u \in \mathit{Cr}$, with $\sig(\consl(u)) = ([\tau_1,\ldots,\tau_n],\tau)$:
		\begin{enumerate}[itemsep=0pt,topsep=4pt]
            \item[\textbullet] $\mathit{index}|_{\inA{u}}$ is a bijection to $\{1,\ldots,n\}$ and $\mathit{index}|_{\outA{u}}$ is undefined,
			\item[\textbullet] if $\mathit{iv}(a) = (t,u)$ then $\type(t) \leq \tau_{\mathit{index}(a)}$,
			\item[\textbullet] if $\mathit{iv}(a) = (u,t)$ then $\type(t) \leq \tau$.\hfill (Adapted from \citeA{raggi2022rst}.)
		\end{enumerate}
\end{definition}

In summary, a structure graph contains tokens which are assigned types, and they connect through arrows to vertices labelled by constructors. Moreover, for any vertex, $u$, labelled by a constructor, $c$, its inputs and outputs respect the signature of $c$ so that any token is assigned a type which is a subtype of the corresponding one in the signature of $c$. 

It is important to note that in a structure graph each token, $t$, its assigned type, $\type(t)$, is taken to be its minimum type. 
In general, we say that $t$ is an instance of all supertypes of $\type(t)$.

\begin{example}[\textsc{Euler Diagrams}]
	This example is based on the design of Euler diagrams described by~\citeA{stapleton2010inductively}. The tokens that represent Euler diagrams are comprised of a set of \textit{simple closed curves} (e.g., circles) superimposed on a canvas. Each of which is assigned a distinct label. 
	For this example we require that every curve gives rise to a pair of complementary non-empty regions: its interior and its exterior.
	This means that each diagram is partitioned into a set of minimal regions, called \textit{zones}, and every zone can be characterised by a set of curves. Specifically, if $L$ is the set of labels of a diagram, then a zone $z$ is characterised by a set $Z \subseteq L$, such that $z$ is interior to all the curves labelled by $Z$ and exterior to the rest. For simplicity, zones will be written as words, for example, we write $\mathtt{AB}$ or $\mathtt{BA}$ to denote a zone which is interior to curves labelled by $\mathtt{A}$ and $\mathtt{B}$ and exterior to any other curves in the diagram. 
	Thus, given a set of labels, $L$, a diagram can be described by a \textit{set of zones} which, as we established, we write as a set of words from alphabet $L$. 
	
	This characterisation of diagrams is the basis for our type system, where we have one type \texttt{label}, and any element in $L$ is a minimal type, subtype of \texttt{label}, and any set of words for $L$ will be a minimal type and a subtype of type \texttt{diagram}. We illustrate that here (note that the tags for zones, in red, are there only for elucidation: they are not part of the diagrams):
	\begin{center}
		\adjustbox{minipage = 6cm}{%
			\centerline{\small \textit{Two tokens of type $\mathtt{\{AB,A,B,\emptyset\}}$:}}\vskip1ex
			\centerline{
}\vspace{0.5ex}}
	\end{center}
	Given our type system, we can introduce constructor \texttt{addCurve}, with signature $([\mathtt{diagram},\linebreak[2]\mathtt{label},\mathtt{region},\mathtt{region}],\mathtt{diagram})$, which takes four inputs and outputs a diagram. Intuitively, \texttt{addCurve} contains instructions for how to add a curve with a new label to a given diagram. The first and second inputs are the diagram and label for the new curve. The third and fourth inputs are the regions that should be, respectively, interior and exterior to the new curve being added. This information is sufficient to determine the type of the output diagram~\cite{stapleton2010inductively}\footnote{The regions that are neither in the interior or exterior are the ones which are cut through by the new curve.}. Below we show two examples of structure graphs that use the constructor \texttt{addCurve}. On the left, we take a diagram \adjustbox{scale = 0.4,raise = -2.5pt}{\AdisjCsmall} and draw a new curve labelled by $B$ with the provision that the region \adjustbox{scale = 0.4,raise = -2.5pt}{\AdisjCselectAsmall} must be interior to $B$, and \adjustbox{scale = 0.4,raise = -2.5pt}{\AdisjCselectCsmall} must be exterior to $B$. This results in a construction of diagram \adjustbox{scale = 0.4,raise = -2.5pt}{\AsubBdisjCsmall}. On the right, the same diagram is drawn in a different order, in addition to displaying an extra step.
	\begin{center}
	\adjustbox{valign = t, scale = 0.95}{
}
	\end{center}
	Structure graphs for Euler diagrams may also include other constructors, such as \texttt{merge}, \texttt{mergeSub} and \texttt{mergeDisj} which represent some more complex operations of Euler diagrams: merging two diagrams that may have overlapping labels, merging with the constraint that some region must be contained in another one, or merging with the constraint that two regions must be disjoint.
\end{example}

Below are some examples of structure graphs for \textsc{Set Algebra} (left) and \textsc{Euler Diagrams} (right) that illustrate possible uses of multiple tokens of the same type vs multiple uses of the same token. In the left-most graph there are two distinct tokens of type \texttt{A} as inputs to a constructor vertex. In the second graph there are two tokens of type \texttt{B}. In the third graph one token is used as input for two different constructor vertices. In the fourth graph one token is used twice as input, and also happens to be the output; making a cycle.
\begin{center}
\adjustbox{valign = c, scale = 0.8}{
}
\end{center}

\pagebreak[4]
By definition, structure graphs may be infinite and have arbitrary complexity. In particular, Definition~\ref{defn:constructionSpace}, of a construction space, requires a potentially infinite structure graph called its \textit{realm}, that essentially represents the universe of all representations encompassed by the construction space.

\begin{definition}\label{defn:constructionSpace}
	A \textbf{construction space} is a triple, $\mathcal{C} = (\tsystemn,\cspecificationn,G)$, where $\tsystemn$ is a type system, $\cspecificationn$ is a constructor specification over $\tsystemn$, and $G$  is a structure graph for $\cspecificationn$ \cite{raggi2022rst}.
	We call $G$ the \textbf{realm} of $\mathcal{C}$. A structure graph, $g$, is called a \textbf{construction graph for $\mathcal{C}$} if it is a subgraph of $G$. In such case we say that $g$ \textbf{belongs} to $\mathcal{C}$.
\end{definition}

The realm of $\mathcal{C}$ determines whether any specific structure graph (such as the ones displayed above) actually \textit{belong} to $\mathcal{C}$. Some graphs which satisfy the rules determined by the constructor specification (e.g., the input and output types are correct) may not live 
in the realm. For example, given reasonable type assignments, the following graphs \textit{are} structure graphs for the \textit{constructor specifications} of \textsc{Set Algebra} and \textsc{Euler Diagrams}, but they \textit{do not} live in their respective realms: one (left) does not distinguish between two distinct token occurrences, and the other (right) distinguishes a pair of tokens which actually only appears once in the token which is being constructed\footnote{Such distinctions are modelling choices; the construction $A \subseteq A$ may belong to a construction space for \textsc{Set Algebra} if we do not care to distinguish between different instances of the same type. If there is only one token of type $A$, can we model the property \textit{is written to the left of}? No!}.
\begin{center}
		\adjustbox{valign = c, scale = 1}{%
}
\end{center}
 
It is easy to see that we can specify construction spaces for \textsc{Set Algebra} and \textsc{Euler Diagrams}, and we have demonstrated how the type systems and constructor specifications can be defined. The examples of construction spaces that we have shown for these systems are what we earlier called \textit{grammatical spaces}. In this paper, we do not need to preoccupy ourselves with the three spaces that form a representational system: the methods and results presented here concern construction spaces, with no regard to whether they are the grammatical, entailment, or identification spaces. 
In order to demonstrate what can be modelled with construction spaces, we give an example below and further examples -- on geometric constructions and proofs --  can be found in Appendix~\ref{secApp:RST}. 

\begin{example}[\textsc{Matrix Algebra}]\label{ex:matrices} The tokens of one construction space are matrix expressions where the constructors \texttt{transpose} and \texttt{multiply} take one and two inputs, respectively, and output the resulting expression\footnote{See here that the token $\textsc{T}$ is not an input to the \texttt{transpose} vertex. In relation to~\citeA{raggi2022rst}, the illustrated construction space would \textit{not} typically be considered as being in the realm of a grammatical space for a `matrix algebra' representational system: any such grammatical space would, intuitively, indicate that the token \adjustbox{scale = 0.9}{$\left[\begin{matrix} 3&\hspace*{-3pt} 1 \end{matrix}\right]^{\textsc{T}}$} was built from \adjustbox{scale = 0.9}{$\left[\begin{matrix} 3&\hspace*{-3pt} 1 \end{matrix}\right]$} and \adjustbox{scale = 0.9,raise=-0.5pt}{${\textsc{T}}$}. Thus, this example illustrates the generality of the theory presented in this paper: the structure graph of any construction space is a graph encoding of any (and all) relations of interest, between the tokens, as captured by the constructors.}; see the structure graph on the left. We can also define another construction space where, instead of `putting tokens together' to form composite matrix expressions, the constructors represent calculations in question. For example, below right is a graph\footnote{This structure graph \textit{could} be considered as part of the realm of a matrix algebra entailment space.}, isomorphic to that on the left, but the `output' tokens are the result of evaluating the `input' expressions.
	\begin{center}
		\adjustbox{scale = 0.9, valign = c}{
}

%

Recall that our major aim is to transform one token into another, such that some \textit{specified relation} holds between them. We need a way to capture \textit{relations} \textit{across} spaces -- not only within a space -- defined using \textit{properties} of tokens.  A \textit{multi-space system}
augments a tuple of construction spaces, $\mathcal{C}_1,\ldots,\mathcal{C}_n$, with another construction space, $\goalSpace$, which captures properties of, and relations between, tokens. Here we use the concept of a \textit{meta-space}, which generalises the concept of an identification space from~\citeA{raggi2022rst}.

We now define a \textit{multi-space system} which will express relations both \textit{across the tokens} of $n$ construction spaces and \textit{within} each of the individual construction spaces using a \textit{meta-space}, $\goalSpace$.

\begin{definition}\label{defn:multiSpace}
A tuple of construction spaces, $\multiSpace=(\mathcal{C}_1,\ldots,\mathcal{C}_n,\goalSpace)$, is a \textbf{a multi-space system} provided
\begin{enumerate}[itemsep = 1pt, topsep = 6pt]
\item for each $\mathcal{C}_i$, where $i\leq n$, the type system, $T_{\goalSpace}=(\types_{\goalSpace}, \leq_\goalSpace)$, of $\goalSpace$ extends the type system, $T_{\mathcal{C}_i}=(\types_{\mathcal{C}_i}, \leq_{\mathcal{C}_i})$, of $\mathcal{C}_i$, that is: $\leq_{\mathcal{C}_i}\, \subseteq\, \leq_{\goalSpace}$,
%
\item $\mathcal{C}_1\cup \cdots\cup \mathcal{C}_n\cup \goalSpace$ is a construction space.
\end{enumerate}
The spaces $\goalSpace$ and, resp., $\mathcal{S}=\mathcal{C}_1\cup \cdots\cup \mathcal{C}_n\cup \goalSpace$ are called the \textbf{meta-space} and, resp., the \textbf{superspace} of $\multiSpace$. 
A tuple of structure graphs, $(g_1,\ldots,g_n,g)$, where each $g_i$ belongs to $\mathcal{C}_i$ and $g$ belongs to $\goalSpace$, is called a \textbf{construction graph}\footnote{Strictly, of course, $(g_1,\ldots,g_n,g)$ is not a graph. We call it a graph for simplicity, noting that the union of the graphs, namely $g_1\cup \cdots\cup g_n\cup g$, is a construction graph for the superspace $\mathcal{S}$.} that \textbf{belongs} to $\multiSpace$.
\end{definition}

The next example shows that meta-spaces can be used for the purpose of encoding the semantics of a construction space. Appendix~\ref{secApp:RST} includes a further example, focusing on low-level properties of tokens drawn from set algebra.

\begin{example}[Semantics for \textsc{Matrix Algebra}]\label{ex:matrices-semantics}
	Take a construction space, $\mathcal{C}$, for matrix algebra as in example~\ref{ex:matrices}, and  meta-space, $\goalSpace$, for $\mathcal{C}$ that contains, for example, a token for each linear function. Then $\goalSpace$ represents some relations using constructors \texttt{mapRep} and \texttt{vectorRep}, take matrix expressions as input and output corresponding vectors. Thus, we can capture the typical linear-algebra semantics of matrices. If the realms of $\mathcal{C}$ and $\goalSpace$ model matrix algebra and its linear algebra semantics correctly, we expect to see that matrix multiplication and linear map application behave similarly. That is, we expect that \textit{whenever} the inputs of \texttt{multiply} and \texttt{apply} are related by  \texttt{mapRep} and \texttt{vectorRep} then the\pagebreak[4] outputs \textit{must} be related by the \texttt{vectorRep} relation. Thus, we will find structure graphs, such as the following, in the realm of $\mathcal{C} \cup \goalSpace$.
	\begin{center}
		\scalebox{0.9}{
}
	\end{center}
\end{example}

Now, every construction space, $\mathcal{C}$, is a special case of a multi-space system: we have equivalence with the multi-space system $\multiSpace=(\goalSpace)$, where $\goalSpace=\mathcal{C}$. Thus, we will allow ourselves to omit brackets, writing `let $\mathcal{C}$ be a multi-space system' to mean `let $\multiSpace=(\mathcal{C})$ be a multi-space system'. Similarly, we will write `Let $\mathcal{C}$ be a construction space' then use it as if it were a multi-space system. In general, multi-space systems are crucial to our approach to structure transfer, where we seek to identify a token (or tokens) in some construction spaces that are in defined relationships with tokens in other construction spaces.

\subsection{Pattern Graphs}
Given a construction space, $\mathcal{C} = (T,C,G)$, a \textit{pattern graph} for it is a structure graph that satisfies the constraints laid down by the type system, $T$, and constructor specification, $C$, but which need not be part of the realm, $G$. In other words, the vertices of a pattern graph for $\mathcal{C} = (T,C,G)$ are labelled by either types of $T$ or by constructors of $C$, and for any vertex labelled by a constructor, $c$, its inputs and outputs must be in agreement with the signature, $\sig(c)$. A \textit{pattern space}, $\mathcal{P}$, for $\mathcal{C}$, is a space where pattern graphs for $\mathcal{C}$ are taken from, with the condition that the realm, $G$, of $\mathcal{C}$ instantiates the realm, $\Gamma$, of $\mathcal{P}$.


\begin{example} Given the construction space of \textsc{Euler Diagrams}, the construction graph on the left is an \textit{instantiation} of the \textit{pattern graph} on the right. The construction graph on the left lives in the realm of \textsc{Euler Diagrams} and it \textit{instantiates} the pattern graph on the right. Many other construction graphs in the realm of \textsc{Euler Diagrams} also instantiate this given pattern graph. For instance, the vertex with type $\mathtt{label}$ could be instantiated with, say, $D$, and the vertices with type $\mathtt{region}$ could be instantiated with other regions. Some instantiations would result in a construction of `{equivalent}' Euler diagrams (to \adjustbox{raise = -1pt}{$\AsubBdisjCinline$}) but some may be genuinely different. Hence, this pattern graph characterises many construction graphs in the realm of \textsc{Euler Diagrams}.
	\begin{center}
\adjustbox{valign = c}{
}
\end{center}
\end{example}
As shown in the previous example, our convention moving forward will be to draw tokens within vertices, while types will be written outside -- labelling the vertices. Thus, when we draw pattern graphs, where the specific tokens are irrelevant, we will draw empty vertices with their types.

\begin{definition}\label{defn:Specialise}
Let $\mathcal{C}=(T,C,G)$ and $\mathcal{P}=(T,C,\Gamma)$ be construction spaces and let $g$ and $\contextGraph$ be construction graphs for $\mathcal{C}$ and $\mathcal{P}$, respectively. We say that $g$ is a \textbf{specialisation} of $\contextGraph$ provided there exists an isomorphism, $\specialise\colon \contextGraph\to g$, that preserves constructor labels, and for each token, $t$, of $\contextGraph$, the type of $\specialise(t)$ in $\mathcal{C}$ is a subtype of the type of $t$ in $\mathcal{P}$ (adapted from \cite{raggi2022rst}). 

If $g$ is a specialisation of $\contextGraph$ then $\contextGraph$ is a \textbf{generalisation} of $g$, and the functions $\specialise$ and $\specialise^{-1}$ are called \textbf{specialisation} and \textbf{generalisation functions}, respectively. We also say that $g$ is an \textbf{instantiation of \boldmath$\contextGraph$ in \boldmath$\mathcal{C}$} and, thus, that $\contextGraph$ is \textbf{instantiatable} in $\mathcal{C}$.
\end{definition}


\begin{definition} A \textbf{pattern space} for construction space $\mathcal{C} = (\tsystemn,\cspecificationn,G)$ is a construction space, $\mathcal{P} = (\tsystemn,\cspecificationn,\Gamma)$, where $G$ instantiates some subgraph of $\Gamma$ (adapted from \cite{raggi2022rst}).
A \textbf{pattern graph} for $\mathcal{C}$ is any structure graph that belongs to the realm of some pattern space for $\mathcal{C}$.
\end{definition}

Importantly, we can easily determine whether a graph, $\contextGraph$, is a pattern graph for $\mathcal{C}$: if $\contextGraph$ uses only constructors and types in $\mathcal{C}$ and ensures that configurations of the constructors agree with the constructor's signature as defined in $\mathcal{C}$, then $\contextGraph$ is a pattern graph for $\mathcal{C}$.  We note that Definition~\ref{defn:Specialise}, whilst using our convention of denoting construction space $\mathcal{C}$'s pattern space by $\mathcal{P}$, is not restricted to a construction space, $\mathcal{C}$, and its pattern space, $\mathcal{P}$. Indeed, we often wish to talk of specialisations, $\specialise\colon \contextGraph\to \contextGraph'$, between pattern graphs in a pattern space $\mathcal{P}$. This motivates our use of dual terminology: when working across spaces, we typically talk of instantiations whereas when working within a space we tend to talk of specialisations. The graphs in a pattern space{
}, $\mathcal{P}$, for $\mathcal{C}$ must satisfy the rules given by the type system and constructor specification for $\mathcal{C}$, yet they need not live in the realm of $\mathcal{C}$.  We require that the realm, $G$, of $\mathcal{C}$ is an \textit{instantiation} of some part of the realm, $\Gamma$, of $\mathcal{P}$. Thus, a pattern space, $\mathcal{P}$, provides a supply of pattern graphs which can be used to describe construction graphs that belong to $\mathcal{C}$. Given a construction space, $\mathcal{C}$, we will denote its construction graphs using the Latin alphabet, while its pattern graphs will be denoted using the Greek alphabet. We  depict vertices in pattern spaces with a shaded background, and their labels will represent their assigned type, for example, a token, $t$, with type $\tau$ will be drawn as: \tikz[construction,baseline=-.15cm]{\node[typepos = {$\tau$}{-13}{0.14cm}](){$t$};}\!\!, or simply \tikz[construction,baseline=-.15cm]{\node[typepos = {$\tau$}{-13}{0.14cm}](){};}\!\! when there is no reason to name the token. We now generalise the concept of a pattern space to a \textit{pattern system} for a multi-space system.

\begin{definition}
Let $\multiSpace=(\mathcal{C}_1,\ldots,\mathcal{C}_n,\goalSpace)$ be a multi-space system with superspace $\superSpace$. A \textbf{pattern system} for $\multiSpace$ is a multi-space system, $(\mathcal{P}_1,\ldots,\mathcal{P}_n,\mathcal{P})$, such that
\begin{enumerate}[itemsep = 1pt, topsep = 6pt]
\item each $\mathcal{P}_i$ is a pattern space for $\mathcal{C}_i$,
\item $\mathcal{P}$ is a pattern space for $\goalSpace$, and
\item $\mathcal{P}_1\cup \cdots \cup \mathcal{P}_n\cup  \mathcal{P}$ is a pattern space for $\superSpace$.
\end{enumerate}
 A tuple of pattern graphs, $(\contextGraph_1,\ldots,\contextGraph_n,\contextGraph)$, where each $\contextGraph_i$ is a pattern graph for $\mathcal{C}_i$ and $\contextGraph$ is a pattern graph for $\goalSpace$ is a \textbf{pattern graph} for $\multiSpace$.
\end{definition}

In what follows, whenever we exploit pattern graphs for any $\mathcal{C}_i$ or $\goalSpace$, we assume that they are drawn from a fixed pattern system for $\multiSpace$; we do not make the pattern system explicit. This simply ensures the compatibility of any pair of pattern graph and does not lose generality. In particular, if we have a pattern graph, $\contextGraph_i$, for each $\mathcal{C}_i$ and a pattern graph, $\contextGraph$, for $\goalSpace$ then it is taken that $\contextGraph_1\cup \cdots\cup \contextGraph_n\cup \contextGraph$ is a pattern graph for $\superSpace$.


\subsection{Special Kinds of Morphisms}

A very important kind of specialisation, $\specialise\colon \contextGraph \to \contextGraph'$, is where $\contextGraph'$ can be instantiated in $\mathcal{C}$. We will often refer to these as \textit{instantiatable} specialisations.

\begin{definition}
Let $\mathcal{C}$ be a construction system with pattern graphs $\contextGraph$ and $\contextGraph'$. We say that $\contextGraph'$ is an \textbf{instantiatable specialisation} of $\contextGraph$ if there exists a specialisation function, $\specialise \colon \contextGraph \to \contextGraph'$ and $\contextGraph'$ can be instantiated. We call $\specialise$ an \textbf{instantiatable specialisation function}.
\end{definition}

 Moving forward, we simplify statements such as `let $\contextGraph'$ be a specialisation of $\contextGraph$ with specialisation function $\specialise\colon \contextGraph \to \contextGraph'$' to simply `let $\specialise\colon \contextGraph \to \contextGraph'$ be a specialisation function', introducing the function and codomain together. When we say \textit{instantiatable} specialisation function, $\specialise\colon \contextGraph \to \contextGraph'$, we mean $\contextGraph'$ can be instantiated in a specified construction space, $\mathcal{C}$.

Now we define a \textit{reification} of a pattern graph, $\contextGraph$: to reify $\contextGraph$ means to enlarge it, by adding vertices and arrows, and to specialise its types. Enlarging $\contextGraph$ adds \textit{structural context}, and replacing types with subtypes makes it \textit{more concrete} because subtypes are `closer' to tokens in $\mathcal{C}$. 

\begin{definition}\label{defn:reification}
Let $\contextGraph$ and $\contextGraph'$ be graphs for construction space $\mathcal{P}$. Then $\contextGraph'$ is a \textbf{reification} of $\contextGraph$ provided there exists a monomorphism, $\reify \colon \contextGraph \to \contextGraph'$, called a \textbf{reification function}, where $\reify[\contextGraph]$  specialises $\contextGraph$.
\end{definition}

Reifications will be used abundantly in this paper. As well as reifications, which specialise types and enlarge a pattern graph, we also require the notion of a \textit{loosening}. A loosening allows us to shrink a pattern graph whilst generalising its types (i.e., replacing types with supertypes). Essentially, a loosening \textit{removes structural context} and makes it \textit{less concrete} because supertypes are `further' from tokens in $\mathcal{C}$. 

\begin{definition}\label{defn:looesening}
Let $\contextGraph$ and $\contextGraph'$ be graphs for construction space $\mathcal{P}$. Then $\contextGraph$ is a \textbf{loosening} of $\contextGraph'$  provided there exists a partial surjective function, $\loosen \colon \contextGraph' \to \contextGraph$, called a \textbf{loosening map}, whose inverse is a reification function. Given a set, $W$, of tokens, in $\contextGraph'$, we say that $\loosen$ is a loosening \textbf{up to \boldmath$W$} provided $\loosen$ generalises the types of all tokens in its pre-image that are not in $W$.
\end{definition}


Similar language conventions are adopted for \textit{instantiatable} reifications and loosenings, which require the instantiatability of the codomain. 

\begin{example} In the patterns, $\alpha$ and $\alpha'$ depicted below, for \textsc{Set Algebra}, the function, $\reify : \alpha \to \alpha'$, where $\reify(t) = t'$, $\reify(t_1) = t_1'$, $\reify(t_2) = t_2'$ and $\reify(t_3) = t_3'$, is a reification, and it is also an instantiatable reification, as there exists an instantiation, $g$, of $\alpha'$. The partial function $\reify^{-1} : \alpha' \to \alpha$ is a loosening.
	\begin{center}

	\end{center}
\end{example}

The next two lemmas follow trivially from the definitions above.
\begin{lemma}
For any instantiatable reification function, $\reify\colon \contextGraph \to \contextGraph'$, $\contextGraph$ is instantiatable.
\end{lemma}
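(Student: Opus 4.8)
The plan is to pull the given instantiation of $\contextGraph'$ back along $\reify$ and then restrict it to the image of $\contextGraph$. Fix the construction space $\mathcal{C}$ in which $\contextGraph'$ is instantiatable; by definition this supplies a construction graph $g'$ for $\mathcal{C}$ together with an isomorphism $\specialise'\colon \contextGraph' \to g'$ that preserves constructor labels and sends each token $t$ of $\contextGraph'$ to a token whose type (in $\mathcal{C}$) is a subtype of the type of $t$ (in $\mathcal{P}$).

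First I would unpack the reification: since $\reify\colon \contextGraph \to \contextGraph'$ is a monomorphism it is an isomorphism from $\contextGraph$ onto its image $\reify[\contextGraph]$, which is a subgraph of $\contextGraph'$, and by Definition~\ref{defn:reification} $\reify[\contextGraph]$ specialises $\contextGraph$; let $a\colon \contextGraph \to \reify[\contextGraph]$ be a witnessing specialisation, i.e.\ a constructor-label-preserving isomorphism with the subtype property. Next I would restrict $\specialise'$ to the subgraph $\reify[\contextGraph]$ and put $g = \specialise'[\reify[\contextGraph]]$. Since the image under a homomorphism of a subgraph of its domain is a subgraph of the image of that homomorphism, $g$ is a subgraph of $\specialise'[\contextGraph'] = g'$, hence a subgraph of the realm of $\mathcal{C}$, i.e.\ a construction graph for $\mathcal{C}$; and $\specialise'$ restricted to $\reify[\contextGraph]$ is an isomorphism onto $g$ that inherits constructor-label preservation and the subtype property from $\specialise'$.

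Then the desired instantiation is the composite $\specialise = \bigl(\specialise'|_{\reify[\contextGraph]}\bigr) \circ a \colon \contextGraph \to g$. It is an isomorphism (a composite of isomorphisms), it preserves constructor labels (a composite of constructor-label-preserving maps), and for every token $t$ of $\contextGraph$ the type of $\specialise(t)$ is a subtype of the type of $a(t)$, which is itself a subtype of the type of $t$; transitivity of the subtype order $\leq$ then yields that the type of $\specialise(t)$ is a subtype of the type of $t$. Hence $g$ is an instantiation of $\contextGraph$ in $\mathcal{C}$, so $\contextGraph$ is instantiatable.

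There is no real obstacle here — the statement is, as announced, a direct consequence of the definitions. The only things to be careful about are bookkeeping ones: keeping the reification's witnessing isomorphism $a$ separate from the instantiation isomorphism $\specialise'$ and their restrictions, checking that restricting $\specialise'$ to the subgraph $\reify[\contextGraph]$ still lands inside the realm of $\mathcal{C}$ (so that $g$ is genuinely a construction graph for $\mathcal{C}$, not merely a pattern graph), and chaining the two subtype inequalities through transitivity of $\leq$.
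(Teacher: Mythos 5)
Your proof is correct. The paper does not actually supply an argument here --- it declares this lemma (together with its companion on loosenings) to ``follow trivially from the definitions'' and omits the proof --- and what you have written is exactly the intended direct unpacking: restrict the instantiation $\specialise'$ of $\contextGraph'$ to the subgraph $\reify[\contextGraph]$, compose with the specialisation $\contextGraph\to\reify[\contextGraph]$ supplied by the reification, and chain the two subtype inequalities by transitivity of $\leq$, noting that the resulting image is a subgraph of $g'$ and hence of the realm.
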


\begin{lemma}
For any loosening map, $\loosen\colon \contextGraph \to \contextGraph'$, if $\contextGraph$ is instantiatable then so is $\contextGraph'$.
\end{lemma}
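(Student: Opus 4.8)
The plan is to reduce the statement to the preceding lemma. First I would unwind Definition~\ref{defn:looesening}: to say that $\loosen\colon \contextGraph \to \contextGraph'$ is a loosening map is, by definition, to say that $\contextGraph'$ is a loosening of $\contextGraph$, i.e.\ that the inverse $\loosen^{-1}\colon \contextGraph' \to \contextGraph$ is a reification function. Since by hypothesis the codomain $\contextGraph$ of $\loosen^{-1}$ is instantiatable, $\loosen^{-1}$ is an \emph{instantiatable} reification function in the sense fixed just before the lemma. I would then apply the preceding lemma to $\loosen^{-1}$, which immediately gives that its domain, $\contextGraph'$, is instantiatable --- exactly the claim.

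For completeness (or if one wishes to inline the preceding lemma rather than cite it), I would also give the direct argument. By hypothesis there is an instantiation $g$ of $\contextGraph$ in the ambient construction space $\mathcal{C}$, witnessed by a specialisation isomorphism $\specialise\colon \contextGraph \to g$; in particular $g$ is a subgraph of the realm of $\mathcal{C}$. Setting $\reify = \loosen^{-1}\colon \contextGraph' \to \contextGraph$, Definition~\ref{defn:reification} makes $\reify$ a monomorphism whose image $\reify[\contextGraph']$ --- a subgraph of $\contextGraph$ --- is a specialisation of $\contextGraph'$, say via an isomorphism $\phi\colon \contextGraph' \to \reify[\contextGraph']$. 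Restricting $\specialise$ to the subgraph $\reify[\contextGraph']$ yields an isomorphism onto $h := \specialise[\reify[\contextGraph']]$, which is a subgraph of $g$, hence of the realm, so $h$ is a construction graph for $\mathcal{C}$. Finally the composite $\specialise|_{\reify[\contextGraph']}\circ\phi\colon \contextGraph'\to h$ is an isomorphism preserving constructor labels (both factors do) that maps each token $t$ of $\contextGraph'$ to one whose type is $\leq$ the type of $t$ (chaining the two type-specialising steps and using transitivity of $\leq$); hence $h$ instantiates $\contextGraph'$, so $\contextGraph'$ is instantiatable.

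I expect no real obstacle here: the lemma is, as the text says, essentially definition-chasing. The only points I would be careful about are that in the statement it is $\contextGraph'$, not $\contextGraph$, that is the loosening --- so it is $\loosen^{-1}$ that plays the role of the reification function --- and that restricting a specialisation isomorphism to a subgraph again gives a specialisation, that specialisations compose (this is where transitivity of the subtype order $\leq$ enters), and that a subgraph of the realm is again a construction graph for $\mathcal{C}$. If anything is a trap, it is conflating this lemma with its mirror image, the preceding lemma, which propagates instantiatability in the opposite direction along the same monomorphism.
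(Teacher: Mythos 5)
Your proof is correct. The paper offers no argument for this lemma at all (it simply states that it ``follows trivially from the definitions''), and your reduction --- observing that $\loosen^{-1}\colon \contextGraph' \to \contextGraph$ is a reification function with instantiatable codomain and invoking the preceding lemma, backed up by the explicit composite-of-specialisations argument --- is exactly the definition-chasing the authors intend, with the domain/codomain roles handled correctly.
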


We now generalise the concepts of specialisation and reification to multi-space systems.

\begin{definition}
Let $\multiSpace=(\mathcal{C}_1,\ldots \mathcal{C}_n,\goalSpace)$ be a multi-space system. Let $(\contextGraph_1,\ldots,\contextGraph_n,\contextGraph)$ and $(\contextGraph_1',\ldots,\contextGraph_n',\contextGraph')$ be pattern graphs for $\multiSpace$. A homomorphism, $f\colon (\contextGraph_1,\ldots,\contextGraph_n,\contextGraph) \to (\contextGraph_1',\ldots,\contextGraph_n',\contextGraph')$, is a \textbf{specialisation function} (resp., a \textbf{reification function}) in $\multiSpace$ provided $f|_{\contextGraph}\colon \contextGraph \to \contextGraph'$ and all $f|_{\contextGraph_i}\colon \contextGraph_i \to \contextGraph_i'$ for $i \leq n$, are specialisations (resp. reifications).

We say that $(\contextGraph_1',\ldots,\contextGraph_n',\contextGraph')$ \textbf{specialises} (resp. \textbf{reifies}) $(\contextGraph_1,\ldots,\contextGraph_n,\contextGraph)$ in $\multiSpace$.
\end{definition}

To conclude this section, we extend the notion of instantiation to multi-space systems. 

\begin{definition}
Let $\multiSpace=(\mathcal{C}_1,\ldots \mathcal{C}_n,\goalSpace)$ be a multi-space system with  pattern graph $(\contextGraph_1,\ldots,\contextGraph_n,\contextGraph)$. Then $(\contextGraph_1,\ldots,\contextGraph_n,\contextGraph)$ is \textbf{instantiatable} in $\multiSpace$ provided there exists a construction graph,  $(g_1,\ldots,g_n, g)$, that belongs to $\multiSpace$ and specialises $(\contextGraph_1,\ldots,\contextGraph_n,\contextGraph)$. An \textbf{instantiatable specialisation function} is a specialisation function, $\specialise \colon (\contextGraph_1,\ldots,\contextGraph_n,\contextGraph) \to (\contextGraph_1',\ldots,\contextGraph_n',\contextGraph')$, such that $(\contextGraph_1',\ldots,\contextGraph_n',\contextGraph')$ can be instantiated in $\multiSpace$. We say that $(\contextGraph_1',\ldots,\linebreak[2]\contextGraph_n',\linebreak[2]\contextGraph')$ is an instantiatable specialisation of $(\contextGraph_1,\ldots,\contextGraph_n,\contextGraph)$ in $\mathcal{M}$.
\end{definition}

\section{Sequents and Schemas for Multi-Space Systems}\label{sec:sequentsAndSchemas}
Here we will introduce some notions that amount to a meta-logic for deriving knowledge about construction spaces. As we will see, this meta-logic can be used for solving various problems. These problems can be grouped in the following three classes:
\begin{enumerate}[itemsep = 1pt, topsep = 6pt]
\item can we \textit{instantiate} a pattern graph in a construction space? (Section~\ref{sec:instSchema}),
\item can we \textit{infer} that a particular relation holds for instantiations of a pattern graph in a multi-space system? (Section~\ref{sec:inferenceSchema}), and
\item can we \textit{transform} a construction graph in a multi-space system, $\multiSpace$, so that it is ensured that some specified relation holds between the tokens of the resulting graph and the rest of tokens in $\multiSpace$? (Section~\ref{sec:transfer}).
\end{enumerate}
As we will see in Sections~\ref{sec:instSchema} and~\ref{sec:inferenceSchema}, instantiation and inference are two uses of the same process. However, in Section~\ref{sec:transfer} we will see that transformations exploit the same process but with a twist: a target pattern graph is reified, resulting in a new structure which, if instantiated, gives rise to a desired re-representation. This is the idea behind the notion of \textit{structure transfer}.

First we will look at one example of each of the categories above: instantiation, inference, and transformation.
\paragraph{Instantiation.} Can the pattern graph $\gamma$, drawn below, be instantiated in the construction space for \textsc{Euler Diagrams}? 
\begin{center}
\adjustbox{scale = 0.9}{
}
\end{center}

\paragraph{Inference.} Can $A \cap C = \emptyset$ be observed from an Euler diagram with type $\mathtt{\{AB,B,C,\emptyset\}}$,\linebreak[2] given the patterns $\contextGraph_1$ (for \textsc{Euler Diagrams}) and $\contextGraph_2$ (for \textsc{Set Algebra})? In other words, can $A \cap C = \emptyset$ can be observed from the diagram $\AsubBdisjCinline$? Note that the \textit{goal} of determining observability is captured by a pattern, $\goa$ (drawn with red dashed arrows), for the meta-space of a multi-space containing \textsc{Euler Diagrams} and \textsc{Set Algebra}.
	\begin{center}
		\adjustbox{scale = 0.9}{
}
\end{center}

The problem of transformation, above, is generally motivated by the wish to produce a transformation of one token, $t_1$, into another token, $t_2$, using relations defined in an multi-space system. Later in Section~\ref{sec:transfer} we will see how we might obtain enough information about such a diagram, $t_2$, to actually build it. The process we suggest produces a new pattern graph, $\contextGraph_2$ by reifying it iteratively in such a way that it is ensured that any instantiation of $\contextGraph_2$ results in a $t_2$ that is in the desired relationship with $t_1$. The way that $\contextGraph_2$ is generated is by using \textit{schemas}, which can be understood as units of knowledge that encode invariants within and across different spaces. Informally, some schemas can be understood as capturing analogies across spaces. For example, as we will see, the analogy between the use of the symbol $\subseteq$, in \textsc{Set Algebra} and the actual \textit{containment} of a region inside a curve in \textsc{Euler Diagrams} can be readily captured by a schema. The formalisation of this process, called \textit{structure transfer}, by which schemas are used to produce a target graph from a given graph and some constraints, is not restricted to pairs of tokens, but to any $n$-tuples of structure graphs in a multi-space. Nevertheless, our motivating examples are for $n = 2$. 

%

We start with the definition of a \textit{sequent}, drawing analogy from sequent calculus: given a collection of pattern graphs representing \textit{assumptions} and another representing a \textit{goal}, drawn from a multi-space system, we can form a \textit{sequent}. Then we define the concept of a \textit{schema}, which is a rule for discerning whether a goal can be instantiated, given an instantiation of the assumptions.
The operation of \textit{applying}  schemas to sequents will capture how a sequent needs to be manipulated in order to derive whether, given an instantiation of the assumptions, there exists a suitable instantiation of the goal. In addition, schemas are foundational for defining \textit{instantiation}, \textit{inference} and \textit{transfer} schemas.

\begin{definition}
A \textbf{sequent} for multi-space system $\multiSpace$ is a  pattern graph, $(\contextGraph_1,\ldots,\contextGraph_{n},\assump)$, for $\multiSpace$ and a pattern graph, $\goa$, for the meta-space, $\goalSpace$, denoted $\sequent{\contextGraph_1,\ldots,\contextGraph_n,\assump}{\goa}$. We call $(\contextGraph_1,\ldots,\contextGraph_{n})$ the \textbf{context}, $\assump$ the \textbf{antecedent}, and $\goa$ the \textbf{consequent} of the sequent. If the context and antecedent are empty graphs we will write $\sequent{\ \ }{\goa}$. Similarly, if the consequent is an empty graph we will write $\sequent{\contextGraph_1,\ldots,\contextGraph_n,\assump}{\ \ }$.
\end{definition}
Sequents can be used to capture problems. In the context where $\sequent{\contextGraph_1,\ldots,\contextGraph_n,\assump}{\goa}$ represents a problem we will call $(\contextGraph_1,\ldots,\contextGraph_n,\assump)$ the \textbf{assumptions} and $\goa$ the \textbf{goals} of the problem. The instantiation problem above is captured by a sequent, $\sequent{\ \  }{\goa}$, because the goal is to instantiate $\goa$. The inference and transformation problems are captured by a sequent $\sequent{\contextGraph_1,\contextGraph_2,\assump}{\goa}$ where $\assump$ is empty. To \textit{solve} instantiation and inference problems means, roughly, to determine that the instantiatability of the assumptions implies the instantiatability of the goal. As we will define next, any sequent that satisfies this property is called a \textit{schema}. Thus, solving the instantiation or inference problem $\sequent{\contextGraph_1,\contextGraph_2,\assump}{\goa}$ means proving that it is a schema. The transformation case is different: solving a transformation problem means finding the graph $\contextGraph_2$ for which $\sequent{\contextGraph_1,\contextGraph_2,\assump}{\goa}$ is a schema.

Throughout this paper, we will draw an analogy between the theory of schemas and model theory (more specifically, satisfiability of formulas in the sequent calculus). The property of \textit{being a schema for a construction space}, for sequents, is analogous to the property of \textit{being satisfiable via an interpretation}, for formulas. This is reflected by the language and notation, where we use words like \textit{antecedent} and \textit{consequent}, and the symbol $\Vdash$ in analogy with the turnstile, $\vdash$, of the sequent calculus. The foundation of the analogy is this: to be instantiatable in a construction space is analogous to being satisfiable via some interpretation.

\begin{definition}\label{defn:instantiation-schema}
Let $\sequent{\contextGraph_1,\ldots,\contextGraph_{n},\assump}{\goa}$ be a sequent for multi-space system $\multiSpace$.
Then $\sequent{\contextGraph_1,\ldots,\contextGraph_{n},\assump}{\goa}$ is a \textbf{schema}  for $\multiSpace$ provided for any instantiatable specialisation function, $\specialise \colon (\contextGraph_1,\ldots,\contextGraph_n,\assump) \to (\contextGraph_1',\ldots,\contextGraph_{n}',\assump')$, there exists an instantiatable specialisation,\linebreak[10] $\specialise' \colon (\contextGraph_1,\ldots,\contextGraph_{n},\assump\cup \goa) \to (\contextGraph_1',\ldots,\contextGraph_{n}',\assump'\cup \goa')$ that extends $\specialise$. 
\end{definition}

Next we will show three examples of schemas in increasing levels of complexity -- first for a case where $\multiSpace = \mathcal{C}$, then for a case where $\multiSpace = (\mathcal{C},\goalSpace)$, and lastly for a case where $\multiSpace = (\mathcal{C},\mathcal{D},\mathcal{G})$.

\begin{example}\label{ex:inst-schema} Let $\mathcal{C}$ be the construction space for \textsc{Set Algebra}. The sequent $\sequent{\assump}{\goa}$, visualised below left, is a schema for $\mathcal{C}$. The pattern graph 
	$\assump$ contains exactly the inputs for an $\texttt{infixOp}$ constructor. The pattern graph, $\assump'$ is an example of an instantiatable specialisation of $\assump$. The fact that $\sequent{\assump}{\goa}$ is a schema guarantees that some $\goa'$ must exist for which $\assump' \cup \goa'$ is an instantiatable specialsation of $\assump \cup \goa$.
	\begin{center}

	\end{center}
	In other words, $\sequent{\assump}{\goa}$ is a schema because whichever way we restrict the types of $v_1$, $v_2$, and $v_3$, (e.g., to $\texttt{A\_union\_empty}$, $\texttt{union}$, $\texttt{var}$), if we can instantiate $\assump'$ then we can instantiate $\goa'$ respecting the types chosen for the specialisation, $\assump'$, of $\assump$ (e.g, $v'$ can be instantiated to a token of the form $A \cup \emptyset \cup B$). 
\end{example}

\begin{example}\label{ex:fuzzy-schema}
Consider a multi-space system $(\mathcal{C},\goalSpace)$ where $\mathcal{C}$ encodes some basic logic and $\goalSpace$ encodes some relations \textit{fuzzily} -- every number between $0$ and $1$ is a truth value token in $\goalSpace$ and, for all $0\leq \texttt{a} \leq \texttt{b} \leq 1$, the interval $\texttt{[a,b]}$ is a type, where the subtype order, $\preccurlyeq$, satisfies the following: $\texttt{[a,b]} \preccurlyeq \texttt{[c,d]}$ iff $\texttt{c} \leq \texttt{a}$ and $\texttt{b} \leq \texttt{d}$. For any number $0 \leq x \leq 1$, we have that $\type(x) = \texttt{[{x},{x}]}$. Thus, certainty of truth is represented by token $1$ of type $\texttt{[1,1]}$ and certainty of falsity is represented by token $0$ of type $\texttt{[0,0]}$. For all $0\leq \texttt{a} \leq\texttt{b} \leq 1$, the sequent $\sequent{\contextGraph,\assump}{\goa}$, illustrated below, is a schema\footnote{From here on, our drawing convention for any sequent, $\sequent{\contextGraph_1,\ldots,\contextGraph_n,\assump}{\goa}$, is that the context graphs, $\contextGraph_1,\ldots,\contextGraph_n$, will drawn with black arrows, the antecedent, $\assump$, will be drawn with thick blue arrows, and the consequent, $\goa$, will be drawn with dashed red arrows.}.
It captures a generalisation of the fact that the truth value of a formula is one minus the truth value of its negation.
	\begin{center}

		\end{center}
\end{example}

In terms of our three overarching  problems (on instantiation, inference and transformation), we need to answer the following question: if we can instantiate $(\contextGraph_1, \ldots, \contextGraph_n, \assump)$, can we also instantiate $(\contextGraph_1, \ldots, \contextGraph_n, \assump \cup \goa)$? Schemas are our route to an answer, so it is useful to have some results that establish when sequents are also schemas. Lemma~\ref{lem:trivialInstSchema} shows that any sequent where $\goa \subseteq \assump$ is a schema. Secondly, lemma~\ref{lem:InstSchemaSubgraph} asserts that we can shrink the consequent of a known schema to a subraph, $\goa' \subseteq \goa$, and also add some subgraph of the antecedent, $\assump' \subseteq \assump$, to the goal, and the resulting sequent is also schema. Since both lemmas hold trivially, we omit their proofs.

\begin{lemma}\label{lem:trivialInstSchema}
Any sequent $\sequent{\contextGraph_1,\ldots,\contextGraph_n,\assump}{\goa}$ for multi-space system $\multiSpace$ where $\goa \subseteq \assump$ is a schema for $\multiSpace$.
\end{lemma}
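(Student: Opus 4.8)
The plan is to unwind the definition of a schema and observe that the hypothesis $\goa \subseteq \assump$ collapses the two things that need to be compared. Concretely, a sequent $\sequent{\contextGraph_1,\ldots,\contextGraph_n,\assump}{\goa}$ is a schema exactly when every instantiatable specialisation function $\specialise \colon (\contextGraph_1,\ldots,\contextGraph_n,\assump) \to (\contextGraph_1',\ldots,\contextGraph_n',\assump')$ extends to an instantiatable specialisation $\specialise' \colon (\contextGraph_1,\ldots,\contextGraph_n,\assump\cup\goa) \to (\contextGraph_1',\ldots,\contextGraph_n',\assump'\cup\goa')$. So I would fix an arbitrary instantiatable specialisation function $\specialise$ as above and produce the required $\specialise'$.

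First I would record the elementary set-theoretic fact that, since $\goa$ is a subgraph of $\assump$, the (argument-wise) union $\assump \cup \goa$ is just $\assump$ itself; hence the domain tuple $(\contextGraph_1,\ldots,\contextGraph_n,\assump\cup\goa)$ coincides with $(\contextGraph_1,\ldots,\contextGraph_n,\assump)$, which is already the domain of $\specialise$. Then I would take $\specialise' := \specialise$ and set $\goa' := \specialise[\goa]$, the image subgraph of $\goa$ under $\specialise$. Because $\specialise$ is a homomorphism of the tuple, $\specialise|_{\assump}\colon\assump\to\assump'$ is in particular a specialisation, and images of subgraphs are contained in the image of the ambient graph, so $\goa' = \specialise[\goa] \subseteq \specialise[\assump] = \assump'$. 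Consequently $\assump' \cup \goa' = \assump'$, and so the codomain tuple $(\contextGraph_1',\ldots,\contextGraph_n',\assump'\cup\goa')$ is exactly $(\contextGraph_1',\ldots,\contextGraph_n',\assump')$.

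With these identifications, $\specialise'$ is literally the same function as $\specialise$, so it trivially extends $\specialise$, it is a specialisation function of tuples (each restriction $\specialise'|_{\contextGraph_i}$ and $\specialise'|_{\assump}$ is a specialisation by hypothesis on $\specialise$), and its codomain $(\contextGraph_1',\ldots,\contextGraph_n',\assump'\cup\goa') = (\contextGraph_1',\ldots,\contextGraph_n',\assump')$ is instantiatable in $\multiSpace$ because $\specialise$ was assumed instantiatable. Since $\specialise$ was arbitrary, the defining condition of a schema holds, so $\sequent{\contextGraph_1,\ldots,\contextGraph_n,\assump}{\goa}$ is a schema for $\multiSpace$. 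There is no real obstacle here: the only thing to be careful about is the bookkeeping that $\goa\subseteq\assump$ propagates to $\specialise[\goa]\subseteq\assump'$, which is immediate from $\specialise$ being a graph morphism, and that "extends" is satisfied vacuously by the identity extension.
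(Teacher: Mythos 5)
Your proof is correct: since $\goa\subseteq\assump$ forces $\assump\cup\goa=\assump$ and $\specialise[\goa]\subseteq\assump'$, taking $\specialise'=\specialise$ with $\goa'=\specialise[\goa]$ satisfies the schema condition verbatim. The paper omits the proof as trivial, and your argument is exactly the intended one.
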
%

\begin{lemma}\label{lem:InstSchemaSubgraph}
Let  $\sequent{\contextGraph_1,\ldots,\contextGraph_n,\assump}{\goa}$ be a schema for multi-space system $\multiSpace$. Let $\assump'$ and $\goa'$ be pattern graphs for $\multiSpace$'s meta-space such that $\assump'\subseteq \assump$ and $\goa'\subseteq \goa$. Then $\sequent{\contextGraph_1,\ldots,\contextGraph_n,\assump}{\goa'\cup \assump'}$ is a schema for $\multiSpace$.
\end{lemma}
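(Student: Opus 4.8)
The plan is to unwind Definition~\ref{defn:instantiation-schema} and to exploit the fact that the new sequent $\sequent{\contextGraph_1,\ldots,\contextGraph_n,\assump}{\goa'\cup\assump'}$ has exactly the same context $(\contextGraph_1,\ldots,\contextGraph_n)$ and antecedent $\assump$ as the given schema, so the witnesses required for the new sequent can be obtained simply by \emph{restricting} the ones supplied by the given one. First I would record two bookkeeping observations: since $\assump'\subseteq\assump$ we have $\assump\cup(\goa'\cup\assump')=\assump\cup\goa'$, and since $\goa'\subseteq\goa$ the graph $\assump\cup\goa'$ is a subgraph of $\assump\cup\goa$ (it is also a pattern graph for $\goalSpace$, so the new sequent is well-formed). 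Hence, by Definition~\ref{defn:instantiation-schema}, it will suffice to show that every instantiatable specialisation function $\specialise$ with domain $(\contextGraph_1,\ldots,\contextGraph_n,\assump)$ extends to an instantiatable specialisation of $(\contextGraph_1,\ldots,\contextGraph_n,\assump\cup\goa')$.

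Concretely, I would fix such a $\specialise$, writing its codomain as $(\hat\contextGraph_1,\ldots,\hat\contextGraph_n,\hat\assump)$. Since $\sequent{\contextGraph_1,\ldots,\contextGraph_n,\assump}{\goa}$ is a schema, there is an instantiatable specialisation $\specialise'\colon(\contextGraph_1,\ldots,\contextGraph_n,\assump\cup\goa)\to(\hat\contextGraph_1,\ldots,\hat\contextGraph_n,\hat\assump\cup\hat\goa)$ extending $\specialise$. I would then take $\specialise''$ to be the restriction of $\specialise'$ to $(\contextGraph_1,\ldots,\contextGraph_n,\assump\cup\goa')$ and verify three things. (i) $\specialise''$ is a specialisation function in $\multiSpace$: its restriction to each $\contextGraph_i$ equals $\specialise|_{\contextGraph_i}$, a specialisation, and its restriction to $\assump\cup\goa'$ is the restriction of the specialisation $\specialise'|_{\assump\cup\goa}$ to the subgraph $\assump\cup\goa'$, which is again a specialisation onto its image — restricting an isomorphism to a subgraph yields an isomorphism onto its image, and the preservation of constructor labels together with the subtype condition of Definition~\ref{defn:Specialise} are inherited by restriction. (ii) $\specialise''$ extends $\specialise$, because its domain contains $(\contextGraph_1,\ldots,\contextGraph_n,\assump)$ and on it $\specialise''$ agrees with $\specialise'$, hence with $\specialise$. (iii) The codomain of $\specialise''$ is instantiatable: it is obtained from the instantiatable codomain $(\hat\contextGraph_1,\ldots,\hat\contextGraph_n,\hat\assump\cup\hat\goa)$ of $\specialise'$ by replacing its meta-component with the subgraph $\specialise'[\assump\cup\goa']\subseteq\hat\assump\cup\hat\goa$, and restricting any construction graph that witnesses the instantiatability of the former to this subgraph yields one that witnesses the instantiatability of the latter (this is the multi-space analogue of the already-stated lemma on loosening maps, obtained through the partial-identity loosening associated with a subgraph inclusion). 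As $\specialise$ was arbitrary and $\assump\cup\goa'=\assump\cup(\goa'\cup\assump')$, this shows $\sequent{\contextGraph_1,\ldots,\contextGraph_n,\assump}{\goa'\cup\assump'}$ is a schema.

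I do not expect a genuine obstacle — as the authors note, the lemma is essentially bookkeeping, and the work is entirely in the routine facts invoked above: (a) the restriction of a specialisation (equivalently, of an isomorphism of structure graphs) to a subgraph is a specialisation onto its image, preserving constructor labels and the subtype condition; and (b) a sub-tuple of an instantiatable tuple of pattern graphs is again instantiatable. Both are immediate from the definitions of structure graph, specialisation, ``belongs to $\multiSpace$'', and instantiatability, with (b) being the multi-space version of the stated single-space loosening lemma. The only thing to keep straight while writing this up is the notational clash with Definition~\ref{defn:instantiation-schema}, whose $\assump'$ denotes a codomain antecedent rather than a subgraph of $\assump$; above I write $\hat\assump$ for that codomain antecedent to avoid confusion.
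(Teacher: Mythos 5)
The paper omits the proof of this lemma, declaring it trivial, so there is no authorial argument to diverge from; your proof is the expected routine unwinding of Definition~\ref{defn:instantiation-schema}. Your argument is correct: the key observations --- that $\assump\cup(\goa'\cup\assump')=\assump\cup\goa'$, that restricting the extension $\specialise'$ supplied by the original schema to the subgraph $\assump\cup\goa'$ yields a specialisation extending $\specialise$, and that the restricted codomain remains instantiatable because subgraph inclusion induces a loosening --- are exactly the bookkeeping the authors had in mind.
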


A crucial property of schemas is that they allow us to infer that an instantiation of $(\contextGraph_1, \ldots, \contextGraph_n, \assump \cup \goa)$ exists given the instantiatability of $(\contextGraph_1, \ldots, \contextGraph_n, \assump)$. That is, if an instantiation, $(g_1, \ldots, g_n, g_{\assump})$, of $(\contextGraph_1, \ldots, \contextGraph_n, \assump)$ exists, then an instantiation $(g_1', \ldots, g_n', g_{\assump}'\cup g_{\goa}')$ of $(\contextGraph_1, \ldots, \contextGraph_n, \assump \cup \goa)$ must exist. It is important to note that such $(g_1', \ldots, g_n', g_{\assump}')$ need not have the same tokens as $(g_1, \ldots, g_n, g_{\assump})$, but the types, however minimal, must be the same. This is because any instantiation is itself an instantiatable specialisation, so if we take $\specialise\colon  (\contextGraph_1,\ldots,\contextGraph_n,\assump) \to (g_1,\ldots,g_n,g_{\assump})$ as our instantiatable specialisation function, we can extend this to some instantiatable specialisation function, $\specialise'\colon  (\contextGraph_1,\ldots,\contextGraph_n,\assump) \to (g_1,\ldots,g_n,g_{\assump} \cup \goa')$, noting that $(g_1,\ldots,g_n,g_{\assump} \cup \goa')$ may not be an instantiation itself, but that it is instantiatable. In the illustration below, where $(g_1,\ldots,g_n)$ is empty, notice that while $\specialise$ instantiates $\assump$, $\specialise'$ specialises the type $\tau_4$ to some type $\tau_4'$:
\begin{center}
\adjustbox{scale = \myscale}{%
}
\end{center}

Moving back to our general context, we are seeking to establish whether a sequent, $\sequent{\contextGraph_1,\ldots,\contextGraph_n,\assump}{\goa}$, is a schema. Our approach is to see whether we can grow $\assump$ and reduce $\goa$ in such a way that $\assump$ envelops $\goa$, enabling us to invoke Lemma~\ref{lem:trivialInstSchema}. Such manipulations, which grow the antecedent or reduce the consequent, are covered by what we call \textit{applications} of a known schema to a sequent (Definition~\ref{defn:instantiation-schema}). If we can apply schemas to a sequent until we reach a schema, then we say that the sequent is \textit{valid} (Definition~\ref{defn:valid-instantiation-sequent}).

In order for the application of schemas to be as intended we need the notion of \textit{monotonicity}. The property of monotonicity is analogous to the property of monotonicity in logic where, given a valid inference, additional facts cannot invalidate it (i.e., the property that if $A \vdash B$ then $A,C \vdash B$ for any $C$). Monotonicity allows us to use schemas to \textit{grow} instantiations. That is, we will know that it is possible to \textit{grow} an instantiation of $\contextGraph_1\cup \cdots\cup \contextGraph_n\cup \assump$ to an instantiation of $\contextGraph_1\cup \cdots\cup \contextGraph_n\cup \assump \cup \goa$.
\begin{definition}
Let  $\sequent{\contextGraph_1,\ldots,\contextGraph_n,\assump}{\goa}$ be a schema and $(\contextGraph_1',\ldots,\contextGraph_n',\assump')$ be a  pattern graph for multi-space system $\multiSpace$. Then  $\sequent{\contextGraph_1,\ldots,\contextGraph_n,\assump}{\goa}$ is \textbf{monotonic} for $(\contextGraph_1',\ldots,\contextGraph_n',\assump')$ in $\multiSpace$ provided $\sequent{\contextGraph_1 \cup \contextGraph_1', \ldots,\contextGraph_n\cup \contextGraph_n',\assump \cup \assump'}{\goa}$ is a schema for $\multiSpace$. Such a pattern graph, $(\contextGraph_1',\ldots,\contextGraph_n',\assump')$, is called a \textbf{monotonic extender} for $\sequent{\contextGraph_1,\ldots,\contextGraph_n,\assump}{\goa}$. 

We say that $\sequent{\contextGraph_1,\ldots,\contextGraph_n,\assump}{\goa}$ is \textbf{monotonic} for $\multiSpace$ provided it is monotonic for every pattern graph, $(\contextGraph_1',\ldots,\contextGraph_n',\assump')$, for $\multiSpace$.
\end{definition}
Many schemas are not monotonic for all pattern graphs but may still have non-trivial monotonic extenders.

\begin{example}\label{ex:non-monotonic} The schema $\sequent{\assump}{\goa}$ for \textsc{Set Algebra} from Example~\ref{ex:inst-schema} is not monotonic for all pattern graphs. The pattern $\assump'$, illustrated below, contains $\assump$ and is instantiatable. Yet, if the vertex labelled with type $\texttt{binOp}$ is specialised with, say, type $\texttt{intersection}$, then $\goa$ cannot be instantiated, as the set expression at the top would need to be instantiated  to $A \cup B$ and also to some expression of the form $x \cap y$. Thus, $\assump'$ is a not a monotonic extender for $\sequent{\assump}{\goa}$.
\begin{center}
	\scalebox{\myscale}{
}
\end{center}
\end{example}

At the heart of our approach to defining applications of schemas is the notion of \textit{weakening}. This operation takes a sequent, $\sequent{\contextGraph_1,\ldots,\contextGraph_n,\assump}{\goa}$, and allows us to specialise and augment the assumptions via a reification. In addition, a weakening allows us to generalise the goal and remove parts of it, via a loosening map. 
As we will see in theorem~\ref{thm:weakening-schema}, a weakening applied to a schema yields also a schema if a monotonicity condition is satisfied given the enlargement of the assumptions and goal, captured by the definition of a \textit{refinement}. This will be crucial for the notion of schema application, as it will allow us to modify a schema to adapt it to the context in which it will be used.

\begin{definition}\label{defn:weakening}
Let $\sequent{\contextGraph_1,\ldots,\contextGraph_n,\assump}{\goa}$ and $\sequent{\contextGraph_1',\ldots,\contextGraph_n',\assump'}{\goa'}$ be sequents for multi-space system $\multiSpace$. 
Then $\sequent{\contextGraph_1',\ldots,\contextGraph_n',\assump'}{\goa'}$ is a \textbf{weakening} of $\sequent{\contextGraph_1,\ldots,\contextGraph_n,\assump}{\goa}$ provided there exists a partial function, $\refine\colon (\contextGraph_1,\ldots,\contextGraph_{n},\assump\cup \goa) \to (\contextGraph_1',\ldots,\contextGraph_{n}',\assump'\cup \goa')$, such that
	\begin{enumerate}[itemsep = 1pt, topsep=6pt]
		\item $\refine|_{(\contextGraph_1,\ldots,\contextGraph_n,\assump)}$ is a reification,
                and
		\item $\refine|_{\goa}$ is a loosening map, up to the tokens in $(\contextGraph_1\cup \cdots\cup \contextGraph_n\cup \assump)\cap \goa$.
              \end{enumerate}
We call $\refine$ a \textbf{weakening map}.
\end{definition}

%
To understand weakenings better, let us build more on the analogy to the sequent calculus. Refining a schema is analogous to weakening it by specialising its antecedent and generalising its consequent. For example, in logic if we have a statement, $P(x) \vdash Q(x,3)$, we can weaken it to, $P(2),R \vdash Q(2,z)$, by specialising $x$ to $2$, generalising $z$ to $3$ and adding another assumption, $R$. If we know that $P(x) \vdash Q(x,3)$ is satisfiable then $P(2),R \vdash Q(2,z)$ must also be satisfiable. Now, given that schemas need not be generally monotonic, a weakening may make a schema invalid. However, Definition~\ref{defn:refinement}, of refinement, captures the conditions under which schemas can be weakened to ensure the result is also a schema.

\begin{example}Consider schema $\sequent{\contextGraph,\assump}{\goa}$ from Example~\ref{ex:fuzzy-schema}, for values $a = 0$ and $b = 0.5$, shown below left. This schema asserts that if the truth value of \textit{not P} is a value between $0$ and $0.5$, then the truth value of \texttt{P} is a value between $0.5$ and $1$. The sequent, $\sequent{\contextGraph',\assump'}{\goa'}$, weakens it by augmenting and specialising the context and antecedent, and by generalising the consequent.
	\begin{center}
		\adjustbox{valign = c, scale = 1}{
}
\end{center}
The weaker version, $\sequent{\contextGraph',\assump'}{\goa'}$ enlarges $\contextGraph$ to $\contextGraph'$, and asserts that if the truth value of $\mathtt{not(brown(cat))}$ is exactly $0$ then the truth value of $\mathtt{brown(cat)}$ is \textit{at least} $0.4$. Of course it is! the value of $\mathtt{brown(cat)}$ is actually $1$, but we have now weakened the schema. 
\end{example}

As this example illustrates, weakening will be useful for contextualising a schema to a problem. Perhaps we need to show that the truth value in the interval $\mathtt{[0.4,1]}$, and we actually have a schema that guarantees a truth value in the interval $\mathtt{[0.5,1]}$. If we can\pagebreak[4] weaken such a schema to fit the goals of the problem, then we will be able to apply it. Of course, this is only useful if weakening actually preserves the property of being a schema, which is not generally true for non-monotonic schemas. In order to show that it is true for some specific kinds of weakenings, we first prove, in the next lemma, that it is true for a restriction of a weakening to its `image'. 

\begin{lemma}\label{lemma:weakening-image}
Let $\sequent{\contextGraph_1,\ldots,\contextGraph_n,\assump}{\goa}$ be a sequent for multi-space system $\multiSpace$ and let $\sequent{\contextGraph_1',\ldots,\linebreak[2]\contextGraph_n',\assump'}{\goa'}$ be a weakening of $\sequent{\contextGraph_1,\ldots,\contextGraph_n,\assump}{\goa}$ with map $\refine$. If $\sequent{\contextGraph_1,\ldots,\contextGraph_n,\assump}{\goa}$ is a schema for $\multiSpace$, then so is $\sequent{r[\contextGraph_1],\ldots,r[\contextGraph_n],\refine[\assump]}{r[\goa]}$.
\end{lemma}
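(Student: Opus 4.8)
The statement restricts the weakening to the image of the weakening map $\refine$, so the key simplification is that on the image everything is honest: $\refine[\contextGraph_i]$ is a \emph{specialisation} of $\contextGraph_i$ (since $\refine|_{(\contextGraph_1,\ldots,\contextGraph_n,\assump)}$ is a reification, its image specialises the domain by Definition~\ref{defn:reification}), and $\refine[\goa]$ is a \emph{generalisation} of $\goa$ away from the shared tokens (since $\refine|_{\goa}$ is a loosening map up to $(\contextGraph_1\cup\cdots\cup\contextGraph_n\cup\assump)\cap\goa$). So the plan is to take an arbitrary instantiatable specialisation function $\specialise\colon (\refine[\contextGraph_1],\ldots,\refine[\contextGraph_n],\refine[\assump]) \to (\hat\contextGraph_1,\ldots,\hat\contextGraph_n,\hat\assump)$ and produce the required extension $\specialise'$ whose codomain's consequent component is an instantiatable specialisation of $\refine[\goa]$. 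The route is: precompose $\specialise$ with the (co)restriction of $\refine$ to land a specialisation of the \emph{original} context/antecedent, invoke the schema hypothesis on $\sequent{\contextGraph_1,\ldots,\contextGraph_n,\assump}{\goa}$ to get an extension there, and then push that extension forward/factor it back through $\refine$ to obtain what we need on the $\refine[-]$ side.

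First I would set up composition carefully. Since $\refine|_{(\contextGraph_1,\ldots,\contextGraph_n,\assump)}$ is a reification, its corestriction to its image, call it $\bar\refine\colon (\contextGraph_1,\ldots,\contextGraph_n,\assump)\to (\refine[\contextGraph_1],\ldots,\refine[\contextGraph_n],\refine[\assump])$, is a specialisation function in $\multiSpace$ (a reification whose image is the whole codomain and which specialises the domain is exactly a specialisation). Then $\specialise\circ\bar\refine\colon (\contextGraph_1,\ldots,\contextGraph_n,\assump)\to(\hat\contextGraph_1,\ldots,\hat\contextGraph_n,\hat\assump)$ is a composite of two specialisations, hence a specialisation, and it is instantiatable because its codomain equals that of $\specialise$, which is instantiatable. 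By the schema hypothesis applied to $\sequent{\contextGraph_1,\ldots,\contextGraph_n,\assump}{\goa}$, there is an instantiatable specialisation $\specialise''\colon (\contextGraph_1,\ldots,\contextGraph_n,\assump\cup\goa)\to(\hat\contextGraph_1,\ldots,\hat\contextGraph_n,\hat\assump\cup\hat\goa)$ extending $\specialise\circ\bar\refine$.

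The remaining work is to convert $\specialise''$ into a map $\specialise'$ with domain $(\refine[\contextGraph_1],\ldots,\refine[\contextGraph_n],\refine[\assump]\cup\refine[\goa])$ that extends $\specialise$ and is still an instantiatable specialisation. On $\refine[\assump]$ and each $\refine[\contextGraph_i]$ we already know $\specialise'$ should be $\specialise$. On $\refine[\goa]$ we must define $\specialise'$ using $\specialise''$ factored through $\refine|_\goa$: for a token $v$ of $\refine[\goa]$, pick a preimage $w$ under $\refine$ and set $\specialise'(v)=\specialise''(w)$. The delicate points --- and where I expect the real obstacle to lie --- are (i) well-definedness of this factoring, which needs that $\refine|_\goa$ being a loosening map (partial \emph{surjective}, generalising types up to the shared tokens) is compatible with $\specialise''$ agreeing on any two preimages, and (ii) checking the \emph{type inequality}: $\specialise'$ must map each token of $\refine[\goa]$ to something of a subtype. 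For tokens of $\refine[\goa]$ coming from the overlap $(\contextGraph_1\cup\cdots\cup\assump)\cap\goa$, $\refine$ preserves types there, so this follows from $\specialise''$ being a specialisation; for the other tokens of $\refine[\goa]$, the type of $v$ is a \emph{supertype} of the type of its preimage $w$ in $\goa$, and the type of $\specialise'(v)=\specialise''(w)$ is a subtype of $w$'s type, hence a subtype of $v$'s type by transitivity of $\leq$. The loosening clause ``up to tokens in $(\contextGraph_1\cup\cdots\cup\contextGraph_n\cup\assump)\cap\goa$'' is exactly what makes this argument go through on the overlap, so I would be careful to track that set explicitly throughout.

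Finally I would verify the bookkeeping: $\specialise'$ is a homomorphism (it agrees with $\specialise''$ up to renaming along $\refine$, and both $\refine$ and $\specialise''$ are homomorphisms, so incidence is preserved), constructor labels are preserved (none of $\refine$ on the $\goa$-part touches constructors beyond loosening, and $\specialise''$ preserves them), $\specialise'$ restricts correctly on each component so that it is a specialisation function in $\multiSpace$ in the sense of the multi-space definition, it genuinely extends $\specialise$ by construction, and its codomain $(\hat\contextGraph_1,\ldots,\hat\contextGraph_n,\hat\assump\cup\hat\goa)$ is instantiatable because $\specialise''$ is instantiatable and $\specialise'$ has the same codomain. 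That establishes that $\sequent{\refine[\contextGraph_1],\ldots,\refine[\contextGraph_n],\refine[\assump]}{\refine[\goa]}$ is a schema. The main obstacle, to reiterate, is the careful handling of the partial/surjective loosening map $\refine|_\goa$ when transporting $\specialise''$ back to the $\refine[-]$ side --- in particular ensuring the type conditions hold both on and off the shared-token overlap.
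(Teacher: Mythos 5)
Your proposal is correct and follows essentially the same route as the paper's own proof: precompose the given specialisation with the (co)restriction of $\refine$ to land on the original context and antecedent, invoke the schema hypothesis there, and then transport the resulting extension back along $(\refine|_{\goa})^{-1}$, which the paper likewise treats as a specialisation in the meta-space. Your explicit handling of the type inequalities on and off the shared-token overlap is a slightly more detailed spelling-out of the step the paper dismisses as trivial, and your well-definedness worry is resolved by the fact that a loosening map's inverse is by definition a (reification) function, so preimages are unique.
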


Later, Theorem~\ref{thm:weakening-schema} will build up on Lemma~\ref{lemma:weakening-image} by using the notion of a \textit{refinement}, which both restricts and builds . Theorem~\ref{thm:weakening-schema} is important since it allows us to modify schemas to obtain new schemas. The sufficient conditions are given in Definition~\ref{defn:refinement}. Proofs of both Lemma~\ref{lemma:weakening-image} and Theorem~\ref{thm:weakening-schema} can be found in Appendix~\ref{secApp:IIATS}.

A \textit{refinement} of a schema not only weakens it, but also can augment the consequent $\goa$ to $\goa'$ as long as $\goa'$ does not add anything beyond the assumptions. Importantly, the extra conditions ensure that the refinement of a schema is necessarily a schema.

\begin{definition}\label{defn:refinement}
Let $\sequent{\contextGraph_1,\ldots,\contextGraph_n,\assump}{\goa}$ be a schema and $\sequent{\contextGraph_1',\ldots,\contextGraph_n',\assump'}{\goa'}$ a sequent for multi-space system $\multiSpace$. Then $\sequent{\contextGraph_1',\ldots,\contextGraph_n',\assump'}{\goa'}$  is a \textbf{refinement} of $\sequent{\contextGraph_1,\ldots,\contextGraph_n,\assump}{\goa}$ provided there exists a weakening map, $\refine\colon (\contextGraph_1,\ldots,\contextGraph_{n},\assump\cup \goa) \to (\contextGraph_1',\ldots,\contextGraph_{n}',\assump'\cup \refine[\goa])$, such that
\begin{enumerate}[itemsep = 1pt, topsep = 6pt]
	\item $\sequent{\refine[\contextGraph_1],\ldots,\refine[\contextGraph_n],\refine[\assump]}{r[\goa]}$ is monotonic for $(\contextGraph_1',\ldots,\contextGraph_n',\assump')$ in $\multiSpace$, and
	\item $\goa' \subseteq  \assump'\cup \refine[\goa]$.
\end{enumerate}
We call $\refine$ a \textbf{refinement map}.
\end{definition}

The weakening of Example~\ref{ex:fuzzy-schema} is a refinement provided the multi-space system is built so that monotonicity holds. Moreover, in this example, $\goa'$ equals $\refine[\goa]$, so the condition $\goa' \subseteq \assump' \cup \refine[\goa]$ is met trivially. In the following example we see $\goa$ being augmented to some $\goa'$ that is larger than $\refine[\goa]$.

\begin{example}\label{ex:weakening} Consider the schema $\sequent{\assump}{\goa}$ from Example~\ref{ex:inst-schema}. The sequent $\sequent{\assump'}{\goa'}$, visualised below, is a refinement of $\sequent{\assump}{\goa}$, with refinement map $\refine \colon (\assump,\goa) \to (\assump',\goa')$, where $\refine(v) = w$, $\refine(v_1) = w_1$, $\refine(v_2) = w_2$, and $\refine(v_3) = w_3$.  
	\begin{center}
		\adjustbox{valign = t}{
}
	\end{center}
Monotonicity is true by design of the space of \textsc{Set Algebra}: no instantiatiatable specialisation of $\assump'$ \textit{interferes} with our ability to also instantiatably specialise $\goa'$. Moreover, the condition $\goa' \subseteq \assump' \cup \refine[\goa]$ is clearly satisfied because $\goa'=\assump' \cup \refine[\goa]$.
\end{example}

\begin{theorem}\label{thm:weakening-schema}
Let $\sequent{\contextGraph_1,\ldots,\contextGraph_n,\assump}{\goa}$  be a schema for multi-space system $\multiSpace$ with refinement $\sequent{\contextGraph_1',\ldots,\contextGraph_n',\assump'}{\goa'}$. Then  $\sequent{\contextGraph_1',\ldots,\contextGraph_n',\assump'}{\goa'}$ is a schema for $\multiSpace$.
\end{theorem}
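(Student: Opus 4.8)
The plan is to obtain the result by chaining three facts already available: Lemma~\ref{lemma:weakening-image}, the monotonicity clause hard‑wired into Definition~\ref{defn:refinement}, and Lemma~\ref{lem:InstSchemaSubgraph}. Fix the refinement map $\refine\colon (\contextGraph_1,\ldots,\contextGraph_n,\assump\cup\goa)\to(\contextGraph_1',\ldots,\contextGraph_n',\assump'\cup\refine[\goa])$ witnessing that $\sequent{\contextGraph_1',\ldots,\contextGraph_n',\assump'}{\goa'}$ is a refinement of the given schema $\sequent{\contextGraph_1,\ldots,\contextGraph_n,\assump}{\goa}$. By Definition~\ref{defn:refinement}, $\refine$ is itself a weakening map, so it exhibits $\sequent{\contextGraph_1',\ldots,\contextGraph_n',\assump'}{\refine[\goa]}$ as a weakening of $\sequent{\contextGraph_1,\ldots,\contextGraph_n,\assump}{\goa}$.

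The first step is to apply Lemma~\ref{lemma:weakening-image} to this weakening: since $\sequent{\contextGraph_1,\ldots,\contextGraph_n,\assump}{\goa}$ is a schema, we get that $\sequent{\refine[\contextGraph_1],\ldots,\refine[\contextGraph_n],\refine[\assump]}{\refine[\goa]}$ is a schema for $\multiSpace$. This is exactly the sequent that clause~1 of Definition~\ref{defn:refinement} requires to be monotonic for $(\contextGraph_1',\ldots,\contextGraph_n',\assump')$; unwinding the definition of monotonicity, this hands us that $\sequent{\refine[\contextGraph_1]\cup\contextGraph_1',\ldots,\refine[\contextGraph_n]\cup\contextGraph_n',\refine[\assump]\cup\assump'}{\refine[\goa]}$ is a schema. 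The only reason the preliminary appeal to Lemma~\ref{lemma:weakening-image} is needed at all is to make sense of "monotonic": that notion is only defined when the base sequent is already known to be a schema, so the two steps cannot be swapped.

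The second step is bookkeeping on the context and antecedent. Because the restriction of $\refine$ to $(\contextGraph_1,\ldots,\contextGraph_n,\assump)$ is a reification function in $\multiSpace$, each $\refine|_{\contextGraph_i}$ is a monomorphism into $\contextGraph_i'$ and $\refine|_{\assump}$ a monomorphism into $\assump'$, so $\refine[\contextGraph_i]\subseteq\contextGraph_i'$ and $\refine[\assump]\subseteq\assump'$; hence $\refine[\contextGraph_i]\cup\contextGraph_i'=\contextGraph_i'$ and $\refine[\assump]\cup\assump'=\assump'$. Thus the previous step has shown that $\sequent{\contextGraph_1',\ldots,\contextGraph_n',\assump'}{\refine[\goa]}$ is a schema. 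The third step cuts the consequent down to $\goa'$ using clause~2 of Definition~\ref{defn:refinement}, i.e. $\goa'\subseteq\assump'\cup\refine[\goa]$: put $H=\goa'\cap\refine[\goa]$ and $B=\goa'\cap\assump'$, which are subgraphs of $\refine[\goa]$ and of $\assump'$ respectively (an arrow in an intersection inherits its endpoints from both graphs, so these intersections are genuine subgraphs, and being subgraphs of the meta‑space pattern graph $\goa'$ they are meta‑space pattern graphs), and note $\goa'=H\cup B$ since every vertex and arrow of $\goa'$ lies in $\assump'$ or in $\refine[\goa]$. Applying Lemma~\ref{lem:InstSchemaSubgraph} to the schema $\sequent{\contextGraph_1',\ldots,\contextGraph_n',\assump'}{\refine[\goa]}$ with these $H\subseteq\refine[\goa]$ and $B\subseteq\assump'$ then gives that $\sequent{\contextGraph_1',\ldots,\contextGraph_n',\assump'}{\goa'}$ is a schema for $\multiSpace$.

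Everything here is routine once the definitions are lined up: the graph‑theoretic facts ($\refine[\contextGraph_i]\subseteq\contextGraph_i'$, and the decomposition $\goa'=H\cup B$) are immediate, and the two lemmas do the real work. The one subtlety worth flagging in the writeup — the thing I'd be most careful about — is the order of invocation: the monotonicity hypothesis of a refinement must be fed a sequent that is \emph{already} a schema, and securing that is precisely the job of Lemma~\ref{lemma:weakening-image}. I do not expect any genuine obstacle beyond keeping the multi‑space indices and the distinction between $\goa$, $\refine[\goa]$, and $\goa'$ straight.
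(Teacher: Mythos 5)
Your proof is correct and follows essentially the same route as the paper's: Lemma~\ref{lemma:weakening-image} to get that the $\refine$-image sequent is a schema, then the monotonicity clause of Definition~\ref{defn:refinement} (with the observation that $\refine[\contextGraph_i]\cup\contextGraph_i'=\contextGraph_i'$) to pass to $\sequent{\contextGraph_1',\ldots,\contextGraph_n',\assump'}{\refine[\goa]}$, then the containment $\goa'\subseteq\assump'\cup\refine[\goa]$ to finish. The only cosmetic difference is that you discharge the last step by invoking Lemma~\ref{lem:InstSchemaSubgraph} with the decomposition $\goa'=(\goa'\cap\refine[\goa])\cup(\goa'\cap\assump')$, whereas the paper unwinds the definition of schema directly and restricts the extended specialisation; both are fine.
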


Now we are in a position to define the notion of \textit{application} of a schema to a sequent. Applications are done in either \textit{forward} or \textit{backward} manner. Both cases rely on finding a refinement of the schema where the context of the schema is identical to the context of the sequent. For a forward application we try to refine the schema so that the context and antecedent are identical to the sequent's assumptions, and this will induce a consequent that can be added to the assumptions. For a backward application we try to refine the schema so that the consequent is identical to the goal, and this yields an assumption that will replace the goal of the sequent.

\begin{definition}\label{defn:schema-application}
Let $\sequent{\contextGraph_1,\ldots,\contextGraph_n,\assump}{\goa}$  be a schema and $\sequent{\contextGraph_1',\ldots,\contextGraph_n',\assump'}{\goa'}$   be a sequent for multi-space system $\multiSpace$. An \textbf{application} of $\sequent{\contextGraph_1,\ldots,\contextGraph_n,\assump}{\goa}$ to $\sequent{\contextGraph_1',\ldots,\contextGraph_n',\assump'}{\goa'}$ is either
	\begin{enumerate}[itemsep = 1pt, topsep = 6pt]
		\item a sequent of the form $\sequent{\contextGraph_1',\ldots,\contextGraph_n',\assump'\cup \theDelta}{\goa'}$  where $\sequent{\contextGraph_1',\ldots,\contextGraph_n',\assump'}{\theDelta}$  is a refinement of $\sequent{\contextGraph_1,\ldots,\contextGraph_n,\assump}{\goa}$, or
		\item a sequent of the form $\sequent{\contextGraph_1',\ldots,\contextGraph_n',\assump'}{\theDelta}$  where $\sequent{\contextGraph_1',\ldots,\contextGraph_n',\assump'\cup \theDelta}{\goa'}$ is a refinement of $\sequent{\contextGraph_1,\ldots,\contextGraph_n,\assump}{\goa}$.
	\end{enumerate}
Then $\sequent{\contextGraph_1',\ldots,\contextGraph_n',\assump'\cup \theDelta}{\goa'}$ and $\sequent{\contextGraph_1',\ldots,\contextGraph_n',\assump'}{\theDelta}$ are, respectively, \textbf{\boldmath$\theDelta$-forward} and \textbf{\boldmath$\theDelta$-backward} applications of $\sequent{\contextGraph_1,\ldots,\contextGraph_n,\assump}{\goa}$  to $\sequent{\contextGraph_1',\ldots,\contextGraph_n',\assump'}{\goa'}$. 
\end{definition}

\begin{example}\label{ex:schema-application} Let $\multiSpace = (\mathcal{C},\mathcal{D},\mathcal{G})$ be a multi-space where $\mathcal{C}$ encodes \textsc{Set Algebra}, $\mathcal{D}$ encodes \textsc{Euler Diagrams} and $\mathcal{G}$ encodes relations across and within the spaces. Consider the sequent $\sequent{\contextGraph_1',\contextGraph_2',\assump'}{\goa'}$ where $\assump'$ is the empty graph, as illustrated below. 
	\begin{center}\hspace*{-2cm}

	\end{center}
The schema $\sequent{\contextGraph_1,\contextGraph_2,\assump}{\goa}$ from Example~\ref{ex:depict-conj} tells us that we can show that a diagram depicts a conjunction if it depicts both of the conjuncts. We will apply it backward to $\sequent{\contextGraph_1',\contextGraph_2',\assump'}{\goa'}$. For that purpose we need to refine it to suit our sequent. In other words,\pagebreak[4] we need to find a $\theDelta$ such that $\sequent{\contextGraph_1',\contextGraph_2',\assump'\cup \theDelta}{\goa'}$ is a refinement of $\sequent{\contextGraph_1,\contextGraph_2,\assump}{\goa}$. Such a refinement is illustrated below:
\begin{center}
	\hspace*{-0cm}

	\end{center}
Intuitively, we were able to replace the goal $\goa'$ with two subgoals, captured by $\theDelta$, because the schema could be refined for the task.
\end{example}

The next theorem states that if we apply a schema to a sequent, $\sequent{\contextGraph_1',\ldots,\contextGraph_n',\assump'}{\goa'}$, and we obtain a schema, then $\sequent{\contextGraph_1',\ldots,\contextGraph_n',\assump'}{\goa'}$ must be a schema itself. Its proof can be found in Appendix~\ref{secApp:IIATS}.

\begin{theorem}\label{thm:schema-application-schema}
Let $\sequent{\contextGraph_1,\ldots,\contextGraph_n,\assump}{\goa}$  be a schema and $\sequent{\contextGraph_1',\ldots,\contextGraph_n',\assump'}{\goa'}$   be a sequent for multi-space system $\multiSpace$.  If $\sequent{\contextGraph_1',\ldots,\contextGraph_n',\assump''}{\goa''}$ is an application of $\sequent{\contextGraph_1,\ldots,\contextGraph_n,\assump}{\goa}$ to $\sequent{\contextGraph_1',\ldots,\contextGraph_n',\assump'}{\goa'}$ and a schema for $\multiSpace$ then $\sequent{\contextGraph_1',\ldots,\contextGraph_n',\assump'}{\goa'}$ is also a schema for $\multiSpace$.
\end{theorem}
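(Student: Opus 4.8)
The plan is to reduce both clauses of Definition~\ref{defn:schema-application} to a single ``cut''-style claim about schemas and then to prove that claim by composing instantiatable specialisations. First I would unfold the two possibilities for the application $\sequent{\contextGraph_1',\ldots,\contextGraph_n',\assump''}{\goa''}$. In the forward case we have $\assump''=\assump'\cup\theDelta$ and $\goa''=\goa'$, so the hypothesis says $\sequent{\contextGraph_1',\ldots,\contextGraph_n',\assump'\cup\theDelta}{\goa'}$ is a schema, while $\sequent{\contextGraph_1',\ldots,\contextGraph_n',\assump'}{\theDelta}$ is by definition a refinement of the schema $\sequent{\contextGraph_1,\ldots,\contextGraph_n,\assump}{\goa}$ and hence a schema by Theorem~\ref{thm:weakening-schema}. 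In the backward case we have $\assump''=\assump'$ and $\goa''=\theDelta$, so the hypothesis says $\sequent{\contextGraph_1',\ldots,\contextGraph_n',\assump'}{\theDelta}$ is a schema, while $\sequent{\contextGraph_1',\ldots,\contextGraph_n',\assump'\cup\theDelta}{\goa'}$ is a refinement of the given schema and hence a schema by Theorem~\ref{thm:weakening-schema}. Thus in either case I reach the same intermediate situation: both $\sequent{\contextGraph_1',\ldots,\contextGraph_n',\assump'}{\theDelta}$ and $\sequent{\contextGraph_1',\ldots,\contextGraph_n',\assump'\cup\theDelta}{\goa'}$ are schemas, and it remains to conclude that $\sequent{\contextGraph_1',\ldots,\contextGraph_n',\assump'}{\goa'}$ is a schema.

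For this last step I would argue straight from Definition~\ref{defn:instantiation-schema}. Let $\specialise$ be an instantiatable specialisation function on $(\contextGraph_1',\ldots,\contextGraph_n',\assump')$. Because $\sequent{\contextGraph_1',\ldots,\contextGraph_n',\assump'}{\theDelta}$ is a schema, $\specialise$ extends to an instantiatable specialisation $\specialise_1$ of $(\contextGraph_1',\ldots,\contextGraph_n',\assump'\cup\theDelta)$; then, since $\specialise_1$ is an instantiatable specialisation function on the assumptions of the schema $\sequent{\contextGraph_1',\ldots,\contextGraph_n',\assump'\cup\theDelta}{\goa'}$, it extends to an instantiatable specialisation $\specialise_2$ of $(\contextGraph_1',\ldots,\contextGraph_n',\assump'\cup\theDelta\cup\goa')$. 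The witness I would propose for $\sequent{\contextGraph_1',\ldots,\contextGraph_n',\assump'}{\goa'}$ being a schema is the restriction of $\specialise_2$ to the sub-pattern-graph $(\contextGraph_1',\ldots,\contextGraph_n',\assump'\cup\goa')$: it extends $\specialise$ (as $\specialise_2$ extends $\specialise_1$ which extends $\specialise$), and it is a specialisation function in $\multiSpace$ because a restriction of a component-wise isomorphism that respects the subtype order is again one. The only point requiring care is that the codomain of this restriction is instantiatable.

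To close that gap I would record the auxiliary fact that a sub-pattern-graph of an instantiatable pattern graph of $\multiSpace$ is itself instantiatable: if a construction graph $(g_1,\ldots,g_n,g)$ specialises a pattern graph via a component-wise isomorphism $\sigma$, then for any sub-pattern-graph the restriction of $\sigma$ is again a component-wise isomorphism onto its image, which is a structure graph (being isomorphic to one), lies inside the realm, and respects types, hence is a construction graph specialising that sub-pattern-graph. Applying this to the image of $\specialise_2$ restricted to $(\contextGraph_1',\ldots,\contextGraph_n',\assump'\cup\goa')$ — a sub-pattern-graph of the instantiatable codomain of $\specialise_2$ — shows that this image is instantiatable, so the restriction is indeed an instantiatable specialisation extending $\specialise$, and therefore $\sequent{\contextGraph_1',\ldots,\contextGraph_n',\assump'}{\goa'}$ is a schema. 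I expect the main obstacle to be precisely this auxiliary fact, namely verifying that the restricted image is a bona fide construction graph for $\multiSpace$ (a structure graph whose constructor-vertex arities remain correct and which still sits in the realm), rather than the composition argument, which is routine once the auxiliary fact is available.
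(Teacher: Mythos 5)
Your proposal is correct and follows essentially the same route as the paper: the paper splits into two lemmas (one per application direction), in each case observing that one of $\sequent{\contextGraph_1',\ldots,\contextGraph_n',\assump'}{\theDelta}$ and $\sequent{\contextGraph_1',\ldots,\contextGraph_n',\assump'\cup\theDelta}{\goa'}$ is a schema by hypothesis and the other by Theorem~\ref{thm:weakening-schema}, then chaining the two extensions of $\specialise$ and restricting the domain to $(\contextGraph_1',\ldots,\contextGraph_n',\assump'\cup\goa')$, exactly as you do. Your extra care about the instantiatability of the restricted codomain only makes explicit a step the paper leaves implicit.
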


Ultimately, the purpose of schema applications is to derive whether the goal pattern, $\goa$, drawn from the meta-space, in a sequent can be instantiated, along with the assumptions, $(\contextGraph_1,\ldots,\contextGraph_n,\assump)$. The sequents that result from the iterative application of schemas will be called \textit{valid}. As we will show in Theorem~\ref{thm:valid-sequents}, whose proof is in Appendix~\ref{secApp:IIATS}, valid sequents are themselves schemas. The proof uses an induction approach, with Theorem~\ref{thm:schema-application-schema} covering the inductive step.

\begin{definition}\label{defn:valid-instantiation-sequent}
Let $\mathbb{S}$ be a set of schemas and let $\sequent{\contextGraph_1',\ldots,\contextGraph_n',\assump}{\goa'}$ be a sequent for multi-space system $\multiSpace$.  Then $\sequent{\contextGraph_1',\ldots,\contextGraph_n',\assump}{\goa'}$ is \textbf{valid over \boldmath$\mathbb{S}$} provided either
	\begin{enumerate}[itemsep=1pt,topsep=6pt]
		\item $\sequent{\contextGraph_1',\ldots,\contextGraph_n',\assump'}{\goa'}\in \mathbb{S}$, or
		\item there exists $\sequent{\contextGraph_1,\ldots,\contextGraph_n,\assump}{\goa} \in \mathbb{S}$ and a sequent, $\sequent{\contextGraph_1',\ldots,\contextGraph_n',\assump''}{\goa''}$, for $\multiSpace$ such that $\sequent{\contextGraph_1',\ldots,\contextGraph_n',\assump''}{\goa''}$ is valid over $\mathbb{S}$ and it is an application of $\sequent{\contextGraph_1,\ldots,\contextGraph_n,\assump}{\goa}$ to $\sequent{\contextGraph_1',\ldots,\contextGraph_n',\assump'}{\goa'}$.
	\end{enumerate}
A sequent is \textbf{valid} if it is valid for some set, $\mathbb{S}$, of schemas.
\end{definition}

Theorem~\ref{thm:valid-sequents} tells us that we can use schemas to prove that, given an instantiation of the assumptions, $(\contextGraph_1,\ldots,\contextGraph_{n},\assump)$, we can find an instantiation of the assumptions together with the goal, $(\contextGraph_1,\ldots,\contextGraph_{n},\assump\cup \goa)$.

\begin{theorem}\label{thm:valid-sequents}
Let $\mathbb{S}$ be a set of schemas and let $\sequent{\contextGraph_1',\ldots,\contextGraph_n',\assump'}{\goa'}$ be a sequent for multi-space system $\multiSpace$. If $\sequent{\contextGraph_1',\ldots,\contextGraph_n',\assump'}{\goa'}$ is valid over $\mathbb{S}$ then it is also a  schema for $\multiSpace$.
\end{theorem}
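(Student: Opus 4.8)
The statement is proved by induction on the derivation witnessing that $\sequent{\contextGraph_1',\ldots,\contextGraph_n',\assump'}{\goa'}$ is valid over $\mathbb{S}$. Definition~\ref{defn:valid-instantiation-sequent} is an inductive definition: validity over $\mathbb{S}$ is the least class of sequents closed under the two clauses, so every valid sequent has a well-founded derivation built from one base step (clause~1) and finitely many applications of clause~2, each of which strictly decreases the derivation. The plan is therefore a structural induction on this derivation, using Theorem~\ref{thm:schema-application-schema} for the inductive step.

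\textbf{Base case.} Suppose $\sequent{\contextGraph_1',\ldots,\contextGraph_n',\assump'}{\goa'}$ is valid over $\mathbb{S}$ by clause~1, i.e.\ $\sequent{\contextGraph_1',\ldots,\contextGraph_n',\assump'}{\goa'}\in \mathbb{S}$. Since $\mathbb{S}$ is, by hypothesis, a set of schemas for $\multiSpace$, the sequent is a schema for $\multiSpace$, which is exactly what we must show.

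\textbf{Inductive step.} Suppose $\sequent{\contextGraph_1',\ldots,\contextGraph_n',\assump'}{\goa'}$ is valid over $\mathbb{S}$ by clause~2. Then there is a schema $\sequent{\contextGraph_1,\ldots,\contextGraph_n,\assump}{\goa}\in \mathbb{S}$ and a sequent $\sequent{\contextGraph_1',\ldots,\contextGraph_n',\assump''}{\goa''}$ that (i)~is valid over $\mathbb{S}$ by a strictly shorter derivation, and (ii)~is an application of $\sequent{\contextGraph_1,\ldots,\contextGraph_n,\assump}{\goa}$ to $\sequent{\contextGraph_1',\ldots,\contextGraph_n',\assump'}{\goa'}$. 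By the induction hypothesis applied to (i), $\sequent{\contextGraph_1',\ldots,\contextGraph_n',\assump''}{\goa''}$ is a schema for $\multiSpace$. Now invoke Theorem~\ref{thm:schema-application-schema} with this schema in the role of the ``application'' sequent: since $\sequent{\contextGraph_1',\ldots,\contextGraph_n',\assump''}{\goa''}$ is both an application of the schema $\sequent{\contextGraph_1,\ldots,\contextGraph_n,\assump}{\goa}$ to $\sequent{\contextGraph_1',\ldots,\contextGraph_n',\assump'}{\goa'}$ and itself a schema, the theorem yields that $\sequent{\contextGraph_1',\ldots,\contextGraph_n',\assump'}{\goa'}$ is a schema for $\multiSpace$. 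This completes the induction, and hence the proof.

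\textbf{Main obstacle.} There is essentially no obstacle at this level: the genuine content — that a schema application ``pulls back'' the schema property from the conclusion to the premise sequent, whether applied in forward ($\theDelta$-forward) or backward ($\theDelta$-backward) mode — has already been isolated in Theorem~\ref{thm:schema-application-schema} (itself built on Theorem~\ref{thm:weakening-schema} and the monotonicity/refinement machinery). The only point requiring a word of care here is the well-foundedness of the recursion in Definition~\ref{defn:valid-instantiation-sequent}, which is what licenses induction on the derivation; this is automatic because validity over $\mathbb{S}$ is defined as an inductive (least-fixed-point) class rather than by an arbitrary recursive reference.
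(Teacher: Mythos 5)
Your proof is correct and follows essentially the same route as the paper's: induction on the depth of the validity derivation, with the base case handled by the assumption that $\mathbb{S}$ consists of schemas and the inductive step discharged by Theorem~\ref{thm:schema-application-schema}. The remark on well-foundedness of the recursion in Definition~\ref{defn:valid-instantiation-sequent} is a reasonable extra word of care but adds nothing beyond what the paper's argument already implicitly relies on.
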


\begin{theorem}\label{cor:instantiation-of-inst-seq}
Let $\sequent{\contextGraph_1,\ldots,\contextGraph_n,\assump}{\goa}$ be a valid sequent for multi-space system $\multiSpace$. If $(\contextGraph_1,\ldots,\contextGraph_n,\assump)$ is instantiatable in $\multiSpace$ then so is $(\contextGraph_1,\ldots,\contextGraph_{n},\assump\cup \goa)$.
\end{theorem}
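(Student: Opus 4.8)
The plan is to reduce the statement immediately to the definition of a schema via Theorem~\ref{thm:valid-sequents}, and then to unwind the definitions of \emph{instantiatable} and \emph{instantiatable specialisation function}. First I would apply Theorem~\ref{thm:valid-sequents}: since $\sequent{\contextGraph_1,\ldots,\contextGraph_n,\assump}{\goa}$ is valid (over some set $\mathbb{S}$ of schemas), it is itself a schema for $\multiSpace$. So it suffices to prove the implication for an arbitrary schema $\sequent{\contextGraph_1,\ldots,\contextGraph_n,\assump}{\goa}$.

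Next I would use the hypothesis that $(\contextGraph_1,\ldots,\contextGraph_n,\assump)$ is instantiatable in $\multiSpace$: by definition there is a construction graph $(g_1,\ldots,g_n,g_\assump)$ that belongs to $\multiSpace$ together with a specialisation function $\specialise\colon (\contextGraph_1,\ldots,\contextGraph_n,\assump)\to(g_1,\ldots,g_n,g_\assump)$. The key small observation is that $\specialise$ is in fact an \emph{instantiatable} specialisation function, because its codomain $(g_1,\ldots,g_n,g_\assump)$ is a construction graph belonging to $\multiSpace$, hence trivially instantiatable in $\multiSpace$ (the identity homomorphism is a specialisation of it by itself). Feeding $\specialise$ into the definition of a schema (Definition~\ref{defn:instantiation-schema}) then yields an instantiatable specialisation $\specialise'\colon (\contextGraph_1,\ldots,\contextGraph_n,\assump\cup\goa)\to(\contextGraph_1',\ldots,\contextGraph_n',\assump'\cup\goa')$ extending $\specialise$, where by \emph{instantiatable} the codomain $(\contextGraph_1',\ldots,\contextGraph_n',\assump'\cup\goa')$ is instantiatable in $\multiSpace$: there is a construction graph $(h_1,\ldots,h_n,h)$ belonging to $\multiSpace$ and a specialisation function from $(\contextGraph_1',\ldots,\contextGraph_n',\assump'\cup\goa')$ to it.

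Finally I would compose: the composite of $\specialise'$ with that last specialisation function is a homomorphism $(\contextGraph_1,\ldots,\contextGraph_n,\assump\cup\goa)\to(h_1,\ldots,h_n,h)$ whose restrictions to each $\contextGraph_i$ and to $\assump\cup\goa$ are specialisations — this uses that specialisations compose (composition of isomorphisms is an isomorphism, constructor labels are preserved through both, and the subtype order is transitive, so the type of the image is a subtype of the type along the way and hence a subtype of the original type). Since $(h_1,\ldots,h_n,h)$ belongs to $\multiSpace$, this exhibits a construction graph specialising $(\contextGraph_1,\ldots,\contextGraph_n,\assump\cup\goa)$, i.e.\ $(\contextGraph_1,\ldots,\contextGraph_n,\assump\cup\goa)$ is instantiatable in $\multiSpace$, as required.

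There is no serious obstacle here: the statement is essentially a corollary, and the only things to be careful about are (i) recording the triviality that a construction graph instantiates itself, so that the given instantiation furnishes a legitimate \emph{instantiatable} specialisation function to plug into Definition~\ref{defn:instantiation-schema}, and (ii) the routine bookkeeping that the composition of specialisation functions (across the tuple) is again a specialisation function, so that composing $\specialise'$ with the witness of instantiatability of its codomain gives the desired construction graph. One should also note in passing that $\assump\cup\goa$ is a bona fide pattern graph for $\goalSpace$ (so $(\contextGraph_1,\ldots,\contextGraph_n,\assump\cup\goa)$ really is a pattern graph for $\multiSpace$), which holds under the standing convention that all pattern graphs are drawn from a fixed pattern system for $\multiSpace$.
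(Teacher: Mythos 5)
Your proposal is correct and follows essentially the same route the paper takes: it treats the statement as a corollary of Theorem~\ref{thm:valid-sequents}, observes that an instantiation is itself an instantiatable specialisation function, applies the definition of a schema, and then composes with a witness of the codomain's instantiatability — exactly the argument sketched in the paper's discussion preceding the theorem. Your write-up is in fact more explicit than the paper's (which omits a formal proof), but there is no difference in approach.
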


Example~\ref{appendixSec4:example}, found in Appendix~\ref{secApp:IIATS}, shows a sequence of schema applications in order to derive the validity of a given sequent.

\subsection{Instantiation: determining pattern instantiatability}\label{sec:instSchema}

Here we address the problem of how to determine if a pattern graph in a construction space, $\mathcal{C}=(T,C,G)$, can be instantiated. Recall that the realm, $G$, of $\mathcal{C}$ is a graph that abides by the constraints set out by the type system, $T$, and the constructor specification, $C$. That is, whenever a vertex in the realm, $G$, is labelled by a constructor, $c$, its input and output vertices must be labelled by subtypes of those specified by the signature of $c$. However, not every graph that abides by the constraints lives in $G$. This is simply another way of saying that not every pattern graph, $\contextGraph$, for $\mathcal{C}$ can be instantiated in $\mathcal{C}$. Moreover, though a good modelling principle is that it should be \textit{easy} to decide whether $\contextGraph$ can be instantiated, this is not necessarily the case. Here we state the instantiation problem:

\paragraph{Problem} Given a pattern graph, $\goa$, for space $\mathcal{C}$, is it possible to instantiate $\goa$ in $\mathcal{C}$?

\paragraph{Solution} If we determine that $\sequent{\ \ }{\goa}$ is a schema for multi-space system $\multiSpace=(\mathcal{C})$ then $\goa$ can be instantiated in $\mathcal{C}$.\vskip2ex

Now we can look back at the original instantiation problem proposed in Section~\ref{sec:sequentsAndSchemas}. We can solve using schemas from Examples~\ref{ex:euler-singletons}, \ref{ex:addCurve-bottom-up}, and~\ref{ex:addCurve-top-down}, presented next.

\begin{example}[Singleton instantiation]\label{ex:euler-singletons} For any label, $\eta$, the sequents $\sequent{\ \ }{\goa_1}$, $\sequent{\ \ }{\goa_2}$,  $\sequent{\ \ }{\goa_3}$ and $\sequent{\ \ }{\goa_4}$, pictured below, are schemas for unary multi-space system $\mathcal{D}$.
	\begin{center}

	\end{center}
\end{example}

\begin{example}[Bottom-up instantiation of an \texttt{addCurve} configuration]\label{ex:addCurve-bottom-up} Provided that \texttt{addCurve} takes as inputs: a diagram of type $\mathtt{\{x_1,\ldots,x_k,y_1,\ldots,y_m,z_1,\ldots,z_n\}} \leq \mathtt{diagram}$, a label $\eta \leq \mathtt{label}$, an \textit{in} region $\mathtt{\{x_1,\ldots,x_k\}} \leq \mathtt{region}$, and an \textit{out} region $\mathtt{\{z_1,\ldots,z_n\}} \leq \mathtt{region}$, where $\mathtt{\eta}$ does not appear in $\mathtt{\{x_1,\ldots,x_k,y_1,\ldots,y_m,z_1,\ldots,z_n\}}$; that is, if $\mathtt{\eta}$ is not a\pagebreak[4] label in any of the words, \textit{then} we can infer the output. This is captured by the family of schemas $\sequent{\assump}{\goa}$, pictured below, for unary multi-space system $\mathcal{D}$:
	\begin{center}
		\begin{tikzpicture}[construction]
			\node[typeN = {$\mathtt{\{\eta x_1,\ldots,\eta x_k,\eta y_1,\ldots,\eta y_m,y_1,\ldots,y_m,z_1,\ldots,z_n\}}$}] (t) {};
			\node[constructor = {\texttt{addCurve}}, below = 0.4cm of t] (c) {};
			\node[typepos = {$\mathtt{\{x_1,\ldots,x_k,y_1,\ldots,y_m,z_1,\ldots,z_n\}}$}{-170}{1.57cm},below left = 0.4cm and 0.7cm of c] (t1) {};
			\node[typepos = {$\mathtt{\eta}$}{-90}{0.15cm},below left = 0.6cm and -0.1cm of c] (t2) {};
			\node[typepos = {$\mathtt{\{x_1,\ldots,x_k\}}$}{-45}{0.18cm},below right = 0.6cm and 0.4cm of c] (t3) {};
			\node[typeE = {$\mathtt{\{z_1,\ldots,z_n\}}$},below right = 0.4cm and 1.2cm of c] (t4) {};
			\path[->] (c) edge (t) 
			(t1) edge[out = 85, in = -170] node[index label]{1} (c) 
			(t2) edge[out = 90, in = -120] node[index label]{2} (c) 
			(t3) edge[out = 105, in = -60] node[index label]{3} (c) 
			(t4) edge[out = 120, in = -5] node[index label]{4} (c);
			\coordinate[above left = 0.15cm and 3.75cm of t1] (x1);
			\coordinate[above right = 0.15cm and 1.65cm of t4] (x2);
			\coordinate[below right = 0.55cm and 1.65cm of t4] (x3);
			\coordinate[below left = 0.55cm and 3.75cm of t1] (x4);
			\draw[rounded corners = 5, very thick, darkblue, draw opacity = 0.6] (x1) --node[xshift = -2.5cm, yshift = 0.2cm]{\large$\assump$} (x2) -- (x3) -- (x4) -- cycle;
			\coordinate[above left = 0.45cm and 3.1cm of t] (y1);
			\coordinate[above right = 0.45cm and 3.1cm of t] (y2);
			\coordinate[above right = 0.2cm and 1.75cm of t4] (y3);
			\coordinate[below right = 0.65cm and 1.75cm of t4] (y4);
			\coordinate[below left = 0.65cm and 3.85cm of t1] (y5);
			\coordinate[above left = 0.2cm and 3.85cm of t1] (y6);
			\draw[rounded corners = 7, very thick, darkred, draw opacity = 0.7] (y1) -- (y2) -- (y3) -- (y4) -- (y5) -- (y6) --node[xshift = -0.15cm, yshift = 0.2cm]{\large$\goa$} cycle;
		\end{tikzpicture}
	\end{center}
	Note that in the output, with type $\mathtt{\{\eta x_1,\ldots,\eta x_k,\eta y_1,\ldots,\eta y_m,y_1,\ldots,y_m,z_1,\ldots,z_n\}}$, the curve labelled by $\mathtt{\eta}$:
	\begin{itemize}[itemsep = 0pt, topsep = 4pt]
		\item \textit{contains} every zone, $\mathtt{x_i}$, of the \textit{in} region,
		\item \textit{excludes} every zone, $\mathtt{z_i}$, of the \textit{out} region, and 
		\item \textit{cuts} each zone, $\mathtt{y_i}$, in two: one contained ($\mathtt{\eta y_i}$), and one excluded ($\mathtt{y_i}$).
	\end{itemize} 
	
	If we see the constructor \texttt{addCurve} as a program, this family of schemas represents knowledge about how to  \textit{compute} this program. 
\end{example}

\begin{example}[Top-down instantiation of an \texttt{addCurve} configuration]\label{ex:addCurve-top-down} Provided a diagram of type $\mathtt{\{\eta x_1,\ldots,\eta x_k,\eta y_1,\ldots,\eta y_m,y_1,\ldots,y_m,z_1,\ldots,z_n\}} \leq \mathtt{diagram}$, where $\mathtt{\eta}$ is a label that does not appear in either $\mathtt{x_1,\ldots,x_k,y_1,\ldots,y_m,z_1,\ldots,z_n}$ then we can decompose $\mathtt{\{\eta x_1,\ldots,\eta x_k,}\linebreak[2]\mathtt{\eta y_1,\ldots,\eta y_m,y_1,\ldots,y_m,z_1,\ldots,z_n\}}$ by removing $\mathtt{\eta}$. This is captured by the family of schemas $\sequent{\assump}{\goa}$, pictured below, for unary multi-space system $\mathcal{D}$:
	\begin{center}
}
\caption{Solving an instantiation problem with forward-applications.}\label{instantiation-steps}
\end{figure}

\medskip
The following theorem is simply a corollary of Theorem~\ref{cor:instantiation-of-inst-seq}.

\begin{theorem}
	Let $\mathbb{S}$ be a set of schemas and let $\sequent{\ }{\goa}$ be a sequent for construction space $\mathcal{C}$ that is valid over $\mathbb{S}$. Then $\goa$ can be instantiated in $\mathcal{C}$.
\end{theorem}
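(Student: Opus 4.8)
The plan is to obtain this as an immediate consequence of Theorem~\ref{cor:instantiation-of-inst-seq}, once we recognise the construction space $\mathcal{C}$ as a (unary) multi-space system. Recall from the discussion following Definition~\ref{defn:multiSpace} that $\mathcal{C}$ may be identified with the multi-space system $\multiSpace = (\mathcal{C})$, taking $\goalSpace = \mathcal{C}$ as its meta-space; under this identification, instantiatability in $\multiSpace$ coincides with instantiatability in $\mathcal{C}$, and $\sequent{\ }{\goa}$ is a sequent for $\multiSpace$.

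First I would note that, since $\mathbb{S}$ is a set of schemas and $\sequent{\ }{\goa}$ is valid over $\mathbb{S}$, the sequent is \emph{valid} in the sense of Definition~\ref{defn:valid-instantiation-sequent} (it is valid for some set of schemas, namely $\mathbb{S}$ itself). Hence Theorem~\ref{cor:instantiation-of-inst-seq} applies to $\sequent{\ }{\goa}$ regarded as a valid sequent for $\multiSpace = (\mathcal{C})$. Since the context and antecedent of $\sequent{\ }{\goa}$ are empty, the "assumptions" tuple of pattern graphs here is just the empty pattern graph; I would then check the (routine) fact that the empty pattern graph is instantiatable in $\multiSpace$, witnessed by the empty construction graph, which is a subgraph of any realm and vacuously specialises the empty pattern graph.

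Applying Theorem~\ref{cor:instantiation-of-inst-seq} with these instantiatable assumptions then gives that the assumptions together with the consequent, i.e. $\varnothing \cup \goa = \goa$, are instantiatable in $\multiSpace$. Unwinding the identification of $\multiSpace$ with $\mathcal{C}$ yields precisely the conclusion: $\goa$ can be instantiated in $\mathcal{C}$.

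I do not expect any genuine obstacle. The only care-points are cosmetic: observing that an empty pattern graph is always instantiatable, and translating between the unary-multi-space and plain-construction-space phrasings of instantiatability. All the substantive content — that validity over $\mathbb{S}$ implies the sequent is a schema (Theorem~\ref{thm:valid-sequents}), and that being a schema transports instantiatability of the assumptions to instantiatability of the assumptions-plus-goal (Theorem~\ref{cor:instantiation-of-inst-seq}) — has already been established.
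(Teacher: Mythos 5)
Your proof is correct and follows exactly the route the paper takes: the paper states this theorem is simply a corollary of Theorem~\ref{cor:instantiation-of-inst-seq}, obtained by viewing $\mathcal{C}$ as the unary multi-space system $(\mathcal{C})$ and noting that the empty assumptions are trivially instantiatable. Your write-up merely makes explicit the routine details the paper leaves implicit.
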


The significance of this theorem is that we can use schemas iteratively to derive new knowledge about which pattern graphs can be instantiated in $\mathcal{C}$. We have seen that schemas can be \textit{applied} to derive such new knowledge. 

\subsection{Inference: Determining Whether Meta-Properties Hold}\label{sec:inferenceSchema}

%
%


Another important application of schemas is for identifying meta-properties across construction spaces in a multi-space system, $\multiSpace$. That is, we can use schemas to derive whether a specific property of
tokens in $\multiSpace$ holds, provided that this property is expressed in the meta-space, $\goalSpace$. Thus, in this context, schemas are like inference rules for the relations expressible in $\goalSpace$.

\paragraph{Problem} Take pattern graphs $\contextGraph_1,...,\contextGraph_n$ for construction spaces $\mathcal{C}_1,\ldots,\mathcal{C}_n$, respectively. Assume they can be simultaneously instantiated to satisfy some constraint $P$. Can they also be instantiated to satisfy some constraint $Q$?


\paragraph{Solution} Take a meta-space, $\goalSpace$, for which $(\mathcal{C}_1,\ldots,\mathcal{C}_n,\goalSpace)$ is a multi-space system, where the constraints $P$ and $Q$ can be encoded as $\assump$ and $\goa$, respectively. Determine whether $\sequent{\contextGraph_1,...,\contextGraph_n,\assump}{\goa}$ is a schema for $(\mathcal{C}_1,\ldots,\mathcal{C}_n,\goalSpace)$.

\vskip1ex
Schemas for $(\mathcal{C}_1,\ldots,\mathcal{C}_n,\goalSpace)$ can model inference rules if the antecedent and consequent, which are graphs in $\goalSpace$, model relations between the tokens of spaces $\mathcal{C}_1,\ldots,\mathcal{C}_n$. For any schema, $\sequent{\contextGraph_1,\ldots,\contextGraph_n,\assump}{\goa}$, where $\sequent{\contextGraph_1,\ldots,\contextGraph_n, \assump}{\goa}$ is monotonic for \textit{every} graph $(\contextGraph_1',\ldots,\contextGraph_n',\assump')$, we have an analogous concept to logical monotonicity, which is often expected in logical calculi. Given a schema, $\sequent{\contextGraph_1,\ldots,\contextGraph_n, \assump}{\goa}$, if we can instantiate
$(\contextGraph_1,\ldots,\linebreak[2]\contextGraph_n,\linebreak[2] \assump)$ we must, by definition, be able to instantiate $(\contextGraph_1,\ldots,\contextGraph_n, \assump \cup \goa)$. Thus, if $\assump$ encodes a relation, $P$, between the tokens of the graphs $\contextGraph_1,\ldots,\contextGraph_n$, and $\goa$ encodes another relation, $Q$, between the tokens of the graphs $\contextGraph_1,\ldots,\contextGraph_n$, then the schema $\sequent{\contextGraph_1,\ldots,\contextGraph_n,\assump}{\goa}$ encodes the fact that for any instantiation of $\contextGraph_1,\ldots,\contextGraph_n$ that satisfies $P$, we can find an instantiation that also satisfies $Q$. Importantly, by definition, schemas do not need to be monotonic. The implications of this are yet to be studied, but in principle this allows us to encode non-monotonic logics for inference in multi-spaces. 

Now we can look back at the specific inference problem described at the start of Section~\ref{sec:sequentsAndSchemas}. The following examples describe schemas that can be used to solve this problem.

\begin{example}[Observe and depict are dual]\label{ex:depict-observe} If a diagram depicts a formula, then the same formula can be observed from the diagram. Thus $\sequent{\contextGraph_1,\contextGraph_2,\assump}{\goa}$, depicted below, is a schema.
	\begin{center}
		\begin{tikzpicture}[construction]
			\node[typeW = {formula}] (t) {};
			
			\node[typeE = {diagram}, right = 5cm of t] (t') {};
			
			\node[constructorN = {\texttt{observe}}, above right = 0.0cm and 1.4cm of t] (cc) {};
			\path[->,dashed,thick,darkred] (t') edge[out = 165, in = -0, pos = 0.5] node[index label, pos = 0.5]{1} (cc) (cc) edge (t);
			
			\node[constructorS = {\texttt{depict}}, below left = 0.0cm and 1.4cm of t'] (cc') {};
			\path[->,thick,darkblue] (t) edge[out=-15,in=180, pos = 0.5] node[index label]{1} (cc') (cc') edge (t');
			\coordinate[above left = 0.2cm and 1.2cm of t] (a1);
			\coordinate[above right = 0.2cm and 0.2cm of t] (a2);
			\coordinate[below right = 0.2cm and 0.2cm of t] (a3);
			\coordinate[below left = 0.2cm and 1.2cm of t] (a4);
			\draw[rounded corners = 4, draw opacity = 0.3, very thick] (a1) -- (a2) -- (a3) -- (a4) -- node[xshift = -0.3cm, yshift = 0.1cm]{\large$\contextGraph_1$} cycle;
			\coordinate[above left = 0.2cm and 0.2cm of t'] (a1');
			\coordinate[above right = 0.2cm and 1.2cm of t'] (a2');
			\coordinate[below right = 0.2cm and 1.2cm of t'] (a3');
			\coordinate[below left = 0.2cm and 0.2cm of t'] (a4');
			\draw[rounded corners = 4, draw opacity = 0.3, very thick] (a1') -- (a2') -- node[xshift = 0.3cm, yshift = 0.1cm]{\large$\contextGraph_2$} (a3') -- (a4') -- cycle;
			\node[below left = -0.05cm and 0.8cm of cc'] () {\large\textcolor{darkblue}{$\assump$}};
			\node[above right = -0.1cm and 0.8cm of cc] () {\large\textcolor{darkred}{$\goa$}};
		\end{tikzpicture}
	\end{center}
\end{example}

\begin{example}[Depicting disjoint sets]\label{ex:disjoint-depict} To depict an expression of the form $x \cap \eta = \emptyset$, where $x$ is a set expression and $\eta$ is a variable, it suffices to take a diagram, $d$, and add $\eta$ in such a way that the region corresponding to $x$ in $d$ is part of the \textit{out} regions of the curve labelled by $\eta$ (based on~\citeA{stapleton2010inductively}). Thus, the following figure describes a family of schemas, one for each $\eta$, subtype of $\texttt{var}$ in \textsc{Set Algebra}, and subtype of $\texttt{label}$ in \textsc{Euler Diagrams}.
	\begin{center}

\end{center}

\subsection{Transfer Schemas: Abducting Structure}\label{sec:transfer}
As we have seen, schemas are used to determine whether a relation holds between some tokens in a multi-space, \textit{given} a context structure graph where such tokens live. For example, we can use a set of schemas to determine whether a sequent, $\sequent{\contextGraph_1,\contextGraph_2,\assump}{\goa}$, is itself a schema. In this case $\contextGraph_1$ and $\contextGraph_2$ are the given context for the tokens. Now consider the case where we know $\contextGraph_1$ and we want to \textit{find} some yet-unknown $\contextGraph_2$ such that $\sequent{\contextGraph_1,\contextGraph_2,\assump}{\goa}$ holds. For instance, we may be given a set-theoretic formula with structure $\contextGraph_1$, and we may want to find an Euler diagram with structure $\contextGraph_2$ that depicts it, as drawn below\footnote{For illustration purposes we are not showing the structure of $B \cap C = \emptyset$ here.}:
\begin{center}

\end{center}
Thus, the problem in this case is stated roughly as follows: we are given a source graph, $\contextGraph_1$, an assumption graph, $\assump$ (empty in this case), a goal graph, $\goa$, and an \textit{initial} target graph, $\contextGraph_2'$ (with a single vertex) which we want to \textit{reify}\footnote{Recall that reifying means adding context and specialising types.} into some graph $\contextGraph_2$ in such a way that $\sequent{\contextGraph_1,\contextGraph_2,\assump}{\goa}$ is provably a schema. 
As we will see, the \textit{application} of \textit{transfer schemas} will differ from the application of schemas in that transfer schema applications reify a target graph. 




The next problem statement formalises this in a more general way, where an arbitrary number of the given context graphs is allowed reification.
\pagebreak[4]

\paragraph{Problem} Take pattern graphs $\contextGraph_1,...,\contextGraph_n$ for construction spaces $\mathcal{C}_1,\ldots,\mathcal{C}_n$, respectively. Take a set $\sigma \subseteq \{1,\ldots,n\}$ that indicates the target. Is it possible to reify the target graphs, $\contextGraph_i$ with $i\in \sigma$, in such a way that any instantiation of them satisfying $P$ ensures an instantiation of them satisfying $Q$?



\paragraph{Solution} Encode $P$ and $Q$ as pattern graphs $\assump$ and $\goa$ respectively, for some multi-space system $(\mathcal{C}_1,\ldots,\mathcal{C}_n,\goalSpace)$. Determine whether there exists a sequent, $\sequent{\contextGraph_1',...,\contextGraph_n',\assump'}{\goa'}$, and a reification function $\reify \colon (\contextGraph_1,\ldots,\contextGraph_n,\assump \cup \goa) \to (\contextGraph_1',\ldots,\contextGraph_n',\assump' \cup \goa')$ such that
\begin{enumerate}
	\item the restrictions $\reify|_{\assump} \colon \assump \to \assump'$, $\reify|_{\goa} \colon \goa \to \goa'$ and every $\reify|_{\contextGraph_j} \colon \contextGraph_j \to \contextGraph_j'$ for $j \notin \sigma$ are label-preserving isomorphisms (up to the tokens shared with $\bigcup_{i \in \sigma}\contextGraph_i$), and
	\item $\sequent{\contextGraph_1',...,\contextGraph_n',\assump'}{\goa'}$ is a schema.
\end{enumerate}


As we saw, schemas can capture invariants across multi-space systems. Now we define $\sigma$-transfer schema as a schema paired with a set, $\sigma$. This set simply provides information about which dimensions will be targetted for transferring structure to them.

\begin{definition}
	A \textbf{$\sigma$-transfer schema} for $\multiSpace$ is a pair consisting of a schema, $\sequent{\contextGraph_1,\ldots,\contextGraph_n,\assump}{\goa}$, for $\multiSpace$, and a set, $\sigma \subseteq \{1,\ldots,n\}$.
\end{definition}

\textit{Applying} a $\sigma$-transfer schema, $\sequent{\contextGraph_1,\ldots,\contextGraph_n,\assump}{\goa}$, to a sequent, $\sequent{\contextGraph_1',\ldots,\contextGraph_n',\assump'}{\goa'}$, will be similar to applying it as a schema, except that we are allowed to reify the graphs indexed by $\sigma$. 

Let us begin by illustrating with some examples of schemas where $n = 2$, that is, schemas of the form $\sequent{\contextGraph_1,\contextGraph_2,\assump}{\goa}$. Then, we will see that choosing a $\sigma$ means selecting the target to produce structure while applying them.

\begin{example}[Depicting subsets]\label{ex:subset-depict} To depict an expression of the form $x \subseteq \eta$, where $\eta$ is a variable, it suffices to take a diagram, $d$, and make sure that  $\eta$  is drawn so that the region corresponding to $x$ is part of the \textit{in} regions of the curve labelled by $\eta$ (based on~\citeA{stapleton2010inductively}). This means that \textbf{for every $\eta$} the following is a schema\footnote{Hereafter we will draw sequents and schemas of the form $\sequent{\contextGraph_1,\contextGraph_2,\assump}{\goa}$ with $\contextGraph_1$ on the left, $\contextGraph_2$ on the right, $\assump$ with thick blue arrows, and $\goa$ with dashed red arrows.}:
		\begin{center}

		\end{center}
	Using this schema as a $\{2\}$-transfer schema would produce a diagram from a given formula\footnote{Moreover, using this schema as a $\{1\}$-transfer schema can be used for producing a formula for a given diagram. Using it as $\{1,2\}$-transfer schema can be used to produce both the given assumptions and goals. Using it as a $\emptyset$-transfer schema is no different to using it as a schema.}.
\end{example}

\pagebreak[2]

\begin{example}[Matrices and linear maps] As we hinted in Example~\ref{ex:matrices-semantics}, the homomorphism between matrices and linear
	maps can be represented as a schema.
	\begin{center}

	\end{center}
\end{example}

\begin{example}[Cardinality]\label{ex:cardinality} The relation that links every finite set to its cardinality is respected along some
	\textit{corresponding} operations. We can encode these properties as schemas, using constructors \texttt{card} and \texttt{disj} in
	an meta-space between some set-theory space and some number-theory space. For example, the diagram below depicts a schema
	which states that the union of disjoint sets is analogous to addition. The blue graph encodes the antecedent: that $A$ and $B$ are disjoint
	and that $|A|=n$ and $|B|=m$, and the red graph encodes the consequent: that $|A \cup B| = n + m$.
	\begin{center}
		\adjustbox{valign = c}{%
}
	\end{center}
\end{example}

These examples capture homomorphisms which, as we have seen in Section~\ref{sec:inferenceSchema}, can be used for proving whether some specified relation holds. This is ideal to use them as $\sigma$-transfer schemas to produce a desirable structure in the target graphs specified by $\sigma$. As we will see, \textit{structure transfer} is a calculus that uses transfer schemas to \textit{abduct}\footnote{In logic, abduction is an operation by which, given a set of statements (e.g., a set of observations), a sufficient set of axioms (or a model) is produced that entails the set of statements.} the structure of some target graphs in order to ensure that the specified relations hold. Our abductive approach is based on reification of the selected target graphs. Given some sequent, $\sequent{\contextGraph_1,...,\contextGraph_n,\assump}{\goa}$, and a set $\sigma\subseteq \{1,...,n\}$, we will be allowed to reify those graphs indexed by $\sigma$ to ensure that the sequent is valid. The operation that does precisely this is called a \textit{$\sigma$-reification}.

\begin{definition}
	Let $\sequent{\contextGraph_1,...,\contextGraph_n,\assump}{\goa}$ and $\sequent{\contextGraph_1',...,\contextGraph_n',\assump'}{\goa'}$ be sequents for multi-space system $\multiSpace$ and let $\sigma\subseteq \{1,...,n\}$. Then $\sequent{\contextGraph_1',...,\contextGraph_n',\assump'}{\goa'}$ is a \textbf{$\sigma$-reification} of $\sequent{\contextGraph_1,...,\contextGraph_n,\assump}{\goa}$ provided there exists a reification function $\reify\colon (\contextGraph_1,...,\contextGraph_n,\assump \cup \goa) \to (\contextGraph_1',...,\contextGraph_n',\assump' \cup \goa')$ such that $\reify|_{\assump} \colon \assump \to \assump'$, $\reify|_{\goa} \colon \goa \to \goa'$ and every $\reify|_{\contextGraph_j} \colon \contextGraph_j \to \contextGraph_j'$ where $j \notin \sigma$, are a label-preserving isomorphisms up to the tokens of $\bigcup_{i \in \sigma} \contextGraph_i$.
	In this context we call $\{\contextGraph_j : j \notin \sigma\}$ the \textbf{source} and $\{\contextGraph_i : i \in \sigma\}$ the \textbf{target} of the $\sigma$-reification.
\end{definition}


A $\sigma$-reification can only modify the structure and types that appear in the target specified by $\sigma$. Everything else must be mapped through a label-preserving isomorphism, which means neither the structure nor the types can be changed. A $\sigma$-reification is not a deductive operation as the context is specialised and enlarged\footnote{This is analogous to adding constraints and assumptions to a conjecture.}. However, it is constrained to ensure that the speculative structure introduced in the target graphs does not go beyond them into the source graphs or the antecedent.

We can define the notion of application of $\sigma$-transfer schemas, which allows us to reify the target graphs. 
\begin{definition}
	A  \textbf{$\sigma$-transfer schema application} of $\sequent{\contextGraph_1,...,\contextGraph_n,\assump}{\goa}$ to
	$\sequent{\contextGraph_1',...,\contextGraph_n',\assump'}{\goa'}$ is any application of $\sequent{\contextGraph_1,...,\contextGraph_n,\assump}{\goa}$ to a $\sigma$-reification of $\sequent{\contextGraph_1',...,\contextGraph_n',\assump'}{\goa'}$.
\end{definition}

\begin{example}\label{ex:transfer-application} We saw in Example~\ref{ex:schema-application} how applying a schema to a sequent yields a sequent, here renamed $\sequent{\contextGraph_1',\contextGraph_2',\assump'}{\goa'}$, with empty $\assump'$, as visualised below. 
	\begin{center}

	\end{center}
	Note that this sequent is \textit{not} a schema, given that not every instantiation of $\contextGraph_2'$ will depict $A \subseteq B$ and $B \cap C = \emptyset$. However, what we want to know is whether there \textit{exists} an instantiation of $\contextGraph_2'$ that satisfies the conditions encoded by $\goa'$. Transfer schema applications can help us here.
	The schema from Example~\ref{ex:subset-depict} tells us that an Euler diagram depicts a subset expression when it is built in a particular way, so we can use this information to build the diagram\footnote{Note that, given also the graph that constructs $B \cap C = \emptyset$ (which we omit for simplicity), we have a choice whether to use the schema for $A \subseteq B$ or to use the schema from Example~\ref{ex:disjoint-depict}. The choice here is only for demonstration purposes.}. Note that, with the given $\contextGraph_2'$ we would not be able to apply the schema. However, we can apply the schema as a $\{2\}$-transfer schema, allowing us to reify $\contextGraph_2'$. This $\{2\}$-reification \textit{unblocks} the sequent so that we can apply the schema. The figure below shows the
	result of reifying $\contextGraph_2'$ into some graph $\contextGraph_2''$ and applying the schema -- that is, the result of applying it (backwards) as a $\{2\}$-transfer schema. Intuitively, the application of the $\{2\}$-transfer schema gives us sufficient structure to match the context, and reduces the goal of depicting $A \subseteq B$ to the goal of ensuring that the a region corresponding to $A$ is contained in the \textit{in} region of $B$.
	\begin{center}

	\end{center}
\end{example}

Below we define \textit{validity modulo $\sigma$} -- that is, when a sequent is one $\sigma$-reification away from becoming valid. Crucially, we can prove that a sequence of $\sigma$-transfer schema applications can be used to derive whether a sequent is valid modulo $\sigma$.

\begin{definition}
	A sequent, $\sequent{\contextGraph_1,...,\contextGraph_n,\assump}{\goa}$, is \textbf{valid modulo $\sigma$} provided there exists a valid sequent, $\sequent{\contextGraph_1',...,\contextGraph_n',\assump'}{\goa'}$, that is a $\sigma$-reification of $\sequent{\contextGraph_1,...,\contextGraph_n,\assump}{\goa}$.
\end{definition}

The next theorem establishes that if we $\sigma$-apply transfer schemas sequentially to a sequent and reach a valid sequent this means that the original sequent is valid modulo $\sigma$.
\begin{theorem}\label{thm:transfer-sequence-valid}
	Let $\sequent{\contextGraph_1,...,\contextGraph_n,\assump}{\goa}$ be a sequent for $\multiSpace$, let $\sigma \subseteq \{1,\ldots,n\}$, and let $\mathbb{T}$ be a set of $\sigma$-transfer schemas. Assume we apply these schemas sequentially, starting with $\sequent{\contextGraph_1,...,\contextGraph_n,\assump}{\goa}$ and ending with $\sequent{\contextGraph_1',...,\contextGraph_n',\assump'}{\goa'}$. If all schemas in $\mathbb{T}$ are monotonic for $(\contextGraph_1',...,\contextGraph_n',\assump')$ and $\sequent{\contextGraph_1',...,\contextGraph_n',\assump'}{\goa'}$ is a valid sequent, then $\sequent{\contextGraph_1,...,\contextGraph_n,\assump}{\goa}$ is valid modulo $\sigma$.
\end{theorem}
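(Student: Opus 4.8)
The plan is to prove the statement by induction on the number of $\sigma$-transfer schema applications used to reach $\sequent{\contextGraph_1',\ldots,\contextGraph_n',\assump'}{\goa'}$ from $\sequent{\contextGraph_1,\ldots,\contextGraph_n,\assump}{\goa}$, keeping track at each stage of the accumulated $\sigma$-reification. The key observation is that each individual $\sigma$-transfer schema application factors as a $\sigma$-reification followed by an ordinary schema application (this is exactly Definition of $\sigma$-transfer schema application), so the composite effect of the whole sequence is a single $\sigma$-reification followed by a sequence of ordinary schema applications. Thus I would first establish a composition lemma: the composite of finitely many $\sigma$-reifications is again a $\sigma$-reification. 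This is straightforward since reification functions (monomorphisms whose image specialises the domain) compose, label-preserving isomorphisms compose, and the ``up to the tokens of $\bigcup_{i\in\sigma}\contextGraph_i$'' caveat is preserved under composition because the target token set only grows.

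Concretely, first I would set up the induction. The base case (zero applications) is immediate: $\sequent{\contextGraph_1',\ldots,\contextGraph_n',\assump'}{\goa'}$ equals $\sequent{\contextGraph_1,\ldots,\contextGraph_n,\assump}{\goa}$, which is valid by hypothesis, and the identity is trivially a $\sigma$-reification, so the original sequent is valid modulo $\sigma$. For the inductive step, suppose the sequence has length $k+1$; write the penultimate sequent as $\sequent{\contextGraph_1'',\ldots,\contextGraph_n'',\assump''}{\goa''}$, obtained from the original by $k$ applications, and the last step takes it to $\sequent{\contextGraph_1',\ldots,\contextGraph_n',\assump'}{\goa'}$. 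By definition of $\sigma$-transfer schema application, there is a $\sigma$-reification of $\sequent{\contextGraph_1'',\ldots,\contextGraph_n'',\assump''}{\goa''}$ — call it $\sequent{\hat\contextGraph_1,\ldots,\hat\contextGraph_n,\hat\assump}{\hat\goa}$ — and an ordinary schema application of some $T\in\mathbb{T}$ to that $\sigma$-reification yielding $\sequent{\contextGraph_1',\ldots,\contextGraph_n',\assump'}{\goa'}$.

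Now the main work, and the main obstacle: I must promote the conclusion ``$\sequent{\contextGraph_1',\ldots,\contextGraph_n',\assump'}{\goa'}$ is valid'' back through the schema application and the $\sigma$-reification to a conclusion about $\sequent{\contextGraph_1'',\ldots,\contextGraph_n'',\assump''}{\goa''}$. Since $\sequent{\contextGraph_1',\ldots,\contextGraph_n',\assump'}{\goa'}$ is valid, by Theorem~\ref{thm:valid-sequents} it is a schema. The schema application of $T$ to $\sequent{\hat\contextGraph_1,\ldots,\hat\contextGraph_n,\hat\assump}{\hat\goa}$ produced a schema, so by Theorem~\ref{thm:schema-application-schema}, $\sequent{\hat\contextGraph_1,\ldots,\hat\contextGraph_n,\hat\assump}{\hat\goa}$ is itself a schema — and hence, trivially, a valid sequent (it lies in any set of schemas containing it). This is where the monotonicity hypothesis on $\mathbb{T}$ is needed: to invoke the schema-application machinery at each stage (in particular for the ``refinement'' conditions underlying Definition~\ref{defn:schema-application} and Theorem~\ref{thm:schema-application-schema} to apply to the context $(\contextGraph_1',\ldots,\contextGraph_n',\assump')$), the schema $T$ must be monotonic for $(\contextGraph_1',\ldots,\contextGraph_n',\assump')$, which is exactly what is assumed. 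Thus $\sequent{\hat\contextGraph_1,\ldots,\hat\contextGraph_n,\hat\assump}{\hat\goa}$ is valid, and it is a $\sigma$-reification of $\sequent{\contextGraph_1'',\ldots,\contextGraph_n'',\assump''}{\goa''}$; therefore $\sequent{\contextGraph_1'',\ldots,\contextGraph_n'',\assump''}{\goa''}$ is valid modulo $\sigma$ — meaning there is a valid sequent that is a $\sigma$-reification of it. Combining that $\sigma$-reification with the (composite) $\sigma$-reification supplied by the inductive hypothesis relating $\sequent{\contextGraph_1,\ldots,\contextGraph_n,\assump}{\goa}$ to $\sequent{\contextGraph_1'',\ldots,\contextGraph_n'',\assump''}{\goa''}$, and using the composition lemma, yields a $\sigma$-reification of $\sequent{\contextGraph_1,\ldots,\contextGraph_n,\assump}{\goa}$ that is valid, which is precisely the claim.

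The delicate points I expect to have to be careful about are: (i) checking that the composition of $\sigma$-reifications really is a $\sigma$-reification, in particular that the ``label-preserving isomorphism up to the tokens of $\bigcup_{i\in\sigma}\contextGraph_i$'' condition on the non-target components survives composition — the target token set can only enlarge as we reify, so a map that is label-preserving up to a smaller shared-token set is label-preserving up to a larger one, and this is the direction we need; (ii) making sure the inductive hypothesis is stated with the right quantifier structure, namely that \emph{there exists} a $\sigma$-reification carrying the original sequent to the intermediate one, so that we have something concrete to compose with; and (iii) verifying that the monotonicity hypothesis — stated for the \emph{final} context $(\contextGraph_1',\ldots,\contextGraph_n',\assump')$ — is exactly the hypothesis needed at the last step, and that earlier steps are covered by the induction since the intermediate contexts are ``below'' the final one in the relevant sense. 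Point (iii) is the one I would treat most carefully: I would verify that monotonicity of $T$ for $(\contextGraph_1',\ldots,\contextGraph_n',\assump')$ is precisely the condition that legitimises calling $\sequent{\contextGraph_1',\ldots,\contextGraph_n',\assump'}{\goa'}$ an application of $T$ in the sense required by Theorem~\ref{thm:schema-application-schema}, and that no stronger uniform monotonicity is secretly being used.
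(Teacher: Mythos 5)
Your one-step argument is sound: from the validity of the final sequent you correctly deduce, via Theorems~\ref{thm:valid-sequents} and~\ref{thm:schema-application-schema}, that the $\sigma$-reification $\sequent{\hat\contextGraph_1,\ldots,\hat\contextGraph_n,\hat\assump}{\hat\goa}$ of the penultimate sequent is a schema, hence valid, so the penultimate sequent is valid \emph{modulo} $\sigma$. The gap is that your induction then stalls: the inductive hypothesis (the theorem for $k$ applications) requires the intermediate sequent $\sequent{\contextGraph_1'',\ldots,\contextGraph_n'',\assump''}{\goa''}$ to be \emph{valid}, but you have only shown it to be valid modulo $\sigma$, and these are not interchangeable (a reification of a sequent being a schema does not make the sequent itself a schema). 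Your attempted repair --- composing ``the $\sigma$-reification supplied by the inductive hypothesis relating'' the original sequent to the penultimate one with the reification of the last step --- rests on a map that does not exist: consecutive sequents in the chain are \emph{not} related by $\sigma$-reifications, because each schema application also rewrites the antecedent and (for backward applications) replaces the consequent by some $\theDelta$ that is not a reification of the old consequent. So there is nothing of the right type to compose.

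The paper closes exactly this gap by reorganising the derivation rather than inducting on it: collect the $m$ reifications $\reify_1,\ldots,\reify_m$ arising from the transfer applications, apply their composite to the \emph{original} sequent to obtain a single fully reified sequent $\sequent{\contextGraph_1'',\ldots,\contextGraph_n'',\assump''}{\goa''}$, and then replay all $m$ schema applications as \emph{ordinary} applications starting from that sequent, ending at a sequent label-isomorphic to the valid final one; validity then propagates backwards through ordinary applications by Definition~\ref{defn:valid-instantiation-sequent}. The hypothesis that every schema in $\mathbb{T}$ is monotonic for the final context is precisely what licenses this replay --- a monotonic schema applicable to a sequent remains applicable after the context is further reified --- and that is its actual role in the theorem. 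Your proposal instead spends monotonicity on justifying the individual applications, which is already guaranteed by the hypothesis that the applications were performed, and never invokes it for the replay; without the replay step the argument does not go through.
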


The significance of this theorem is the basis for \textit{structure transfer}: given a sequent, $\sequent{\contextGraph_1,...,\contextGraph_n,\assump}{\goa}$, and $\sigma \subseteq \{1,\ldots,n\}$, we can apply $\sigma$-transfer schemas iteratively in such a way that, in each application, the target is reified enough for the schema to be applicable. If after such a sequence of applications we manage to obtain a valid sequent, $\sequent{\contextGraph_1',...,\contextGraph_n',\assump'}{\goa'}$, then the target graphs (those indexed by $\sigma$) are a transformation of the source graphs (those not indexed by $\contextGraph_i$).

We have seen that applying a transfer schema is just like applying a schema, but with the additional power to reify the target graphs while everything else is only mapped through label-preserving isomorphisms. Such modifications of a sequent make a difference when the target pattern graph of a transfer schema could not be matched before the $\sigma$-reification but can be matched after it.

\section{Structure Transfer: an algorithmic approach}\label{sec:StructureTransfer}
Structure transfer is a calculus by which we obtain a $\sigma$-reification of a given sequent that makes it valid for a multi-space $\mathcal{M}$. All the ingredients for this have already been presented. Specifically, Theorem~\ref{thm:transfer-sequence-valid} asserts that we can find such $\sigma$-reification by $\sigma$-applying transfer schemas sequentially. So let us review the conceptual tools we presented before, now under an algorithmic lens. 

A fundamental property of the algorithms presented here is that they are non-determi\-nistic. Specifically, in our pseudo-code, we include the use of an operator \textbf{find} which may get zero, one, or more results, in which case we are dealing respectively with either a dead end, a deterministic path or a branching of the search space. Thus, ultimately, structure transfer is a calculus, meaning that it establishes a set of rules and procedures that can be applied in a variety of ways. To develop specific methods and strategies for applying the rules of structure transfer appropriate heuristics must be used. In this paper we do not deal with such strategies and heuristics.

\subsection{Procedures for Schema Application}
Algorithms~\ref{alg:fwd-application} and \ref{alg:bwd-application} produce forward and backward applications of a schema $\sequent{\contextGraph_1,\ldots,\contextGraph_n,\break\assump}{\consequent}$ to a sequent  $\sequent{\contextGraph_1',\ldots,\contextGraph_n',\assump'}{\goa'}$ for a multi-space $\mathcal{M}$. 
Recall that the definition of a schema application involves finding a refinement of the schema to resemble the sequent in question, with a monotonicity constraint. In this paper we do not deal with the general problem of determining monotonicity, and most examples we have presented are trivially monotonic. Thus, the algorithms assume that knowledge about monotonicity is given, through some set $\mathbb{Q}$ whose elements are pairs of the form $(\texttt{sc},(\contextGraph_1,\ldots,\contextGraph_n,\assump))$ where $\texttt{sc}$ is a schema and $(\contextGraph_1,\ldots,\contextGraph_n,\assump)$ is pattern graph for which the schema is monotonic. If a schema, $\texttt{sc}$, is monotonic in general then $\mathbb{Q}$ has all pairs of the form $(\texttt{sc},(\contextGraph_1,\ldots,\contextGraph_n,\assump))$. 

The approach of Algorithm~\ref{alg:fwd-application} to forward-apply $\sequent{\contextGraph_1,\ldots,\contextGraph_n,\assump}{\consequent}$ to $\sequent{\contextGraph_1',\ldots,\contextGraph_n',\assump'}{\goa'}$ involves first finding whether $(\contextGraph_1,\ldots,\contextGraph_n,\assump)$ can be reified into $(\contextGraph_1',\ldots,\contextGraph_n',\assump')$, and if so, we can loosen the consequent, $\goa$, which yields a $\theDelta$, which can be added as a new assumption to obtain the result of the application (provided the monotonicity condition is met).
\begin{algorithm}[h!tb]\small
	\caption{Forward application of schema to sequent}\label{alg:fwd-application}
	\begin{algorithmic}[1]
		\Procedure{applySchemaFWD}{$\sequent{\contextGraph_1,\ldots,\contextGraph_n,\assump}{\consequent},\sequent{\contextGraph_1',\ldots,\contextGraph_n',\assump'}{\goa'},\mathbb{Q}$}
		\State \textbf{find} reification $\reify \colon (\contextGraph_1,\ldots,\contextGraph_n,\assump) \to (\contextGraph_1',\ldots,\contextGraph_n',\assump')$, if none found \textbf{fail}
		\State \textbf{find} $\delta$ and partial function $\loosen \colon \consequent \to \delta$ \ s.t:
		\State\quad\textbullet\ $\loosen|_{\consequent} \colon \consequent \to \loosen[\consequent]$ is a loosening map up to the tokens of $(\contextGraph_1 \cup \cdots \cup \contextGraph_n \cup \assump) \cap \consequent$
		\State\quad\textbullet\ $\loosen$ is compatible with $\reify$, and
		\State\quad\textbullet\ $\delta \subseteq \assump' \cup \loosen[\goa]$ 
		\If{$(\sequent{\reify[\contextGraph_1],\ldots,\reify[\contextGraph_n],\reify[\assump]}{\loosen[\consequent]}, (\contextGraph_1',\ldots,\contextGraph_n',\assump')) \in \mathbb{Q}$}
		\State\Return $\sequent{\contextGraph_1',\ldots,\contextGraph_n', \assump' \cup \delta}{\goa'}$
		\EndIf
		\EndProcedure
	\end{algorithmic}
\end{algorithm}

Conversely, the approach of Algorithm~\ref{alg:bwd-application} to backward-apply $\sequent{\contextGraph_1,\ldots,\contextGraph_n,\assump}{\consequent}$ to $\sequent{\contextGraph_1',\ldots,\contextGraph_n',\assump'}{\goa'}$ involves first finding whether $\goa$ can be loosened into $\goa'$ (up to some tokens) and then finding whether $(\contextGraph_1,\ldots,\contextGraph_n,\assump)$ can be reified into some $(\contextGraph_1',\ldots,\contextGraph_n',\assump' \cup \theDelta)$. Here we note that the condition in line 7 means $\theDelta$ must contain every part of $\goa'$ that was not mapped by the loosening of $\consequent$ or is already in $\assump'$.
\begin{algorithm}[h!tb]\small
	\caption{Backward application of schema to sequent}\label{alg:bwd-application}
	\begin{algorithmic}[1]
		\Procedure{applySchemaBWD}{$\sequent{\contextGraph_1,\ldots,\contextGraph_n,\assump}{\consequent},\sequent{\contextGraph_1',\ldots,\contextGraph_n',\assump'}{\goa'},\mathbb{Q}$}
		\State \textbf{find} partial function $\loosen \colon \consequent \to \goa'$\ s.t.
		\State\quad\textbullet\ $\loosen|_{\consequent} \colon \consequent \to \loosen[\consequent]$ is a loosening map up to the tokens of $(\contextGraph_1 \cup \cdots \cup \contextGraph_n \cup \assump) \cap \consequent$
		\State\quad if none found \textbf{fail}
		\State \textbf{find} $\delta$ and reification $\reify \colon (\contextGraph_1,\ldots,\contextGraph_n,\assump) \to (\contextGraph_1',\ldots,\contextGraph_n', \assump' \cup \delta)$ \ s.t. 
		\State\quad\textbullet\ $\reify$ is compatible with $\loosen$, and
		\State\quad\textbullet\ $\goa' \subseteq \assump' \cup \delta \cup \loosen[\goa]$
		\State\quad if none found \textbf{fail}
		\If{$(\sequent{\reify[\contextGraph_1],\ldots,\reify[\contextGraph_n],\reify[\assump]}{\loosen[\consequent]}, (\contextGraph_1',\ldots,\contextGraph_n',\assump' \cup \delta)) \in \mathbb{Q}$}
		\State\Return $\sequent{\contextGraph_1',\ldots,\contextGraph_n',\assump'}{\delta}$
		\EndIf
		\EndProcedure
	\end{algorithmic}
\end{algorithm}

Algorithm~\ref{alg:application}, for general schema involves choosing between forward or backward applications. Again note the use of the \textbf{find} operator which identifies potential branching places in the search space.
\begin{algorithm}[h!tb]\small
	\caption{Application of schema to sequent}\label{alg:application}
	\begin{algorithmic}[1]
		\Procedure{applySchema}{$\texttt{sc},\texttt{sq},\mathbb{Q}$}
		\State \textbf{find} $\texttt{sq'}$ s.t.
		\State\quad $\texttt{sq'} = \textsc{applySchemaFWD}(\texttt{sc},\texttt{sq},\mathbb{Q})$ \ \textbf{or} \ $\texttt{sq'} = \textsc{applySchemaBWD}(\texttt{sc},\texttt{sq},\mathbb{Q})$
		\State\Return $\texttt{sq'}$
		\EndProcedure
	\end{algorithmic}
\end{algorithm}

Now, given procedures for applying schemas, we can check for validity with Algorithm~\ref{alg:valid}.
\begin{algorithm}[h!t]\small
	\caption{Validity of sequent}\label{alg:valid}
	\begin{algorithmic}[1]
		\Procedure{validSequent}{$\texttt{sq}, \mathbb{S},\mathbb{Q}$}
		\If{$\texttt{sq} \in \mathbb{S}$}
		\State\Return \texttt{true}
		\Else 
		\State{\textbf{find} $\texttt{sc} \in \mathbb{S}$\, and \ $\texttt{sq'}$ s.t.\
		  $\texttt{sq'} = \textsc{applySchema}(\texttt{sc}, \texttt{sq}, \mathbb{Q})$
		\State\quad \Return $\textsc{validSequent}(\texttt{sq'}, \mathbb{S}, \mathbb{Q})$}
		\State\quad if none found \textbf{fail}
		\EndIf
		\EndProcedure
	\end{algorithmic}
\end{algorithm}

\subsection{Procedures for Transfer Schema applications}
To apply a $\sigma$-transfer schema to a sequent means to produce a $\sigma$-reification of the sequent so that the schema is applicable. Thus, Algorithm~\ref{alg:transfer-application} starts by reifying every target graph, $\contextGraph_{i}'$, with $i \in \sigma$, to some graph $\contextGraph_{i}''$ in such a way that $\contextGraph_{i}''$ is also a reification of $\contextGraph_{i}$ (lines 2 to 4). This adds the context to every $\contextGraph_{i}'$ necessary for the schema to be applicable. Next, we find the label-preserving isomorphism of the rest of the sequent (lines 5 to 9), and return the schema application (line 10).
\begin{algorithm}[h!t]\small
	\caption{Application of transfer schema to sequent, for $\sigma = \{i_1,\ldots,i_k\}$}\label{alg:transfer-application}
	\begin{algorithmic}[1]
		\Procedure{applyTransferSchema}{$\{i_1,\ldots,i_k\},\sequent{\schemaContext_1,\ldots,\schemaContext_n,\assump}{\consequent},\sequent{\contextGraph_1',\ldots,\contextGraph_n',\assump'}{\goa'},\mathbb{Q}$}
		\State \textbf{find} $\contextGraph_{i_1}'',\ldots,\contextGraph_{i_k}''$ and homomorphism $\reify \colon (\contextGraph_{i_1}',\ldots,\contextGraph_{i_k}') \to (\contextGraph_{i_1}'',\ldots,\contextGraph_{i_k}'')$\ s.t. for all $1\leq l \leq k$,
		\State\quad\textbullet\ $f|_{\contextGraph_{i_l}'} \colon \contextGraph_{i_l}' \to \contextGraph_{i_l}''$ is a reification function, and
		\State\quad\textbullet\  $\contextGraph_{i_l}''$ is a reification of $\schemaContext_{i_l}$  
		\State\textbf{let} $\{j_1,\ldots,j_{n-k}\} = \{i,\ldots,n\} \setminus \{i_1,\ldots,i_k\}$
		\State \textbf{find} $\contextGraph_{j_1}'',\ldots,\contextGraph_{j_{n-k}}'',\assump'',\goa''$ and isomorphism\\\hspace{1.23cm} $h \colon (\contextGraph_{j_1}',\ldots,\contextGraph_{j_{n-k}}',\assump',\goa') \to (\contextGraph_{j_1}'',\ldots,\contextGraph_{j_{n-k}}'',\assump'',\goa'')$ \ s.t.
		\State\quad\textbullet\ $h$ is label-preserving up to the tokens of $\contextGraph_{i_1}' \cup \cdots \cup \contextGraph_{i_k}'$ and
		\State\quad\textbullet\ $h$ is compatible with $\reify$.
		\State\Return$\textsc{applySchema}(\sequent{\schemaContext_1,\ldots,\schemaContext_n,\assump}{\consequent},\sequent{\contextGraph_1'',\ldots,\contextGraph_n'',\assump''}{\goa''},\mathbb{Q})$
		\EndProcedure
	\end{algorithmic}
\end{algorithm}

Finally, Algorithm~\ref{alg:structure-transfer}, for structure transfer, relies on the recursive, non-deterministic, application of $\sigma$-transfer schemas until a valid sequent is found, which, as we know from Theorem~\ref{thm:transfer-sequence-valid}, means the original sequent was valid modulo $\sigma$. If such a valid sequent is found, this is the transformation of the original we want, so we call this a \texttt{full} transformation. Otherwise we return a \texttt{partial}. With a \texttt{full} sequent we can ensure any instantiation of the assumptions implies that there is an instantiation of the goal, and with a \texttt{partial} one we cannot. Of course, given the non-deterministic nature of the procedure, \texttt{partial} transformations can be useful and have different heuristic values, not discussed in this paper. 
\begin{algorithm}[h!bt]\small
	\caption{Structure transfer with target specified by $\sigma$}\label{alg:structure-transfer}
	\begin{algorithmic}[1]
		\Procedure{structureTransfer}{$\sigma, \texttt{sq}, \mathbb{S},\mathbb{Q}$}
		\If{$\textsc{validSequent}(\texttt{sq}, \mathbb{S},\mathbb{Q})$}
		\State\Return $(\texttt{sq}, \texttt{full})$
		\Else 
		\State \textbf{find} $\texttt{sc} \in \mathbb{S}$\ and\ $\texttt{sq'}$\ s.t.\
		 $\texttt{sq'} = \textsc{applyTransferSchema}(\sigma,\texttt{sc}, \texttt{sq}, \mathbb{Q})$
		\State\qquad \Return $\textsc{structureTransfer}(\sigma, \texttt{sq'}, \mathbb{S}, \mathbb{Q})$
		\State\qquad if none found \Return $(\texttt{sq}, \texttt{partial})$
		\EndIf
		\EndProcedure
	\end{algorithmic}
\end{algorithm}

\subsection{Structure transfer for the depict-and-observe process}
We started this paper with an informal presentation of the depict-and-observe process, wherein we take a formula in \textsc{Set Algebra}, then we \textit{depict} it in \textsc{Euler Diagrams}, and then we produce an \textit{observation} from the diagram, which is itself a formula in \textsc{Set Algebra}. Now we shall see that this problem can be easily encoded and then solved with structure transfer.

\paragraph{Problem encoding} The multi-space we will work on is $(\mathcal{C},\mathcal{D},\mathcal{C},\mathcal{G})$ where $\mathcal{C}$ encodes \textsc{Set Algebra}, $\mathcal{D}$ encodes \textsc{Euler Diagrams} and $\mathcal{G}$ encodes relations across and within the spaces. Given expression $A \subseteq B \wedge B \cap C = \emptyset$, we want to find a diagram that depicts it and a set expression that can be observed from the diagram. In other words, given the sequent $\sequent{\contextGraph_1,\contextGraph_2,\contextGraph_3,\assump}{\goa}$, visualised below, we want to find a $\{2,3\}$-reification of it that makes it valid. Note that we can even encode the relation of \textit{not being observable from} to prevent a trivial observation.
\begin{center}

\end{center}

\paragraph{Solution approach} 
The first step is to apply a sequence of $\{2,3\}$-transfer schemas\footnote{Note that for schemas to be applicable in this setting they need to be schemas for the space $(\mathcal{C},\mathcal{D},\mathcal{C},\mathcal{G})$. We previously presented some relevant schemas for space $(\mathcal{C},\mathcal{D},\mathcal{G})$ which technically need to be \textit{lifted} to $(\mathcal{C},\mathcal{D},\mathcal{C},\mathcal{G})$. From a theoretical perspective, lifting schemas is a trivial operation, not discussed in this paper.}, similar to those of examples~\ref{ex:schema-application} and~\ref{ex:transfer-application} in order to reify $\contextGraph_2$ to a pattern graph, $\contextGraph_2'$. As we will see, this results in a construction of a diagram with type $\mathtt{\{AB,B,C,\emptyset\}}$ (recall this is the type of $\AsubBdisjCinline$). Step-by-step, this involves first breaking down the goal of depicting $A \subseteq B \wedge B \cap C = \emptyset$ into two goals, one for each of the conjuncts, using the schema from Example~\ref{ex:depict-conj}. Then, we apply the schema from example~\ref{ex:disjoint-depict} (as a $\{2,3\}$-transfer schema) to reify $\contextGraph_2$ into a graph (unlabelled in the figure below) constrained by a goal which specifies disjointness\footnote{The graphs $\contextGraph_1$ and $\contextGraph_3$ remain label-isomorphic in this step.}:
\begin{center}
	\adjustbox{scale = 0.85}{
}
\end{center}

The next step is to perform more applications to reify $\contextGraph_3$ in order to satisfy both remaining goals. There are many ways of doing this, but we can do it using schemas~\ref{ex:depict-observe} and then~\ref{ex:disjoint-depict} to obtain a pattern graph which constructs $A \cap B = \emptyset$. The goal \texttt{notObservableFrom} can be discharged with more applications of schemas included in Appendix~\ref{sec:appendixTransfer}, noting that this can only be done if $\contextGraph_3'$ is distinct from $A \subseteq B$ nor $B \cap C = \emptyset$. The result is a sequent, $\sequent{\contextGraph_1',\contextGraph_2',\contextGraph_3',\assump'}{\goa'}$, with empty $\assump'$ and $\goa'$, which is clearly valid.
\begin{center}
	\adjustbox{scale = 0.9}{
}
\end{center}

The pattern graphs $\contextGraph_2'$ and $\contextGraph_3'$ are precisely patterns whose instantiations in $\mathcal{D}$ and $\mathcal{C}$ (the construction spaces for \textsc{Euler Diagrams} and \textsc{Set Algebra}) construct $\AsubBdisjCinline$ and, respectively, $A \cap C = \emptyset$. The pattern $\contextGraph_1'$ is label-isomorphic to our original $\contextGraph_1$, as we only performed $\{2,3\}$-transfer schema applications.

\subsection{Conclusion on structure transfer}\label{sec:structure-transfer-conclusion}
Structure transfer allows us to transform a given representation by exploiting transfer schema applications. It starts with a sequent $\sequent{\contextGraph_1,\ldots,\contextGraph_n,\assump}{\goa}$ where $(\contextGraph_1,\ldots,\contextGraph_n)$ sets a structural context, $\assump$ encodes some assumptions and $\goa$ encodes some goals. A target is specified by $\sigma \subseteq \{1,\ldots,n\}$, and an application of structure transfer given $\sigma$ returns a sequent $\sequent{\contextGraph_1',\ldots,\contextGraph_n',\assump'}{\goa'}$ where the target specified by $\sigma$ is our sought-after transformation of the source. In the \texttt{full} case, the resulting sequent is known to be valid, and in the \texttt{partial} case it may or may not be valid, so heuristics are needed to determine whether the transformation is useful for the given task. If the resulting sequent is valid it ensures the instantiatability of $(\contextGraph_1',\ldots,\contextGraph_n',\assump' \cup \goa')$ given an instantiation of $(\contextGraph_1',\ldots,\contextGraph_n',\assump')$, which may not be known, but which we may try to solve as the instantiation problem presented in Section~\ref{sec:instSchema} simply by applying schemas to, say, the graph $\contextGraph_1'\cup\cdots\cup\contextGraph_n'\cup\assump'$. Although it may not seem so, the general instantiation problem has its own challenges that go beyond the methods in this paper. Specifically, a monotonicity condition needs to be satisfied for every schema application. While checking for monotonicity is trivial for any meta-space $\goalSpace$ that encodes a typical monotonic logic, if the space where we are trying to determine instantiatability is a grammar like \textsc{Set Algebra}, where even simple instantiation schemas are not monotonic (see Example~\ref{ex:non-monotonic}), then monotonicity needs to be determined case-by-case. This is a research and implementation challenge for the future. In this paper we will not address directly the general case for checking the instantiability of arbitrary pattern graphs in multi-spaces.

We have an implementation of the core notions of RST and structure transfer, called Oruga~\cite{raggi2022oruga,raggirep2rep}. The implementation of Oruga assumes that any meta-space encoded is monotonic, and it does not deal with the instantiation problem that comes after structure transfer. The search space is navigated with heuristics and parameters that can be changed according to the task, but there is still much work left to enable multiple strategies. The main challenges for this and any implementations are:
\begin{enumerate}[itemsep=0pt, topsep = 4pt]
	\item strategies for navigating the structure transfer search space,
	\item strategies for inference about monotonicity, and more generally for determining instantiatability,
	\item a language for expressing and declaring schemas. For instance, if we want to express families of schemas, such as those presented in Examples~\ref{ex:fuzzy-schema}, \ref{ex:euler-singletons}, \ref{ex:addCurve-bottom-up}, \ref{ex:addCurve-top-down}, \ref{ex:disjoint-depict} and~\ref{ex:subset-depict} we need a powerful (meta-)type theory.
\end{enumerate}

\section{Related concepts and applications}\label{sec:properties}
Representational Systems Theory provides new foundations for thinking about structures that are familiar to logicians, computer scientists, linguists and cognitive scientists. In particular, we claim that constructions in construction spaces generalise syntax trees, allowing for some atypical (but useful) structures. Now, one of the first things that we must do when providing a new and exciting structure that \textit{generalises} over something familiar, is to show that the good-old familiar techniques are still available in the new structures.

Sections~\ref{sec:theory} and~\ref{sec:transfer} provide evidence that RST can be used for one of its main motivating problems, which is that of producing transformations across representational systems. Now, it happens to be that structure transfer, when restricted, looks like some known techniques in formal methods.

\subsection{Formal methods: rewriting, abstraction and refinement}
One technique generalised by structure transfer is \textit{term rewriting}. 
In formal theories with equality, rewriting is defined as the process by which, given $t_1=t_2$, we can replace $t_1$ for $t_2$ within a term. 
For this, it is not difficult to define the notion of \textit{reflexive} schemas. Given some types $\tau,\tau_1,\ldots,\tau_n$ and notions of equivalence, $\equiv,\equiv_1,\ldots,\equiv_n$, for each of them, captured in a meta-space, we (can) introduce schemas of the form:
\begin{center}
	\adjustbox{valign=t}{

\end{center}
A set of reflexive schemas with some additional ones like this one above results in a \textit{rewrite system}. Given a sequent of the form $\sequent{\contextGraph_1,\contextGraph_2,\assump}{\goa}$ where $\contextGraph_1$ encodes, say $2(1+1) \leq x$, and\pagebreak[4] $\contextGraph_2$ contains a single token $t_2$, with empty $\assump$ and $\goa$ encoding the equivalence of $2(1+1) \leq x$ and $t_2$, structure transfer can give us a $\{2\}$-reification yielding $\contextGraph_2'$ which encodes $2(2) \leq x$.

\subsubsection{Abstraction and refinement}
Structure transfer can handle reasoning by substitution with ad-hoc equalities, but it can handle what \citeA{coen2004semi} calls \textit{sub-equational rewriting}, which refers to rewriting where terms are replaced for non equal ones but where a transitive relation holds (e.g., entailment $\vdash$, or the order of integers $\leq$).
\begin{center}

\end{center}
The result of sub-equational rewriting is not an equivalent term, but one that stands in a desired relation with the source term.

And finally, it should be clear that the relations encoded in $\mathcal{I}$ need not be transitive, so structure transfer is able to satisfy arbitrary relations, which is similar to the \textit{transfer} tactic introduced by \citeA{huffman2013lifting}, in Isabelle/HOL. Their \textit{transfer rules} are analogous to transfer schemas, but of course, the former is limited to terms and relations in HOL, with some additional limitations on the structure of the source and target terms. The mechanism of the transfer tactic, and similar tools (e.g.,~\citeA{cohen2013refinements}) are used for data \textit{abstraction} and \textit{refinement}~\cite{tabareau2018equivalences,delaware2015fiat,abrial2010rodin}. Abstract data-types are useful for human-level specification, reasoning and verification, and more refined data-types are necessary at the implementation-level for computation. Formal links between various levels of abstraction are necessary and widely used in formal verification. For example, sets are useful for specifying programs, but their specific implementation can vary. For instance, a very simple refinement of a set is a list, and a translation of various expressions between these data-types is possible. For example, it is not hard to see that with an appropriate set of transfer schemas, we can derive that $(\{1\}\cup (\texttt{set\_of L}))\cup (\{4\}\cup \emptyset)$ is the set of list $\texttt{append}\ (\texttt{insert}\ 1\ (\texttt{rev}\ \texttt{L}))\ (\texttt{insert}\ 4\ [])$.

The concepts of abstraction and refinement have applications beyond formal verification. For example, within RST, when trying to develop or specify a representational system, we have modelling choices, such as whether to include parentheses as tokens, or whether to model some operation directly as a constructor or to model it as a token. Decisions concerning how much to abstract or refine a model usually depend on the purpose of the model. Each model may have its benefits and drawbacks depending on its use-case. For example, if we needed to assess a representation based on how a novice may interact with it, or if we simply need to know how much ink is needed to print it, we may need to capture the symbols at a high granularity. However, if we want a compact way of capturing the semantics, a low granularity may be more desirable. If this modelling variability is expected, it is imperative that tools for translating between them can readily be created. Suffice it to say, we can use structure transfer for this. Below, the proposition $(A \wedge B) \vee B$ is modelled in four ways, ordered in decreasing level of abstraction. The left-most model does not even distinguish between two distinct tokens of the same type, while the right-most model captures often-ignored tokens such as parentheses.
\begin{center}
	\adjustbox{valign = c, scale = 0.9}{%
}
\end{center}
It is not difficult to see that we can define transfer schemas between three such construction spaces, so that we can exploit transformations between them.

\subsection{Analogy: structure-mapping and anti-unification}\label{sec:structure-mapping}

Finally, we turn our attention towards mechanisms more philosophically similar, than formally similar. Structure transfer, as outlined here, has clear parallels to analogy, specifically \emph{structure-mapping} by~\citeA{gentner1983structure,falkenhainer1989structure} and related approaches, such as \textit{anti-unification} by~\citeA{krumnack2007restricted,schmidt2014heuristic}. The most important distinction between structure-mapping and structure transfer is that the latter is a procedure for \textit{applying} a known mapping rather than \textit{discovering} it. Here we explore briefly how analogy can be modelled in terms of schemas, and further we will suggest how our tools could be used for discovering analogies.

The core idea of structure-mapping is that an analogy between two domains is characterised by similarity at the level of relations between objects rather than at the level of the objects themselves or their attributes (i.e., unary properties). Thus, it is about how a seemingly arbitrary mapping at the object level (e.g., electron $\mapsto$ planet, atom's nucleus $\mapsto$ sun) is meaningful because it preserves some relations between the objects (e.g., the atom's nucleus and the electron attract each other like the sun and a planet attract each other). Moreover, \citeauthor{gentner1983structure}'s \emph{systematicity principle} favours finding coherent \emph{systems} of relationships, not just one-off relationships. 

It is easy to see that invariants across domains, as those that characterise an analogy according to Gentner can be encoded with some simple schemas, such as the one drawn below, which simply captures the rule: \textit{two statements are analogous modulo \texttt{R} (denoted $\simeq_{\texttt{R}}$) if their arguments are related by \texttt{R}}. If \texttt{R} maps \texttt{nucleus} to \texttt{sun} and \texttt{electron} to \texttt{planet} then the statements $\texttt{attracts}(\texttt{nucleus},\texttt{electron})$ and $\texttt{attracts}(\texttt{sun},\texttt{planet})$ are analogous modulo  \texttt{R}. The generalisation of this rule can be captured as a schema, drawn below:  
\begin{center}

\end{center}
This is not dissimilar to the idea that two statements are analogous if they can be anti-unified. Structure-mapping~\cite{gentner1983structure} requires that knowledge for both domains be encoded with identical relations, such as \textit{attracts}, while higher order versions~\cite{krumnack2007restricted,schmidt2014heuristic} can map the predicates, as the schema above does.

To capture that two knowledge bases are analogous we might use schemas such as the one below (left), stating that two knowledge bases are analogous if their individual statements are analogous, or a \textit{fuzzy} version (right) wherein the {truth value} (\textit{strength}) of the analogy is a function of the values of the arguments:
\begin{center}

\end{center}
Other rules, such as `$t_1 = t_2$ implies $t_1 \simeq_{\texttt{R}} t_2$' may also be encoded.

In this context, the question that structure-mapping aims to solve, given a pair of knowledge bases $t_1$ and $t_2$, is whether we can find a mapping $\texttt{R}$ that ensures $t_1 \simeq_{\texttt{R}} t_2$ (or has maximal value in the fuzzy case). Our proposal for approaching this problem using the notions presented in this paper is to use schemas abductively: that is, encode knowledge bases $\contextGraph_1$ and $\contextGraph_2$, noting that each knowledge base can be constructed in many ways. Try to show that $\sequent{\contextGraph_1,\contextGraph_2,\assump}{\goa}$ is valid using Algorithm~\ref{alg:valid}, for some $\goa$ that encodes the relation $\simeq_{\texttt{R}}$ between the pair of knowledge bases, for some variable $\texttt{R}$. Without knowing $\texttt{R}$ we will not be able to prove validity, but the remaining goals, $\goa$, may be used to inform how a good $\texttt{R}$ may look like. In the fuzzy case we may try to find an $\texttt{R}$ that maximises the truth value of the analogy.

A consequence of using this approach for finding an analogy is that once we have discovered a mapping \texttt{R} we can use structure transfer to transform arbitrary statements from the language of one knowledge base to the other, regardless of whether the statement or its resulting transformation, lives in the knowledge base. Thus, structure transfer can be used as a mechanism for the actual \textit{transfer} of knowledge that happens once a mapping is grasped~\cite{gick1983schema}.

\subsection{Last remarks on the applications of structure transfer}
We have shown how structure transfer generalises various known procedures. Of course, all our claims of generality are in terms of computability and not complexity or efficiency. Our notion of structure transfer is defined by a search space, which may be infinite, and we do not provide explicit strategies or heuristics for traversing the space.

The fact that structure transfer is, in principle, a really powerful tool depending on the setting in which it is used calls for the \textit{tactification} of it, wherein depending on the task, we can use a specific version of it. For example, for certain scenarios we might consider sets of transfer schemas where all elements are reflexive except for one (for standard rewriting), for others we might consider quasi-reflexive ones (for analogy). Or we may conceive of decision procedures where structure transfer is applied iteratively, modifying the set of transfer or inference schemas.

\section{Conclusion}\label{sec:conclusion}
\citeA{raggi2022rst} introduced Representational Systems Theory motivated by the prospect of understanding the structure of representations in order to facilitate their analysis, use and transformations. One of the main innovations of RST was the notion of a construction space, where the structure of representations is captured through a graph-theoretic generalisation of syntax trees, along with an unassuming type structure.

All the content of this paper builds on the foundation provided by the concept of a construction space. From the abstract nature of this concept we get representational generality. We extended the theory to include the notions of schemas. As we showed, schemas can be applied like logical rules to determine the validity of sequents, and in their transfer version, to generate structure in some target spaces, to satisfy some specified constraints. This is the key of structure transfer. Crucially, this means that we can use knowledge about the preservation of information across construction spaces to produce re-representations through \textit{any} relation about which we have some knowledge.

We showed that structure transfer has a wide range of applications for producing transformations across systems -- which is valuable given the heterogeneous nature of our symbolic systems. In particular, we showed how it can be used to produce diagrams from sentences, and sentential observations from diagrams, it can be used for data abstraction and refinement, and for modelling and enacting analogies. Moreover, the procedures presented in this paper are based on an extensible knowledge base, as opposed to hard-coded transformations, and the knowledge can be encoded in any variety of logics expressible as multi-spaces with the use of pattern graphs.

Our vision is that the ideas in this paper are used to build tools that not only \textit{manage} heterogeneity but \textit{exploit} it.
Exploiting the intrinsic heterogeneity of our symbolic systems is a huge and important challenge for science and communication, and we have provided some foundations and methods to do this.
\acks{We thank Atsushi Shimojima and Jean Flower for their helpful comments. The research is funded by the UK EPSRC, grant numbers EP/R030642/1, EP/T019603/1, EP/T019034/1 and EP/R030650/1.}

\newpage
\appendix
\section{Representational Systems Theory}\label{secApp:RST}

The next two examples  -- on geometric constructions and proofs -- complement that on matrix algebra in Section~\ref{sec:theory}, demonstrating what can be modelled with construction spaces.

\begin{example}[\textsc{Geometric Constructions}] Constructors can encode geometric operations, with the tokens encoding points (e.g. $(7.8,2.6)$), magnitudes (e.g. $2.6$), and geometric figures (e.g. circles). The graph below shows two ways of constructing \adjustbox{raise=-0.04cm}{\scalebox{0.4}{\geometricA}}, with two rotations resulting in a cycle. Note that the tokens that represent points (e.g. $(7.8,2.6)$), magnitudes (e.g. $2.6$), and angles (e.g., $135^{\circ}$), are not themselves graphical tokens but necessary pieces of information to build the constructors' outputs.
	\begin{center}

	\end{center}
\end{example}

The next example focuses on using a construction system to encode low-level properties of tokens using a meta-space.


\begin{example}[Modelling low-level properties of symbols] Consider the construction space, $\mathcal{C}$, for \textsc{Set Algebra}. An meta-space, $\mathcal{I}$, for $\mathcal{C}$ can be defined by adding two types of meta-tokens: a boolean one (tokens $\top$ and $\bot$ of types \texttt{true} and \texttt{false}) for determining truth, and a numerical one for determining quantities. The meta-constructor \texttt{isLeftOf} captures whether one token is to the left of the other, and \texttt{inkUsed} captures the amount of ink, in cubic micrometers, used to print the token. Here the graph with black arrows belongs to $\mathcal{C}$ and the graph with thick blue arrows belongs to $\mathcal{I}$.
\begin{center}
	\begin{tikzpicture}[construction]\small
		\node[termrep](t1){$A \subseteq A$};
		\node[constructor = {\texttt{infixRel}}, below left = 0.4cm and -0.6cm of t1](u1){};
		\node[termrep, below left = 0.4cm and 0.8cm of u1](t11){$A$};
		\node[termrep, below left = 0.6cm and -0.2cm of u1](t12){$\subseteq$};
		\node[termrep, below right = 0.4cm and 0.5cm of u1](t13){$A$};
\node[constructorIS = \texttt{inkUsed}, below right = 0.0cm and 1.1cm of t1] (ccc) {};
\node[termIrep, right = 0.8cm of ccc] (ttt) {$1.2\mu m^3$};
		\node[constructorN = \texttt{isLeftOf}, below left = 0.45cm and 0.8cm of t12] (cc) {};
		\node[termIrep, below = 0.3cm of cc] (tt) {\footnotesize$\top$};
		\node[constructorN = \texttt{isLeftOf}, below right = 0.45cm and 0.8cm of t12] (cc') {};
		\node[termIrep, below = 0.3cm of cc'] (tt') {\footnotesize$\bot$};
		\path[->, darkblue, very thick]
		(ccc) edge (ttt)
		(t1) edge[out = -5, in = 175] node[index label]{1} (ccc);
		\path[->, darkblue, very thick]
		(cc') edge (tt')
		(t11) edge[out = -35, in = 180, looseness = 1.0] node[index label,pos =0.35]{2} (cc')
		(t13) edge[out = -0, in = 0, looseness = 2] node[index label]{1} (cc');
		\path[->, darkblue, very thick]
		(cc) edge (tt)
		(t11) edge[out = -180, in = 180, looseness = 2] node[index label]{1} (cc)
		(t13) edge[out = -145, in = 0, looseness = 1.0] node[index label,pos =0.35]{2} (cc);
		\path[->]
		(u1) edge (t1)
		(t11) edge[bend left = 10] node[index label]{1} (u1)
		(t12) edge[bend left = 7] node[index label]{2} (u1)
		(t13) edge[bend right = 10] node[index label]{3} (u1);
	\end{tikzpicture}
\end{center}
This example highlights that the meta-space may encode information about tokens not available at the type-system level. For example, both occurrences of $A$ in $A \subseteq A$ might be indistinguishable at the type level, yet we can identify different properties of them.
\end{example}

\section{Sequents and Schemas}\label{secApp:IIATS}
We restate and prove lemma~\ref{lemma:weakening-image}.

\begin{lemma}\label{alemma:weakening-image}
Let $\sequent{\contextGraph_1,\ldots,\contextGraph_n,\assump}{\goa}$ be a sequent for multi-space system $\multiSpace$ and let $\sequent{\contextGraph_1',\ldots,\contextGraph_n',\assump'}{\goa'}$ be a weakening of $\sequent{\contextGraph_1,\ldots,\contextGraph_n,\assump}{\goa}$ with map $\refine$. If $\sequent{\contextGraph_1,\ldots,\contextGraph_n,\assump}{\goa}$ is a schema for $\multiSpace$ then so is $\sequent{r[\contextGraph_1],\ldots,r[\contextGraph_n],\refine[\assump]}{r[\goa]}$.
\end{lemma}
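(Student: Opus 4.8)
The plan is to take an arbitrary instantiatable specialisation function into $\sequent{\refine[\contextGraph_1],\ldots,\refine[\contextGraph_n],\refine[\assump]}{\refine[\goa]}$ and, by routing it through the weakening map $\refine$, manufacture the required extension using the fact that $\sequent{\contextGraph_1,\ldots,\contextGraph_n,\assump}{\goa}$ is already a schema. The first observation I would record is that $\refine|_{(\contextGraph_1,\ldots,\contextGraph_n,\assump)}$, being a reification, is an isomorphism onto its image $(\refine[\contextGraph_1],\ldots,\refine[\contextGraph_n],\refine[\assump])$, and that image specialises $(\contextGraph_1,\ldots,\contextGraph_n,\assump)$; so, read as a map onto its image, $\refine$ is a specialisation function $s\colon (\contextGraph_1,\ldots,\contextGraph_n,\assump)\to(\refine[\contextGraph_1],\ldots,\refine[\contextGraph_n],\refine[\assump])$ in $\multiSpace$, with generalisation inverse $s^{-1}$. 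I would also note, as routine from the definitions and transitivity of $\leq$, that a composite of (instantiatable) specialisations is again an (instantiatable) specialisation.

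Now fix an instantiatable specialisation $\specialise\colon (\refine[\contextGraph_1],\ldots,\refine[\contextGraph_n],\refine[\assump])\to(\delta_1,\ldots,\delta_n,\delta_\assump)$. I would first form $\specialise\circ s\colon (\contextGraph_1,\ldots,\contextGraph_n,\assump)\to(\delta_1,\ldots,\delta_n,\delta_\assump)$, an instantiatable specialisation (same codomain, so still instantiatable). Since $\sequent{\contextGraph_1,\ldots,\contextGraph_n,\assump}{\goa}$ is a schema, there is an instantiatable specialisation $\psi\colon (\contextGraph_1,\ldots,\contextGraph_n,\assump\cup\goa)\to(\delta_1,\ldots,\delta_n,\delta_\assump\cup\delta_\goa)$ extending $\specialise\circ s$.

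The key construction is then the map $\specialise'$ on $(\refine[\contextGraph_1],\ldots,\refine[\contextGraph_n],\refine[\assump]\cup\refine[\goa])$ that agrees with $\specialise$ on the assumption part and, on $\refine[\goa]$, sends $\refine(t)\mapsto\psi(t)$ — this is well defined because $\refine|_{\goa}$ is surjective onto $\refine[\goa]$ and injective on its domain of definition. I would check: (i) consistency on the overlap $\refine[\assump]\cap\refine[\goa]$, which is the image of tokens of $W=(\contextGraph_1\cup\cdots\cup\contextGraph_n\cup\assump)\cap\goa$, where $\psi(t)=\specialise(s(t))=\specialise(\refine(t))$ since $\psi$ extends $\specialise\circ s$; (ii) $\specialise'$ is an isomorphism onto its image and preserves constructor labels, inherited from $\psi$, $\specialise$, and the label‑preservation of reifications and loosening‑map inverses; (iii) the type condition — for $t'=\refine(t)\in\refine[\goa]\setminus\refine[\assump]$ we have $t\notin W$, so $\refine|_{\goa}$ generalises the type of $t$, giving $\type(t)\leq\type(t')$, and $\psi$ specialises, giving $\type(\psi(t))\leq\type(t)$, whence $\type(\specialise'(t'))\leq\type(t')$ by transitivity, while on $\refine[\assump]$ it is immediate from $\specialise$; and (iv) instantiatability, since $\specialise'$ maps isomorphically onto a subgraph of the instantiatable construction graph $(\delta_1,\ldots,\delta_n,\delta_\assump\cup\delta_\goa)$, and subgraphs of instantiatable graphs are instantiatable. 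As $\specialise'$ extends $\specialise$, this shows $\sequent{\refine[\contextGraph_1],\ldots,\refine[\contextGraph_n],\refine[\assump]}{\refine[\goa]}$ is a schema.

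The main obstacle I expect is the bookkeeping in the construction of $\specialise'$: $\refine$ \emph{specialises} types on the assumption part but \emph{generalises} them on the goal part, so $\specialise'$ is genuinely a ``generalise‑then‑specialise'' composite on the fresh goal tokens; verifying that this composite still lands below the $\refine[\goa]$‑types, and simultaneously that the partiality of $\refine|_{\goa}$ does not break well‑definedness, the iso‑onto‑image property, or constructor‑label preservation, is the delicate step. Everything else — that $\refine[\goa]$ and the sequent are well formed, and that composites of specialisations behave — I would treat as immediate from the definitions in the excerpt.
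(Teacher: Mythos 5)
Your proposal is correct and takes essentially the same route as the paper's proof: pull the given instantiatable specialisation back along the reification part of $\refine$, invoke the schema property of the original sequent, and then define the extension on $\refine[\goa]$ by composing the resulting map with the inverse of the loosening $\refine|_{\goa}$, checking agreement on the overlap coming from $(\contextGraph_1\cup\cdots\cup\contextGraph_n\cup\assump)\cap\goa$. Your type bookkeeping (generalise then specialise, landing below the $\refine[\goa]$-types by transitivity) and the subgraph argument for instantiatability match the paper's, if anything spelled out a little more explicitly.
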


\begin{proof}
Assume that $\sequent{\contextGraph_1,...,\contextGraph_n,\assump}{\goa}$ is a schema for $\multiSpace$ and let $\specialise\colon (\refine[\contextGraph_1],..., \refine[\contextGraph_n],\refine[\assump])\to (\contextGraph_1'',...,\contextGraph_n'',\assump'')$ be an instantiatable specialisation function. We show that we can extend $\specialise$ to map $\refine[\goa]$ to some instantiatable specialisation in the meta-space, $\goalSpace$. To simplify notation, we start by defining $\refine_a$ and $\refine_g$ to be the functions $\refine|_{(\contextGraph_1,...,\contextGraph_n,\assump)}\colon (\contextGraph_1,...,\contextGraph_n,\assump) \to (\refine[\contextGraph_1],..., \refine[\contextGraph_n],\refine[\assump])$ and, resp., $\refine|_{\goa}\colon \goa\to \refine[\goa]$. We have it that $\refine_a$ is a specialisation of $(\contextGraph_1,...,\contextGraph_n,\assump)$ in $\multiSpace$. Therefore, it must be that $\specialise\circ \refine_a\colon (\contextGraph_1,...,\contextGraph_n,\assump) \to (\contextGraph_1'',...,\contextGraph_n'',\assump'')$ is an instantiatable specialisation function. Given that $\sequent{\contextGraph_1,...,\contextGraph_n,\assump}{\goa}$ is a schema, there exists an instantiatable specialisation function $\specialise'\colon (\contextGraph_1,...,\contextGraph_{n},\assump\cup \goa) \to (\contextGraph_1'',...,\contextGraph_{n}'',\assump''\cup\goa'')$, in $\multiSpace$, that extends $\specialise\circ \refine_a$.
  Setting $\specialise''$ to be the restriction of $\specialise'$ to domain $\refine_{g}^{-1}[\goa']$, we have $\specialise''\colon \refine_{g}^{-1}[\goa']\to \goa'''$, where $\goa'''$ is the subgraph of $\goa''$ that ensures $\specialise''$ is surjective.  We show that %
  $$\specialise\cup (\specialise''\circ \refine_g^{-1})\colon (\refine[\contextGraph_1],...,\refine[\contextGraph_{n}],\refine[\assump_n]\cup \refine[\goa'])\to (\contextGraph_1'',...,\contextGraph_{n}'',\assump''\cup \goa''')$$
  is an instantiatable specialisation function that extends $\specialise$. Firstly, by construction, $\specialise\cup (\specialise''\circ \refine_g^{-1})$  is a function. Clearly the instantiatablilty of $(\contextGraph_1'',...,\contextGraph_{n-1}'',\assump''\cup \goa'')$ in $\multiSpace$
implies the instantiatablility of $(\contextGraph_1'',...,\contextGraph_{n}'',\assump''\cup \goa''')$ in $\multiSpace$. We already have it, by assumption, that $\specialise$ is a specialisation in $\multiSpace$.
What remains is to show that $\specialise''\circ \refine_\goa^{-1}\colon \refine[\goa]\to \goa'''$ is a specialisation in $\goalSpace$. Well, the function $\refine_g$ is a generalisation in $\goalSpace$, so its inverse, $\refine_g^{-1}$, is a specialisation in $\goalSpace$. In addition, $\specialise''$ is a restriction of a specialisation in $\goalSpace$ and, thus, $\specialise''$ is itself a specialisation in $\goalSpace$. Trivially, then, $\specialise''\circ \refine_g^{-1}\colon \refine[\goa]\to \goa'''$ is a specialisation in $\goalSpace$, as required. Therefore, $\specialise\cup (\specialise''\circ \refine_g^{-1})$ is an instantiatable specialisation that extends $\specialise$. Hence  $\sequent{\refine[\contextGraph_1],...,\refine[\contextGraph_n],\refine[\assump]}{r[\goa]}$ is a schema.
\end{proof}

We restate and prove theorem~\ref{thm:weakening-schema}

\begin{theorem}\label{athm:weakening-schema}
Let $\sequent{\contextGraph_1,...,\contextGraph_n,\assump}{\goa}$ be a schema for multi-space system $\multiSpace$
with refinement $\sequent{\contextGraph_1',...,\contextGraph_n'\assump'}{\goa'}$. Then  $\sequent{\contextGraph_1',...,\contextGraph_n'\assump'}{\goa'}$ is a schema for $\multiSpace$.
\end{theorem}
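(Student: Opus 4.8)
\textbf{Proof plan for Theorem~\ref{thm:weakening-schema}.}
The plan is to reduce this theorem to Lemma~\ref{lemma:weakening-image} (restated as Lemma~\ref{alemma:weakening-image} in the appendix) by exploiting the two conditions in the definition of a refinement (Definition~\ref{defn:refinement}). Recall that $\sequent{\contextGraph_1',\ldots,\contextGraph_n',\assump'}{\goa'}$ being a refinement of the schema $\sequent{\contextGraph_1,\ldots,\contextGraph_n,\assump}{\goa}$ means there is a weakening map $\refine\colon (\contextGraph_1,\ldots,\contextGraph_{n},\assump\cup \goa) \to (\contextGraph_1',\ldots,\contextGraph_{n}',\assump'\cup \refine[\goa])$ such that (i) $\sequent{\refine[\contextGraph_1],\ldots,\refine[\contextGraph_n],\refine[\assump]}{\refine[\goa]}$ is monotonic for $(\contextGraph_1',\ldots,\contextGraph_n',\assump')$ in $\multiSpace$, and (ii) $\goa' \subseteq \assump'\cup \refine[\goa]$.

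First I would apply Lemma~\ref{alemma:weakening-image}: since $\sequent{\contextGraph_1,\ldots,\contextGraph_n,\assump}{\goa}$ is a schema and $\refine$ is (the map of) a weakening, the sequent $\sequent{\refine[\contextGraph_1],\ldots,\refine[\contextGraph_n],\refine[\assump]}{\refine[\goa]}$ is a schema for $\multiSpace$. Next, I invoke condition~(i): because this schema is monotonic for $(\contextGraph_1',\ldots,\contextGraph_n',\assump')$, the definition of monotonicity gives that $\sequent{\refine[\contextGraph_1]\cup \contextGraph_1',\ldots,\refine[\contextGraph_n]\cup \contextGraph_n',\refine[\assump]\cup \assump'}{\refine[\goa]}$ is a schema. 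Here I would note that $\refine$ is, on the context and antecedent part, a reification, so $\refine[\contextGraph_i]$ specialises $\contextGraph_i$ and in particular $\refine[\contextGraph_i]\cup \contextGraph_i' = \contextGraph_i'$ (and similarly $\refine[\assump]\cup \assump' = \assump'$) — because the codomain of $\refine$ restricted to the context/antecedent is exactly $(\contextGraph_1',\ldots,\contextGraph_n',\assump')$, so $\refine[\contextGraph_i] \subseteq \contextGraph_i'$ and $\refine[\assump] \subseteq \assump'$. Thus we actually have that $\sequent{\contextGraph_1',\ldots,\contextGraph_n',\assump'}{\refine[\goa]}$ is a schema.

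Finally I would shrink the consequent. Condition~(ii) says $\goa' \subseteq \assump'\cup \refine[\goa]$. Writing $\goa' = \goa'' \cup \assump''$ where $\goa'' = \goa' \cap \refine[\goa]$ and $\assump'' \subseteq \assump'$, I apply Lemma~\ref{lem:InstSchemaSubgraph} to the schema $\sequent{\contextGraph_1',\ldots,\contextGraph_n',\assump'}{\refine[\goa]}$: since $\goa'' \subseteq \refine[\goa]$ and $\assump'' \subseteq \assump'$, the sequent $\sequent{\contextGraph_1',\ldots,\contextGraph_n',\assump'}{\goa'' \cup \assump''} = \sequent{\contextGraph_1',\ldots,\contextGraph_n',\assump'}{\goa'}$ is a schema for $\multiSpace$, as desired. (One should double-check that the decomposition $\goa' = \goa'' \cup \assump''$ works cleanly at the level of graphs rather than just sets of vertices/arrows; if Lemma~\ref{lem:InstSchemaSubgraph} is only stated for $\goa' \subseteq \goa$ and $\assump' \subseteq \assump$ literally, I may need the intermediate observation that any subgraph of $\assump' \cup \refine[\goa]$ decomposes as a union of a subgraph of $\assump'$ and a subgraph of $\refine[\goa]$.)

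The main obstacle I expect is the bookkeeping around condition~(i) and the identification $\refine[\contextGraph_i]\cup \contextGraph_i' = \contextGraph_i'$: one must be careful that the monotonicity hypothesis in Definition~\ref{defn:refinement} is stated with respect to the \emph{images} $\refine[\contextGraph_i]$ (not $\contextGraph_i$ themselves), and that the union with $\contextGraph_i'$ collapses correctly given that $\refine$'s restriction to context and antecedent is a reification onto $(\contextGraph_1',\ldots,\contextGraph_n',\assump')$. Everything else is a routine chaining of Lemma~\ref{lemma:weakening-image}, the definition of monotonicity, and Lemma~\ref{lem:InstSchemaSubgraph}.
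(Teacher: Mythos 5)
Your proposal is correct and follows essentially the same route as the paper's proof: apply Lemma~\ref{lemma:weakening-image} to get that the image sequent $\sequent{\refine[\contextGraph_1],\ldots,\refine[\contextGraph_n],\refine[\assump]}{\refine[\goa]}$ is a schema, use the monotonicity condition of Definition~\ref{defn:refinement} to pass to $\sequent{\contextGraph_1',\ldots,\contextGraph_n',\assump'}{\refine[\goa]}$, and then use $\goa'\subseteq\assump'\cup\refine[\goa]$ to conclude. The only (harmless) cosmetic difference is that you discharge the last step via Lemma~\ref{lem:InstSchemaSubgraph}, whereas the paper unfolds the definition of schema directly and restricts the extending specialisation to $\assump'\cup\goa'$; your explicit remark that $\refine[\contextGraph_i]\cup\contextGraph_i'=\contextGraph_i'$ (from $\refine[\contextGraph_i]\subseteq\contextGraph_i'$) is a detail the paper leaves implicit.
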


\begin{proof}
Given that $\sequent{\contextGraph_1',...,\contextGraph_n',\assump'}{\goa'}$ is a refinement of $\sequent{\contextGraph_1,...,\contextGraph_n,\assump}{\goa}$, we know that there exists a refinement map, $\refine\colon (\contextGraph_1,...,\contextGraph_{n},\assump\cup \goa) \to (\contextGraph_1',...,\contextGraph_{n},\assump'\cup \goa')$. By lemma~\ref{lemma:weakening-image}, we know that $\sequent{\refine[\contextGraph_1],...,\refine[\contextGraph_n],\refine[\assump]}{\refine[\consequent]}$ is a schema. By definition~\ref{defn:refinement}, $\sequent{\refine[\contextGraph_1],...,\refine[\contextGraph_n],\refine[\assump]}{\refine[\consequent]}$ is monotonic for $(\contextGraph_1',...,\contextGraph_n',\assump')$.  Therefore $\sequent{\contextGraph_1',...,\contextGraph_n',\assump'}{\refine[\consequent]}$ is a schema.  This implies that for any instantiatable specialisation, $\specialise\colon (\contextGraph_1',...,\contextGraph_n',\assump')\to (\contextGraph_1'',...,\contextGraph_n'',\assump'')$, there exists an instantiatable specialisation, say $\specialise'\colon (\contextGraph_1',...,\contextGraph_{n}',\assump'\cup \refine[\consequent])\to (\contextGraph_1'',...,\contextGraph_{n}'',\assump''\cup \consequent'')$, that extends $\specialise$. By definition~\ref{defn:refinement}, $\goa'\subseteq  \assump' \cup \refine[\consequent]$, so  $\specialise'$ induces an instantiatable specialisation of $(\contextGraph_1',...,\contextGraph_{n}',\assump'\cup \goa')$. Hence $\sequent{\contextGraph_1',...,\contextGraph_n',\assump'}{\goa'}$ is a schema for $\multiSpace$.
\end{proof}

We prove two lemmas that immediately entail theorem~\ref{thm:schema-application-schema}.

\begin{lemma}\label{lem:fwd-sequents-inst-schema}
Let $\sequent{\contextGraph_1,...,\contextGraph_n,\assump}{\goa}$ be a schema  and  $\sequent{\contextGraph_1',...,\contextGraph_n',\assump'}{\goa'}$ be a sequent for multi-space system $\multiSpace$. If $\sequent{\contextGraph_1',...,\contextGraph_{n}',\assump'\cup \delta}{\goa'}$ is a $\theDelta$-forward application of $\sequent{\contextGraph_1,...,\contextGraph_n,\assump}{\goa}$ to $\sequent{\contextGraph_1',...,\contextGraph_n',\assump'}{\goa'}$ and a schema for $\multiSpace$ then $\sequent{\contextGraph_1',...,\contextGraph_n',\assump'}{\goa'}$ is also a schema for $\multiSpace$.
\end{lemma}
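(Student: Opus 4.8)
The plan is to chain two schema-hood facts: first, that the refinement $\sequent{\contextGraph_1',\ldots,\contextGraph_n',\assump'}{\theDelta}$ appearing in the definition of the forward application is itself a schema; and second, that the given forward application $\sequent{\contextGraph_1',\ldots,\contextGraph_n',\assump'\cup\theDelta}{\goa'}$ is a schema by hypothesis. Composing the two inside the definition of a schema will yield the claim.

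First I would unpack Definition~\ref{defn:schema-application}: since $\sequent{\contextGraph_1',\ldots,\contextGraph_n',\assump'\cup\theDelta}{\goa'}$ is a $\theDelta$-forward application of $\sequent{\contextGraph_1,\ldots,\contextGraph_n,\assump}{\goa}$ to $\sequent{\contextGraph_1',\ldots,\contextGraph_n',\assump'}{\goa'}$, we know that $\sequent{\contextGraph_1',\ldots,\contextGraph_n',\assump'}{\theDelta}$ is a refinement of $\sequent{\contextGraph_1,\ldots,\contextGraph_n,\assump}{\goa}$. As the latter is a schema, Theorem~\ref{thm:weakening-schema} gives that $\sequent{\contextGraph_1',\ldots,\contextGraph_n',\assump'}{\theDelta}$ is a schema for $\multiSpace$. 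This is the ``first half'' of the chain.

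Next, to show $\sequent{\contextGraph_1',\ldots,\contextGraph_n',\assump'}{\goa'}$ is a schema, I would fix an arbitrary instantiatable specialisation function $\specialise \colon (\contextGraph_1',\ldots,\contextGraph_n',\assump') \to (\contextGraph_1'',\ldots,\contextGraph_n'',\assump'')$ and construct an instantiatable specialisation extending $\specialise$ onto $(\contextGraph_1',\ldots,\contextGraph_n',\assump'\cup\goa')$. Applying the schema $\sequent{\contextGraph_1',\ldots,\contextGraph_n',\assump'}{\theDelta}$ to $\specialise$, we obtain an instantiatable specialisation $\specialise_1 \colon (\contextGraph_1',\ldots,\contextGraph_n',\assump'\cup\theDelta) \to (\contextGraph_1'',\ldots,\contextGraph_n'',\assump''\cup\theDelta')$ extending $\specialise$. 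Now $\specialise_1$ is an instantiatable specialisation function out of $(\contextGraph_1',\ldots,\contextGraph_n',\assump'\cup\theDelta)$, so by the hypothesis that $\sequent{\contextGraph_1',\ldots,\contextGraph_n',\assump'\cup\theDelta}{\goa'}$ is a schema, we may extend $\specialise_1$ to an instantiatable specialisation $\specialise_2 \colon (\contextGraph_1',\ldots,\contextGraph_n',\assump'\cup\theDelta\cup\goa') \to (\contextGraph_1'',\ldots,\contextGraph_n'',\assump''\cup\theDelta'\cup\goa'')$. Finally I would restrict $\specialise_2$ to the subgraph $(\contextGraph_1',\ldots,\contextGraph_n',\assump'\cup\goa')$ of its domain; since $\specialise_2$ extends $\specialise$ and $(\contextGraph_1',\ldots,\contextGraph_n',\assump')\subseteq(\contextGraph_1',\ldots,\contextGraph_n',\assump'\cup\goa')$, this restriction still extends $\specialise$, and it is a specialisation onto its image as a restriction of one.

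The main obstacle I anticipate is the routine-but-fiddly verification that this last restriction is still an \emph{instantiatable} specialisation, i.e.\ that its codomain $\specialise_2[(\contextGraph_1',\ldots,\contextGraph_n',\assump'\cup\goa')]$ can be instantiated in $\multiSpace$. This holds because a specialisation is an isomorphism onto its image: taking a construction graph that specialises the full codomain $(\contextGraph_1'',\ldots,\contextGraph_n'',\assump''\cup\theDelta'\cup\goa'')$ and restricting the composite to $(\contextGraph_1',\ldots,\contextGraph_n',\assump'\cup\goa')$ yields a tuple of structure graphs, each a subgraph of the corresponding realm, hence a construction graph belonging to $\multiSpace$, together with a specialisation onto it — the same subgraph argument already underlying Lemma~\ref{lem:InstSchemaSubgraph}. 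Assembling these steps produces an instantiatable specialisation extending $\specialise$, and since $\specialise$ was arbitrary, $\sequent{\contextGraph_1',\ldots,\contextGraph_n',\assump'}{\goa'}$ is a schema for $\multiSpace$.
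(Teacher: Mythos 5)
Your proposal is correct and follows essentially the same route as the paper's own proof: apply Theorem~\ref{thm:weakening-schema} to conclude that the refinement $\sequent{\contextGraph_1',\ldots,\contextGraph_n',\assump'}{\theDelta}$ is a schema, use it to extend the given instantiatable specialisation onto $\assump'\cup\theDelta$, then use the hypothesised schema to extend further onto $\assump'\cup\theDelta\cup\goa'$, and finally restrict to $\assump'\cup\goa'$. Your extra paragraph justifying instantiatability of the restricted codomain is a detail the paper leaves implicit, but it is sound and does not change the argument.
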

\begin{proof}
Suppose that $\specialise\colon (\contextGraph_1',...,\contextGraph_{n}',\assump') \to (\contextGraph_1'',...,\contextGraph_{n}'',\assump'')$ is an instantiatable specialisation function. We must show that there exists an instantiatable specialisation function, $\specialise'\colon (\contextGraph_1',...,\contextGraph_{n}',\assump'\cup \goa') \to (\contextGraph_1'',...,\contextGraph_{n}'',\assump''\cup \goa'')$, that extends $\specialise$. By theorem~\ref{thm:weakening-schema}, we know that $\sequent{\contextGraph_1',...,\contextGraph_n',\assump'}{\theDelta}$ is a schema because it is a refinement of $\sequent{\contextGraph_1,...,\contextGraph_n,\assump}{\goa}$. Therefore, there exists an instantiatable specialisation function, $\specialise''\colon (\contextGraph_1',...,\contextGraph_{n}',\assump'\cup \theDelta) \to (\contextGraph_1'',...,\contextGraph_{n}'',\assump''\cup \theDelta')$, that extends $\specialise$. It is given that $\sequent{\contextGraph_1',...,\contextGraph_{n}',\assump'\cup \delta}{\goa'}$ is also a schema, so there must also exist an extension of $\specialise''$ to some instantiatable specialisation function, say  $\specialise'''\colon (\contextGraph_1',...,\contextGraph_{n}',\assump'\cup \theDelta\cup \goa') \to (\contextGraph_1'',...,\contextGraph_{n}'',\assump''\cup \theDelta''\cup \goa''')$. Restricting the domain of $\specialise'''$ to $(\contextGraph_1',...,\contextGraph_{n}',\assump'\cup \goa')$ yields an instantiatable specialisation function that extends $\specialise$, as required. Hence $\sequent{\contextGraph_1',...,\contextGraph_n',\assump'}{\goa'}$ is a schema.
\end{proof}

\begin{lemma}\label{lem:bwd-sequents-inst-schema}
Let $\sequent{\contextGraph_1,...,\contextGraph_n,\assump}{\goa}$ be a schema  and  $\sequent{\contextGraph_1',...,\contextGraph_n',\assump'}{\goa'}$ be a sequent for multi-space system $\multiSpace$. If  $\sequent{\contextGraph_1',...,\contextGraph_n',\assump'}{\theDelta}$ is a $\theDelta$-backward application of $\sequent{\contextGraph_1,...,\contextGraph_n,\assump'}{\goa}$ to $\sequent{\contextGraph_1',...,\contextGraph_n',\assump'}{\goa'}$ and a schema for $\multiSpace$ then $\sequent{\contextGraph_1',...,\contextGraph_n',\assump'}{\goa'}$ is also a schema for $\multiSpace$.
\end{lemma}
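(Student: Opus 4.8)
The statement to prove is Lemma~\ref{lem:bwd-sequents-inst-schema}: a $\theDelta$-backward application of a schema that happens to itself be a schema forces the original sequent $\sequent{\contextGraph_1',\ldots,\contextGraph_n',\assump'}{\goa'}$ to be a schema. This is the dual of Lemma~\ref{lem:fwd-sequents-inst-schema}, so the plan is to mirror that proof, swapping the roles of $\theDelta$ in the antecedent and consequent.

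First I would unpack the definition of a $\theDelta$-backward application (Definition~\ref{defn:schema-application}, case 2): the hypothesis that $\sequent{\contextGraph_1',\ldots,\contextGraph_n',\assump'}{\theDelta}$ is a $\theDelta$-backward application of $\sequent{\contextGraph_1,\ldots,\contextGraph_n,\assump}{\goa}$ to $\sequent{\contextGraph_1',\ldots,\contextGraph_n',\assump'}{\goa'}$ means precisely that $\sequent{\contextGraph_1',\ldots,\contextGraph_n',\assump'\cup \theDelta}{\goa'}$ is a refinement of the schema $\sequent{\contextGraph_1,\ldots,\contextGraph_n,\assump}{\goa}$. By Theorem~\ref{thm:weakening-schema}, a refinement of a schema is a schema, so $\sequent{\contextGraph_1',\ldots,\contextGraph_n',\assump'\cup \theDelta}{\goa'}$ is itself a schema for $\multiSpace$.

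Next I would take an arbitrary instantiatable specialisation function $\specialise\colon (\contextGraph_1',\ldots,\contextGraph_n',\assump') \to (\contextGraph_1'',\ldots,\contextGraph_n'',\assump'')$ and aim to extend it to one whose domain includes $\goa'$. Since $\sequent{\contextGraph_1',\ldots,\contextGraph_n',\assump'}{\theDelta}$ is (by assumption) a schema, $\specialise$ extends to an instantiatable specialisation $\specialise'\colon (\contextGraph_1',\ldots,\contextGraph_n',\assump'\cup \theDelta) \to (\contextGraph_1'',\ldots,\contextGraph_n'',\assump''\cup \theDelta')$. Now $\specialise'$ is an instantiatable specialisation of $(\contextGraph_1',\ldots,\contextGraph_n',\assump'\cup \theDelta)$, the antecedent side of the refinement schema $\sequent{\contextGraph_1',\ldots,\contextGraph_n',\assump'\cup \theDelta}{\goa'}$; applying the schema property of that sequent, $\specialise'$ extends further to an instantiatable specialisation $\specialise''\colon (\contextGraph_1',\ldots,\contextGraph_n',\assump'\cup \theDelta\cup \goa') \to (\contextGraph_1'',\ldots,\contextGraph_n'',\assump''\cup \theDelta''\cup \goa''')$. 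Finally, restricting the domain of $\specialise''$ to $(\contextGraph_1',\ldots,\contextGraph_n',\assump'\cup \goa')$ gives an instantiatable specialisation extending $\specialise$ (restriction preserves the extension property and, since restricting shrinks the image, instantiatability is preserved by the monotonicity of instantiatability under subgraphs). This establishes that $\sequent{\contextGraph_1',\ldots,\contextGraph_n',\assump'}{\goa'}$ is a schema.

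The only subtle point — and the one I would state carefully rather than gloss — is the chaining of two schema applications followed by the domain restriction: one must check that the codomain after restriction is still an instantiatable construction graph and that the restricted map is still a genuine specialisation function on the tuple. Both follow because the image of the restriction is a subgraph of an instantiatable graph and because restricting a specialisation to a subgraph of its domain is again a specialisation; these are exactly the routine facts used in the proof of Lemma~\ref{lem:fwd-sequents-inst-schema}, so no new obstacle arises. Thus Lemmas~\ref{lem:fwd-sequents-inst-schema} and~\ref{lem:bwd-sequents-inst-schema} together cover both cases of Definition~\ref{defn:schema-application}, immediately yielding Theorem~\ref{thm:schema-application-schema}.
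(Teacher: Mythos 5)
Your proof is correct and follows essentially the same route as the paper's: extend $\specialise$ using the hypothesis that $\sequent{\contextGraph_1',\ldots,\contextGraph_n',\assump'}{\theDelta}$ is a schema, invoke Theorem~\ref{thm:weakening-schema} to see that the refinement $\sequent{\contextGraph_1',\ldots,\contextGraph_n',\assump'\cup\theDelta}{\goa'}$ is a schema, extend again, and restrict the domain back to $(\contextGraph_1',\ldots,\contextGraph_n',\assump'\cup\goa')$. The only difference is presentational (you invoke Theorem~\ref{thm:weakening-schema} up front and spell out why the final restriction is harmless, which the paper leaves implicit).
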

\begin{proof}
Suppose that $\specialise\colon (\contextGraph_1',...,\contextGraph_{n}',\assump') \to (\contextGraph_1'',...,\contextGraph_{n}'',\assump'')$ is an instantiatable specialisation function. We must show that there exists an instantiatable specialisation function, $\specialise'\colon (\contextGraph_1',...,\contextGraph_{n}',\assump'\cup \goa') \to (\contextGraph_1'',...,\contextGraph_{n}'',\assump''\cup \goa')$, that extends $\specialise$. It is given that $\sequent{\contextGraph_1',...,\contextGraph_n',\assump'}{\theDelta}$ is a schema, so there exists an extension of $\specialise$ to some instantiatable specialisation function, $\specialise''\colon (\contextGraph_1',...,\contextGraph_{n}',\assump'\cup \theDelta) \to (\contextGraph_1'',...,\contextGraph_{n}'',\assump''\cup \theDelta')$. By theorem~\ref{thm:weakening-schema}, we know that $\sequent{\contextGraph_1',...,\contextGraph_{n},\assump'\cup \theDelta}{\goa'}$ is a schema because it is a refinement of $\sequent{\contextGraph_1,...,\contextGraph_n,\assump}{\goa}$. Therefore, we can extend $\specialise''$ to an instantiatable specialisation function, $\specialise'''\colon (\contextGraph_1',...,\contextGraph_{n}',\assump'\cup \theDelta\cup \goa') \to (\contextGraph_1'',...,\contextGraph_{n}'',\assump''\cup \theDelta'\cup \goa'')$. Notably, since the domain, $(\contextGraph_1',...,\contextGraph_{n}',\assump'\cup \theDelta\cup \goa')$ of $\specialise'''$ can be restricted to $(\contextGraph_1',...,\contextGraph_{n}',\assump'\cup \goa')$, we have the existence of an extension of $\specialise$ to an instantiatable specialisation, $\specialise'\colon (\contextGraph_1',...,\contextGraph_{n}',\assump'\cup \goa') \to (\contextGraph_1'',...,\contextGraph_{n}'',\assump''\cup \goa''')$, for some $\goa'''$. Hence $\sequent{\contextGraph_1',...,\contextGraph_n',\assump'}{\goa'}$ is a schema.
\end{proof}

Theorem~\ref{thm:schema-application-schema}, restated below, is a corollary of the above two lemmas.
\begin{theorem}
	Let $\sequent{\contextGraph_1,\ldots,\contextGraph_n,\assump}{\goa}$  be a schema and $\sequent{\contextGraph_1',\ldots,\contextGraph_n',\assump'}{\goa'}$   be a sequent for multi-space system $\multiSpace$.  If $\sequent{\contextGraph_1',\ldots,\contextGraph_n',\assump''}{\goa''}$ is an application of $\sequent{\contextGraph_1,\ldots,\contextGraph_n,\assump}{\goa}$  to $\sequent{\contextGraph_1',\ldots,\contextGraph_n',\assump'}{\goa'}$ and a schema for $\multiSpace$ then $\sequent{\contextGraph_1',\ldots,\contextGraph_n',\assump'}{\goa'}$ is also a schema for $\multiSpace$.
\end{theorem}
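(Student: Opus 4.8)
The plan is to split on the two clauses of Definition~\ref{defn:schema-application} and handle each by a short lemma, so that the theorem is just the disjunction of those two lemmas. By definition, the application $\sequent{\contextGraph_1',\ldots,\contextGraph_n',\assump''}{\goa''}$ is either a $\theDelta$-forward application, which has the shape $\sequent{\contextGraph_1',\ldots,\contextGraph_n',\assump'\cup\theDelta}{\goa'}$ with $\sequent{\contextGraph_1',\ldots,\contextGraph_n',\assump'}{\theDelta}$ a refinement of the given schema, or a $\theDelta$-backward application, which has the shape $\sequent{\contextGraph_1',\ldots,\contextGraph_n',\assump'}{\theDelta}$ with $\sequent{\contextGraph_1',\ldots,\contextGraph_n',\assump'\cup\theDelta}{\goa'}$ a refinement of the given schema. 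In both cases the crucial external input is Theorem~\ref{thm:weakening-schema}: a refinement of a schema is a schema. So in the forward case $\sequent{\contextGraph_1',\ldots,\contextGraph_n',\assump'}{\theDelta}$ is a schema, and in the backward case $\sequent{\contextGraph_1',\ldots,\contextGraph_n',\assump'\cup\theDelta}{\goa'}$ is a schema.

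For the forward case I would argue as follows. Take any instantiatable specialisation function $\specialise\colon(\contextGraph_1',\ldots,\contextGraph_n',\assump')\to(\contextGraph_1'',\ldots,\contextGraph_n'',\assump'')$; the goal is to extend it to an instantiatable specialisation of $(\contextGraph_1',\ldots,\contextGraph_n',\assump'\cup\goa')$. Since $\sequent{\contextGraph_1',\ldots,\contextGraph_n',\assump'}{\theDelta}$ is a schema, $\specialise$ extends to an instantiatable specialisation $\specialise'$ whose domain is $(\contextGraph_1',\ldots,\contextGraph_n',\assump'\cup\theDelta)$. Now $\specialise'$ is an instantiatable specialisation of the assumptions of the \emph{application} sequent $\sequent{\contextGraph_1',\ldots,\contextGraph_n',\assump'\cup\theDelta}{\goa'}$, which is a schema by hypothesis, so $\specialise'$ extends again to an instantiatable specialisation $\specialise''$ whose domain includes $\goa'$. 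Restricting $\specialise''$ to domain $(\contextGraph_1',\ldots,\contextGraph_n',\assump'\cup\goa')$ (and trimming the codomain to the image so that it stays surjective) yields the desired extension of $\specialise$. The backward case is the mirror image: given $\specialise$ on $(\contextGraph_1',\ldots,\contextGraph_n',\assump')$, first extend through the application schema $\sequent{\contextGraph_1',\ldots,\contextGraph_n',\assump'}{\theDelta}$ to pick up $\theDelta$, then extend through the refinement schema $\sequent{\contextGraph_1',\ldots,\contextGraph_n',\assump'\cup\theDelta}{\goa'}$ (a schema by Theorem~\ref{thm:weakening-schema}) to pick up $\goa'$, and finally restrict back down to $(\contextGraph_1',\ldots,\contextGraph_n',\assump'\cup\goa')$.

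The routine-but-delicate part will be the bookkeeping about extensions and restrictions of specialisation functions: one must check that composing two "extend"-steps really gives a single instantiatable specialisation (the intermediate codomain, e.g. $(\contextGraph_1'',\ldots,\contextGraph_n'',\assump''\cup\theDelta')$, is instantiatable, so its subgraphs are too), and that restricting the domain of an instantiatable specialisation to a subgraph again produces an instantiatable specialisation — which is immediate since an instantiation of the larger graph restricts to an instantiation of the subgraph. I expect no genuine obstacle here, because all the hard work (monotonicity, the interplay of reifications and loosenings) has already been absorbed into the definition of refinement and into Theorem~\ref{thm:weakening-schema}; the only thing left is to chain the "schema" property twice in the right order and discard the surplus $\theDelta$ at the end. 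Accordingly I would present the proof as two lemmas (forward and backward) with the theorem stated as their immediate corollary.
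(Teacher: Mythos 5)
Your proposal is correct and follows essentially the same route as the paper: the paper also splits into a forward lemma and a backward lemma (Lemmas B.3 and B.4 in the appendix), in each case chaining the two schema properties in the same order you describe — the refinement-is-a-schema step via Theorem~\ref{thm:weakening-schema} and the hypothesis that the application is a schema — and then restricting the resulting specialisation back down to $(\contextGraph_1',\ldots,\contextGraph_n',\assump'\cup\goa')$. The theorem is then stated as the immediate corollary of the two lemmas, exactly as you propose.
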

\begin{proof}
	A schema application is either a forward or a backward application, respectively covered by lemmas~\ref{lem:fwd-sequents-inst-schema} and~\ref{lem:bwd-sequents-inst-schema}.
\end{proof}

We now restate and prove theorem~\ref{thm:valid-sequents}.

\begin{theorem}
Let $\mathbb{S}$ be a set of schemas and let $\sequent{\contextGraph_1',...,\contextGraph_n',\assump'}{\goa'}$ be a sequent for multi-space system $\multiSpace$. If $\sequent{\contextGraph_1',...,\contextGraph_n',\assump'}{\goa'}$ is valid over $\mathbb{S}$ then it is also a  schema for $\multiSpace$.
\end{theorem}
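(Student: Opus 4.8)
The plan is to argue by induction on the structure of the definition of validity (Definition~\ref{defn:valid-instantiation-sequent}); equivalently, by induction on the length of the sequence of schema applications witnessing that $\sequent{\contextGraph_1',\ldots,\contextGraph_n',\assump'}{\goa'}$ is valid over $\mathbb{S}$. So suppose $\sequent{\contextGraph_1',\ldots,\contextGraph_n',\assump'}{\goa'}$ is valid over $\mathbb{S}$. There are two cases, matching the two clauses of Definition~\ref{defn:valid-instantiation-sequent}.

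\emph{Base case.} If $\sequent{\contextGraph_1',\ldots,\contextGraph_n',\assump'}{\goa'}\in \mathbb{S}$, then since $\mathbb{S}$ is a set of schemas, $\sequent{\contextGraph_1',\ldots,\contextGraph_n',\assump'}{\goa'}$ is a schema for $\multiSpace$, and we are done.

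\emph{Inductive step.} Otherwise, by clause~2 of Definition~\ref{defn:valid-instantiation-sequent}, there exist a schema $\sequent{\contextGraph_1,\ldots,\contextGraph_n,\assump}{\goa}\in \mathbb{S}$ and a sequent $\sequent{\contextGraph_1',\ldots,\contextGraph_n',\assump''}{\goa''}$ that is valid over $\mathbb{S}$ and is an application of $\sequent{\contextGraph_1,\ldots,\contextGraph_n,\assump}{\goa}$ to $\sequent{\contextGraph_1',\ldots,\contextGraph_n',\assump'}{\goa'}$. Since $\sequent{\contextGraph_1',\ldots,\contextGraph_n',\assump''}{\goa''}$ is valid over $\mathbb{S}$ by a strictly shorter derivation, the induction hypothesis gives that $\sequent{\contextGraph_1',\ldots,\contextGraph_n',\assump''}{\goa''}$ is a schema for $\multiSpace$. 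Now we invoke Theorem~\ref{thm:schema-application-schema}: we have a schema $\sequent{\contextGraph_1,\ldots,\contextGraph_n,\assump}{\goa}$, a sequent $\sequent{\contextGraph_1',\ldots,\contextGraph_n',\assump'}{\goa'}$, and $\sequent{\contextGraph_1',\ldots,\contextGraph_n',\assump''}{\goa''}$ is an application of the former to the latter which is moreover a schema; hence $\sequent{\contextGraph_1',\ldots,\contextGraph_n',\assump'}{\goa'}$ is a schema for $\multiSpace$. This completes the induction.

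The only real point requiring care is the set-up of the induction: validity is defined by a recursion that bottoms out in membership of $\mathbb{S}$, so one should make explicit that the witnessing sequence of applications is finite and that the ``previous'' valid sequent has a shorter witness, so that the induction is well-founded. With that observation in place, there is no further obstacle — the inductive step is a single direct application of Theorem~\ref{thm:schema-application-schema}, and Theorem~\ref{cor:instantiation-of-inst-seq} then follows immediately by unwinding the definition of a schema (Definition~\ref{defn:instantiation-schema}) with $\specialise$ taken to be an instantiation of $(\contextGraph_1,\ldots,\contextGraph_n,\assump)$.
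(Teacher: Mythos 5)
Your proof is correct and follows essentially the same route as the paper's: induction on the depth of the recursive witness of validity, with the base case handled by membership in $\mathbb{S}$ and the inductive step discharged by a single application of Theorem~\ref{thm:schema-application-schema}. Your explicit remark about well-foundedness of the induction is a reasonable point of care that the paper leaves implicit.
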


\begin{proof}
The proof is by induction, over the depth of the recursion used to establish the validity of $\sequent{\contextGraph_1',...,\contextGraph_n',\assump'}{\goa'}$. The base case, where $\sequent{\contextGraph_1',...,\contextGraph_n',\assump'}{\goa'}\in \mathbb{S}$, is trivial since $\mathbb{S}$ is a set of schemas. Assume, for any sequent, $\sequent{\contextGraph_1',...,\contextGraph_n',\assump'}{\goa'}$, valid over $\mathbb{S}$ at depth $k$ that $\sequent{\contextGraph_1',...,\contextGraph_n',\assump'}{\goa'}$ is a schema. Let $\sequent{\contextGraph_1',...,\contextGraph_n',\assump'}{\goa'}$ be a valid sequent at depth $k+1$. We must show that $\sequent{\contextGraph_1',...,\contextGraph_n',\assump'}{\goa'}$ is a schema. Since $\sequent{\contextGraph_1',...,\contextGraph_n',\assump'}{\goa'}$ is valid, there exists $\sequent{\contextGraph_1,...,\contextGraph_n,\assump}{\goa} \in \mathbb{S}$ and a sequent, $\sequent{\contextGraph_1',...,\contextGraph_n',\assump''}{\goa''}$, for $\multiSpace$ such that
    \begin{enumerate}
    \item $\sequent{\contextGraph_1',...,\contextGraph_n',\assump''}{\goa''}$ is valid at depth $k$, and
    \item $\sequent{\contextGraph_1',...,\contextGraph_n',\assump''}{\goa''}$ is an application of $\sequent{\contextGraph_1,...,\contextGraph_n,\assump}{\goa}$ to $\sequent{\contextGraph_1',...,\contextGraph_n',\assump'}{\goa'}$.
    \end{enumerate}
By the inductive assumption, $\sequent{\contextGraph_1',...,\contextGraph_n',\assump''}{\goa''}$ is a schema. Thus, by theorem~\ref{thm:schema-application-schema}, we deduce that $\sequent{\contextGraph_1',...,\contextGraph_n',\assump'}{\goa'}$ is a schema for $\multiSpace$, as required.
\end{proof}

\begin{example}\label{appendixSec4:example} In Figure~\ref{fig:inst-applications}, on the left, we show 3 schemas, $\pi$, $\rho$, and $\sigma$, for \textsc{Set Algebra}. Schema $\pi$ encodes the fact that any type we \textit{input} into constructor \texttt{infixOp} leads to a potential instantiation of the output in a manner that is consistent with the types of the inputs. Schema $\rho$, with empty antecedent, encodes the fact that type $\texttt{B}$ can be instantiated -- that is, that there exist a token, $B$, of type $\texttt{B}$. Finally, schema $\sigma$ encodes the fact that the \texttt{union} type can be instantiated (with a token of the form $\cup$).
	
	Suppose we want to know whether a pattern graph, $\goa$, as shown in the pattern graph (centre top), can be instantiated. We represent this task with a sequent $\sequent{\assump}{\goa}$ where $\assump$ is the empty graph and $\goa$ is the pattern graph we in question. Using schemas A, B and C we can show that $\goa$ is instantiatble by applying schemas in a backward manner, in forward manner, or a combination of the two. Below we show one particular sequence of schema applications that works. The first step involves applying A in a backward manner. remove the graph of $\goa$ corresponding to $\consequent$ and add back the part corresponding to $\antecedent$. Notice that, in the end, we reach a sequent where $\assump$ contains $\goa$, showing that the starting sequent is a valid instantiation sequent. As we will show, this means $\goa$ is instantiatable.
	\begin{figure}[ht]
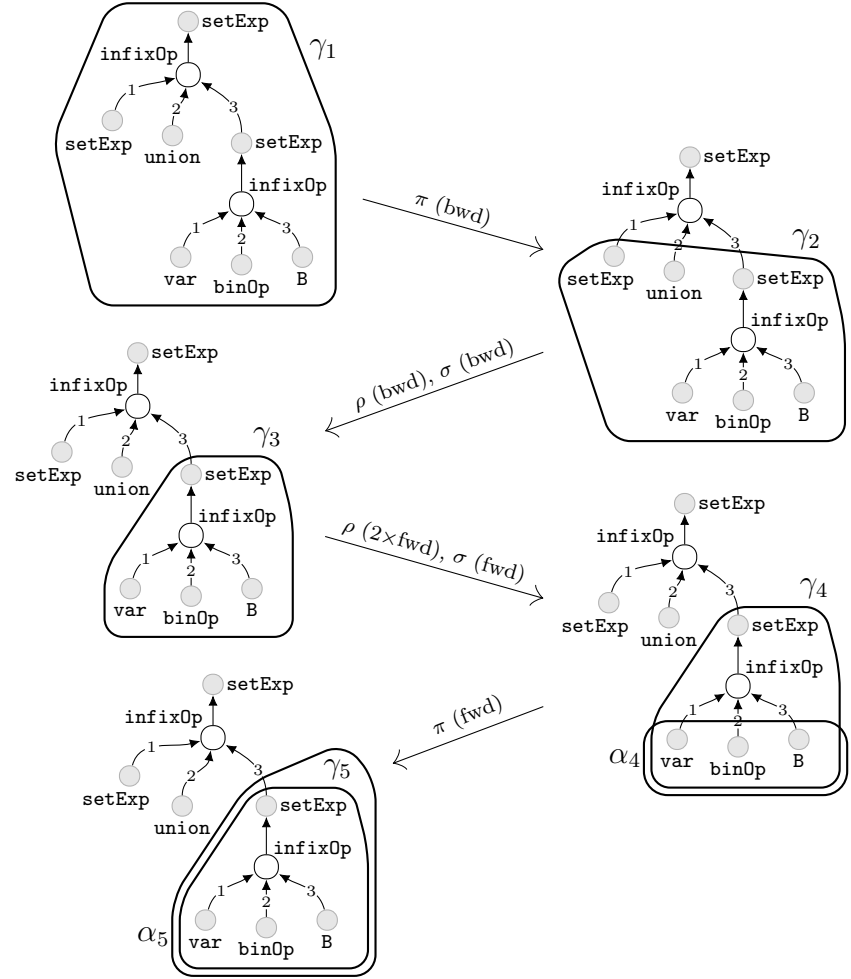

}};
	\draw[-{Classical TikZ Rightarrow[length=1.5mm]}] (p1) -- node[anchor=center, yshift = 0.2cm]{\scriptsize \rotatebox{-17}{$\pi$ (bwd)}} (p2);
	\draw[-{Classical TikZ Rightarrow[length=1.5mm]}] (p2) -- node[anchor=center, yshift = 0.22cm]{\scriptsize \rotatebox{20}{$\rho$ (bwd), $\sigma$ (bwd)}} (p3);
	\draw[-{Classical TikZ Rightarrow[length=1.5mm]}] (p3) -- node[anchor=center, yshift = 0.22cm]{\scriptsize \rotatebox{-16}{$\rho$ ($2\times$fwd), $\sigma$ (fwd)}} (p4);
	\draw[-{Classical TikZ Rightarrow[length=1.5mm]}] (p4) -- node[anchor=center, yshift = 0.25cm]{\scriptsize \rotatebox{22}{$\pi$ (fwd)}} (p5);
	\draw (1.5,3.02) -- (1.5,-10.75);
	\node[anchor = north west, text width = 4cm] () at (-2.1,3.1) {\textbf{Schemas}};
	\node[anchor = north west] () at (1.7,3.1) {\textbf{Sequence of applications}};
\end{tikzpicture}\caption{A sequence of schema applications}\label{fig:inst-applications}
\end{figure}
\end{example}


\begin{example} In Figure~\ref{fig:inst-applications}, on the left, we show 3 schemas, $\pi$, $\rho$ and $\sigma$, for \textsc{Set Algebra}, two of which -- $\rho$ and $\sigma$ --  have an empty antecedent graph. Schema $\pi$, $\sequent{\antecedentN{\pi}}{\consequentN{\pi}}$,  encodes the fact that any tokens, drawn from \textsc{Set Algebra}, of the three \textit{input} types of the constructor \texttt{infixOp} leads to an instantiation of the output. Schema $\rho$, which is $\sequent{\ }{\consequentN{\rho}}$, encodes the fact that type $\texttt{B}$ can be instantiated -- that is, that there exist a token, $B$, of type $\texttt{B}$. Finally, schema $\sigma$, which is $\sequent{\ }{\consequentN{\sigma}}$, encodes the fact that the \texttt{union} type can be instantiated (with a token of the form $\cup$).
	
Suppose we want to know whether a pattern graph, such as $\goa_1$ shown centre top, can be instantiated. We represent this task with a sequent $\sequent{\contextGraph_1}{\goa_1}$ where $\contextGraph_1$ is the empty graph and $\goa_1$ is the pattern graph we in question. Using schemas $\pi$, $\rho$ and $\sigma$ we can show that $\goa_1$ is instantiatable by applying them in a backward manner, in forward manner, or a combination of the two.

Below we show one particular sequence of applications. The first step involves applying $\pi$ in a backward manner. We start with the weakening of $\sequent{\antecedentN{\pi}}{\consequentN{\pi}}$ to $\sequent{\assump^{\pi}}{\goa_1}$, where some map, $f$, maps $\consequentN{\pi}$ to the top part of $\goa_1$; this gives $\goa_1\backslash f[\consequentN{\pi}]$ as the $\goa_1$-extender, highlighted in the figure. We then have a backward application of $\sequent{\antecedentN{\pi}}{\consequentN{\pi}}$ to $\sequent{\ }{\goa_1}$ being $\sequent{\ }{(\goa_1\backslash f[\consequentN{\pi}])\cup \assump^{\pi}}$, which is the next sequent in the diagram, namely $\sequent{\ }{\goa_2}$. Essentially, this backwards application removed -- from $\goa_1$ -- the subgraph of $\goa_1$ it that was mapped to by $\consequentN{\pi}$ and added back the part, $\assump^{\pi}$, that was mapped to by $\antecedentN{\pi}$. Notice that, after all seven applications, we reach the sequent $\sequent{\contextGraph_5}{\goa_5}$, where the assumption, $\contextGraph_5$ is a subgraph of $\goa_5$ (in fact, they are equal).  Trivially, this sequent is a schema. Theorem~\ref{thm:valid-sequents} allows us to deduce that the original sequent, $\sequent{\ }{\goa_1}$, is also a schema. Therefore, it must be that $\goa_1$ can be instantiated, since the empty graph is itself instantiatable.
\end{example}

We restate and prove Theorem~\ref{thm:transfer-sequence-valid}.
\begin{theorem}
	Let $\sequent{\contextGraph_1,...,\contextGraph_n,\assump}{\goa}$ be a sequent for $\multiSpace$, let $\sigma \subseteq \{1,\ldots,n\}$, and let $\mathbb{T}$ be a set of $\sigma$-transfer schemas. Assume we apply these schemas sequentially, starting with $\sequent{\contextGraph_1,...,\contextGraph_n,\assump}{\goa}$ and ending with $\sequent{\contextGraph_1',...,\contextGraph_n',\assump'}{\goa'}$. If all schemas in $\mathbb{T}$ are monotonic for $(\contextGraph_1',...,\contextGraph_n',\assump')$ and $\sequent{\contextGraph_1',...,\contextGraph_n',\assump'}{\goa'}$ is a valid sequent, then $\sequent{\contextGraph_1,...,\contextGraph_n,\assump}{\goa}$ is valid modulo $\sigma$.
\end{theorem}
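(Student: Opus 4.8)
The plan is to prove the statement by reorganising the given sequence of $\sigma$-transfer schema applications so that all of the $\sigma$-reifications are collected at the front. This exposes a single $\sigma$-reification of $\sequent{\contextGraph_1,\ldots,\contextGraph_n,\assump}{\goa}$ after which only ordinary (non-transfer) schema applications occur, ending at $\sequent{\contextGraph_1',\ldots,\contextGraph_n',\assump'}{\goa'}$; since validity propagates backwards along ordinary applications, the $\sigma$-reification at the front is then seen to be valid, which is exactly what ``valid modulo $\sigma$'' demands.

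Concretely, I would write the sequence as $\texttt{sq}_0 = \sequent{\contextGraph_1,\ldots,\contextGraph_n,\assump}{\goa}$, where step $k$ (for $0\le k<m$) consists of a $\sigma$-reification $\texttt{sq}_k\to\texttt{sq}_k^{r}$ followed by an ordinary application of some schema $\texttt{sc}_k\in\mathbb{T}$ to $\texttt{sq}_k^{r}$, yielding $\texttt{sq}_{k+1}$, with $\texttt{sq}_m = \sequent{\contextGraph_1',\ldots,\contextGraph_n',\assump'}{\goa'}$. Two closure facts are routine. First, $\sigma$-reifications compose: composing the underlying reification functions again gives a reification function, composing label-preserving isomorphisms up to the target tokens gives one again, and since a reification function is a monomorphism the ``up to $\bigcup_{i\in\sigma}\contextGraph_i$'' clause is preserved; hence a $\sigma$-reification of a $\sigma$-reification of $\texttt{sq}$ is a $\sigma$-reification of $\texttt{sq}$. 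Second, validity propagates backwards: if $\texttt{sq}''$ is valid over $\mathbb{S}$ and is an ordinary application of a schema $\texttt{sc}$ to $\texttt{sq}'$, then $\texttt{sq}'$ is valid over $\mathbb{S}\cup\{\texttt{sc}\}$ directly by Definition~\ref{defn:valid-instantiation-sequent}; iterating (in the spirit of Theorems~\ref{thm:schema-application-schema} and~\ref{thm:valid-sequents}) shows that if $\texttt{sq}$ reaches a valid sequent through finitely many ordinary applications of schemas from a set $\mathbb{S}_0$, then $\texttt{sq}$ is valid.

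The heart of the argument is a commutation lemma: \emph{if $\texttt{sq}'$ is an ordinary application of a schema $\texttt{sc}$ to a sequent $\texttt{sq}$, and $\texttt{sq}''$ is a $\sigma$-reification of $\texttt{sq}'$, then there is a $\sigma$-reification $\widetilde{\texttt{sq}}$ of $\texttt{sq}$ such that $\texttt{sq}''$ is an ordinary application of $\texttt{sc}$ to $\widetilde{\texttt{sq}}$}. To prove it I would transport the witnessing data. Say $\texttt{sq}'$ is a $\theDelta$-forward application, so there is a refinement $\sequent{\psi_1,\ldots,\psi_n,\beta}{\theDelta}$ of $\texttt{sc}$ with refinement map $\refine$, and, writing $\beta,\epsilon$ for the antecedent and goal of $\texttt{sq}$, we have $\texttt{sq}' = \sequent{\psi_1,\ldots,\psi_n,\beta\cup\theDelta}{\epsilon}$; let $\reify$ be the $\sigma$-reification map of $\texttt{sq}''$ over $\texttt{sq}'$. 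Since $\reify$ is a monomorphism it commutes with unions, so $\texttt{sq}'' = \sequent{\reify[\psi_1],\ldots,\reify[\psi_n],\reify[\beta]\cup\reify[\theDelta]}{\reify[\epsilon]}$, and I would take $\widetilde{\texttt{sq}} := \sequent{\reify[\psi_1],\ldots,\reify[\psi_n],\reify[\beta]}{\reify[\epsilon]}$ with the restriction of $\reify$ as witnessing reification function. Three things must be checked: (a) $\reify$ restricted to the relevant subgraphs is still a reification function, still label-preserving up to the target tokens on the source dimensions, the antecedent and the goal, since restricting a reification or specialisation to a subgraph gives one again; (b) $\sequent{\reify[\psi_1],\ldots,\reify[\psi_n],\reify[\beta]}{\reify[\theDelta]}$ is again a refinement of $\texttt{sc}$ with refinement map $\reify\circ\refine$ --- the inclusion clause $\reify[\theDelta]\subseteq\reify[\beta]\cup(\reify\circ\refine)[\goa^{\texttt{sc}}]$ is immediate from $\theDelta\subseteq\beta\cup\refine[\goa^{\texttt{sc}}]$, and the monotonicity clause of Definition~\ref{defn:refinement} reduces, using $\refine[\contextGraph_i^{\texttt{sc}}]\subseteq\psi_i$ and $\refine[\assump^{\texttt{sc}}]\subseteq\beta$, to showing the $\reify$-image of the schema $\sequent{\psi_1,\ldots,\psi_n,\beta}{\refine[\goa^{\texttt{sc}}]}$ is a schema, which is Lemma~\ref{lemma:weakening-image} applied with $\reify$ as the weakening map (legitimate because $\reify$ acts isomorphically on the meta-space dimension up to the target tokens); and (c) the forward application of $\texttt{sc}$ to $\widetilde{\texttt{sq}}$ along this refinement literally reproduces $\texttt{sq}''$. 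The backward case is symmetric; there $\reify$ does not a priori see the original goal $\epsilon$ that the application had replaced, so one first extends $\reify$ by a label-preserving isomorphism over the part of $\epsilon$ outside the antecedent, then argues as before. This bookkeeping --- in particular re-verifying the monotonicity clause of Definition~\ref{defn:refinement} for the enlarged context-antecedent --- is where the hypothesis that every schema in $\mathbb{T}$ is monotonic for $(\contextGraph_1',\ldots,\contextGraph_n',\assump')$ is invoked, and it is the main obstacle of the proof.

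Finally I would assemble everything, either by a straightforward induction on $m$ or directly by repeatedly applying the commutation lemma: push $\texttt{sq}_1^{r},\texttt{sq}_2^{r},\ldots$ leftwards past the ordinary applications preceding them and merge each, via the composition fact, with the $\sigma$-reifications already collected at the front, until the sequence takes the form $\texttt{sq}_0 \to \texttt{sq}_0^{\ast} \to \cdots \to \texttt{sq}_m$ with a single $\sigma$-reification $\texttt{sq}_0\to\texttt{sq}_0^{\ast}$ followed by ordinary applications of schemas from $\mathbb{T}$. Since $\texttt{sq}_m = \sequent{\contextGraph_1',\ldots,\contextGraph_n',\assump'}{\goa'}$ is valid, the backward-propagation fact makes $\texttt{sq}_0^{\ast}$ valid; and since $\texttt{sq}_0^{\ast}$ is a $\sigma$-reification of $\texttt{sq}_0 = \sequent{\contextGraph_1,\ldots,\contextGraph_n,\assump}{\goa}$, this sequent is valid modulo $\sigma$, as required.
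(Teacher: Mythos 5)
Your proposal is correct and follows essentially the same route as the paper's own proof: collect the $\sigma$-reifications at the front (using that reifications compose), use the monotonicity hypothesis to guarantee the schemas remain applicable after reifying the context, and conclude that the front-loaded $\sigma$-reification of the original sequent is valid because it reaches the valid final sequent by ordinary applications. The paper states this in a few lines and leaves the commutation step as ``easy to see''; your commutation lemma and its bookkeeping are just the detailed justification of that same step.
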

\begin{proof}
	Suppose we $\sigma$-apply transfer schemas $m$ times to $\sequent{\contextGraph_1,...,\contextGraph_n,\assump}{\goa}$ to end with a valid sequent, $\sequent{\contextGraph_1',...,\contextGraph_n',\assump'}{\goa'}$. This gives rise to $m$ $\sigma$-reifications, that is $m$ reifications, $\reify_1, \ldots, \reify_m$. The composition of reifications is a reification. Then, it is easy to see that we can apply $\reify_m \circ \cdots \circ \reify_1$ to obtain some sequent $\sequent{\contextGraph_1'',...,\contextGraph_n'',\assump''}{\goa''}$ and then we can apply the schemas to obtain a sequent which is label-isomorphic to $\sequent{\contextGraph_1',...,\contextGraph_n',\assump'}{\goa'}$. This is because if a monotonic schema is applicable to a sequent then it is applicable to any reification of the context. Therefore,  $\sequent{\contextGraph_1'',...,\contextGraph_n'',\assump''}{\goa''}$ is valid and thus $\sequent{\contextGraph_1,...,\contextGraph_n,\assump}{\goa}$ is valid modulo $\sigma$.
\end{proof}

\section{Appendix: Structure transfer worked example}\label{sec:appendixTransfer}
\newtheorem{sch}{Schema}[]
	
The following transfer schemas characterise the notion of depiction.
\begin{sch}From example~\ref{ex:depict-conj}. Let $\multiSpace = (\mathcal{C},\mathcal{D},\mathcal{G})$ be a multi-space where $\mathcal{C}$ encodes Set Algebra, $\mathcal{D}$ encodes Euler Diagrams and $\mathcal{G}$ encodes relations across and within the spaces. The sequent $\sequent{\contextGraph_1,\contextGraph_2,\assump}{\goa}$ for $\multiSpace$, depicted below, is as schema. It captures the intuition that a diagram depicts a conjunction if it depicts both conjuncts.
	\begin{center}

\begin{sch}[Depicting subsets] From example~\ref{ex:subset-depict}. To depict an expression of the form $x \subseteq \eta$, where $\eta$ is a variable, it suffices to take a diagram, $d$, and make sure that  $\eta$  is drawn so that the region corresponding to $x$ is part of the \textit{in} regions of the curve labelled by $\eta$ (based on~\cite{stapleton2010inductively}). This means that \textbf{for every $\eta$} the following is a schema\footnote{Hereafter we will draw sequents and schemas of the form $\sequent{\contextGraph_1,\contextGraph_2,\assump}{\goa}$ with $\contextGraph_1$ on the left, $\contextGraph_2$ on the right, $\assump$ with thick blue arrows, and $\goa$ with dashed red arrows.}:
	\begin{center}

	\end{center}
	Using this schema as a $\{2\}$-transfer schema would produce a diagram from a given formula\footnote{Moreover, using this schema as a $\{1\}$-transfer schema can be used for producing a formula for a given diagram. Using it as $\{1,2\}$-transfer schema can be used to produce both the given assumptions and goals. Using it as a $\emptyset$-transfer schema is no different to using it as a schema.}.
\end{sch}
\begin{sch}[Depicting disjoint sets] From example~\ref{ex:disjoint-depict}. To depict an expression of the form $x \cap \eta = \emptyset$, where $x$ is a set expression and $\eta$ is a variable, it suffices to take a diagram, $d$, and add $\eta$ in such a way that the region corresponding to $x$ in $d$ is part of the \textit{out} regions of the curve labelled by $\eta$ (based on~\cite{stapleton2010inductively}). Thus, the following figure describes a family of schemas, one for each $\eta$, subtype of $\texttt{var}$ in \textsc{Set Algebra}, and subtype of $\texttt{\texttt{label}}$ in \textsc{Euler Diagrams}.
	\begin{center}

	\end{center}
\end{sch}
\begin{sch}[Top-down instantiation of an \texttt{addCurve} configuration]From example~\ref{ex:addCurve-top-down}. Provided a diagram of type $\mathtt{\{\eta x_1,\ldots,\eta x_l,\eta y_1,\ldots,\eta y_m,y_1,\ldots,y_m,z_1,\ldots,z_n\}} \leq \mathtt{diagram}$, where $\mathtt{\eta}$ is a label that does not appear in either $\mathtt{x_1,\ldots,x_l,y_1,\ldots,y_m,z_1,\ldots,z_n}$ then we can decompose $\mathtt{\{\eta x_1,\ldots,\eta x_l,}\linebreak[2]\mathtt{\eta y_1,\ldots,\eta y_m,y_1,\ldots,y_m,z_1,\ldots,z_n\}}$ by removing $\mathtt{\eta}$. This is captured by the family of schemas $\sequent{\assump}{\goa}$, pictured below, for unary multi-space system $\mathcal{D}$:
	\begin{center}
		\begin{tikzpicture}[construction]
			\node[typeN = {$\mathtt{\{\eta x_1,\ldots,\eta x_l,\eta y_1,\ldots,\eta y_m,y_1,\ldots,y_m,z_1,\ldots,z_n\}}$}] (t) {};
			\node[constructor = {\texttt{addCurve}}, below = 0.5cm of t] (c) {};
			\node[typepos = {$\mathtt{\{x_1,\ldots,x_l,y_1,\ldots,y_m,z_1,\ldots,z_n\}}$}{-170}{1.57cm},below left = 0.3cm and 0.7cm of c] (t1) {};
			\node[typepos = {$\mathtt{\eta}$}{-90}{0.15cm},below left = 0.5cm and -0.1cm of c] (t2) {};
			\node[typepos = {$\mathtt{\{x_1,\ldots,x_l\}}$}{-45}{0.18cm},below right = 0.5cm and 0.4cm of c] (t3) {};
			\node[typeE = {$\mathtt{\{z_1,\ldots,z_n\}}$},below right = 0.3cm and 1.2cm of c] (t4) {};
			\path[->] (c) edge (t) 
			(t1) edge[out = 85, in = -170] node[index label]{1} (c) 
			(t2) edge[out = 90, in = -120] node[index label]{2} (c) 
			(t3) edge[out = 105, in = -60] node[index label]{3} (c) 
			(t4) edge[out = 120, in = -5] node[index label]{4} (c);
			\coordinate[above left = 0.45cm and 3.05cm of t] (x1);
			\coordinate[above right = 0.45cm and 3.05cm of t] (x2);
			\coordinate[below right = 0.2cm and 3.05cm of t] (x3);
			\coordinate[below left = 0.2cm and 3.05cm of t] (x4);
			\draw[rounded corners = 7, very thick, darkblue, draw opacity = 0.8] (x1) -- (x2) -- (x3) --node[xshift = -2.5cm, yshift = -0.2cm]{\large$\assump$} (x4) -- cycle;
			\coordinate[above left = 0.55cm and 3.2cm of t] (y1);
			\coordinate[above right = 0.55cm and 3.2cm of t] (y2);
			\coordinate[above right = 0.2cm and 1.7cm of t4] (y3);
			\coordinate[below right = 0.6cm and 1.7cm of t4] (y4);
			\coordinate[below left = 0.6cm and 3.8cm of t1] (y5);
			\coordinate[above left = 0.2cm and 3.8cm of t1] (y6);
			\draw[rounded corners = 8, very thick, darkred, dashed] (y1) -- (y2) -- (y3) -- (y4) -- (y5) -- (y6) --node[xshift = -0.15cm, yshift = 0.2cm]{\large$\goa$} cycle;
		\end{tikzpicture}
	\end{center}
	This family of schemas represents knowledge about how to \textit{parse} an Euler diagram.
\end{sch}

\begin{sch}[Bottom-up instantiation of an \texttt{addCurve} configuration]From example~\ref{ex:addCurve-bottom-up}. Provided that \texttt{addCurve} takes as inputs: a diagram of type $\mathtt{\{x_1,\ldots,x_l,y_1,\ldots,y_m,z_1,\ldots,z_n\}} \leq \mathtt{diagram}$, a label $\eta \leq \mathtt{label}$, an \textit{in} region $\mathtt{\{x_1,\ldots,x_l\}} \leq \mathtt{region}$, and an \textit{out} region $\mathtt{\{z_1,\ldots,z_n\}} \leq \mathtt{region}$, where $\mathtt{\eta}$ does not appear in $\mathtt{\{x_1,\ldots,x_l,y_1,\ldots,y_m,z_1,\ldots,z_n\}}$, i.e. it is not a label in any of the words,
	\textit{then} we can infer the output. This is captured by the family of schemas $\sequent{\assump}{\goa}$, pictured below, for unary multi-space system $\mathcal{D}$:
	\begin{center}
		\begin{tikzpicture}[construction]
			\node[typeN = {$\mathtt{\{\eta x_1,\ldots,\eta x_l,\eta y_1,\ldots,\eta y_m,y_1,\ldots,y_m,z_1,\ldots,z_n\}}$}] (t) {};
			\node[constructor = {\texttt{addCurve}}, below = 0.4cm of t] (c) {};
			\node[typepos = {$\mathtt{\{x_1,\ldots,x_l,y_1,\ldots,y_m,z_1,\ldots,z_n\}}$}{-170}{1.57cm},below left = 0.4cm and 0.7cm of c] (t1) {};
			\node[typepos = {$\mathtt{\eta}$}{-90}{0.15cm},below left = 0.6cm and -0.1cm of c] (t2) {};
			\node[typepos = {$\mathtt{\{x_1,\ldots,x_l\}}$}{-45}{0.18cm},below right = 0.6cm and 0.4cm of c] (t3) {};
			\node[typeE = {$\mathtt{\{z_1,\ldots,z_n\}}$},below right = 0.4cm and 1.2cm of c] (t4) {};
			\path[->] (c) edge (t) 
			(t1) edge[out = 85, in = -170] node[index label]{1} (c) 
			(t2) edge[out = 90, in = -120] node[index label]{2} (c) 
			(t3) edge[out = 105, in = -60] node[index label]{3} (c) 
			(t4) edge[out = 120, in = -5] node[index label]{4} (c);
			\coordinate[above left = 0.15cm and 3.75cm of t1] (x1);
			\coordinate[above right = 0.15cm and 1.65cm of t4] (x2);
			\coordinate[below right = 0.55cm and 1.65cm of t4] (x3);
			\coordinate[below left = 0.55cm and 3.75cm of t1] (x4);
			\draw[rounded corners = 5, very thick, darkblue, draw opacity = 0.8] (x1) --node[xshift = -2.5cm, yshift = 0.2cm]{\large$\assump$} (x2) -- (x3) -- (x4) -- cycle;
			\coordinate[above left = 0.45cm and 3.1cm of t] (y1);
			\coordinate[above right = 0.45cm and 3.1cm of t] (y2);
			\coordinate[above right = 0.2cm and 1.75cm of t4] (y3);
			\coordinate[below right = 0.65cm and 1.75cm of t4] (y4);
			\coordinate[below left = 0.65cm and 3.85cm of t1] (y5);
			\coordinate[above left = 0.2cm and 3.85cm of t1] (y6);
			\draw[rounded corners = 7, very thick, darkred, dashed] (y1) -- (y2) -- (y3) -- (y4) -- (y5) -- (y6) --node[xshift = -0.15cm, yshift = 0.2cm]{\large$\goa$} cycle;
		\end{tikzpicture}
	\end{center}
	Note that in the output, $\mathtt{\{\eta x_1,\ldots,\eta x_l,\eta y_1,\ldots,\eta y_m,y_1,\ldots,y_m,z_1,\ldots,z_n\}}$, the zones of the \textit{in} region, $\mathtt{\{x_1,\ldots,x_l\}}$, are immersed in $\mathtt{\eta}$, the zones of the \textit{out} region, $\mathtt{\{z_1,\ldots,z_n\}}$, are disjoint of $\mathtt{\eta}$, and the remaining ones, $\mathtt{\{y_1,\ldots,y_m\}}$, are each split in two: one inside, and one outside of $\mathtt{\eta}$.
	
	If we see the constructor \texttt{addCurve} as a program, this family of schemas represents knowledge about how to  \textit{compute} this program. 
\end{sch}

\begin{sch}[Set expressions and their corresponding regions] From Example~\ref{ex:corr-var-region}. 
	This schema tells you how to propagate from the input to the output of the \texttt{corresponding\-RegionContainedIn} constructor. For every $\mathtt{x_1,\ldots,x_l,y_1,\ldots,y_m,z_1,\ldots,z_n}$ and $\mathtt{\eta}$ the following is a schema:
	\begin{center}

	\end{center}
\end{sch}

\paragraph{Problem encoding} The multi-space we will work on is $(\mathcal{C},\mathcal{D},\mathcal{C},\mathcal{G})$ where $\mathcal{C}$ encodes \textsc{Set Algebra}, $\mathcal{D}$ encodes \textsc{Euler Diagrams} and $\mathcal{G}$ encodes relations across and within the spaces. Given expression $A \subseteq B \wedge B \cap C = \emptyset$, we want to find a diagram that depicts it and a set expression that can be observed from the diagram. In other words, given the sequent $\sequent{\contextGraph_1,\contextGraph_2,\contextGraph_3,\assump}{\goa}$, visualised below, we want to find a $\{2,3\}$-reification of it that witnesses its validity modulo $\{2,3\}$. Note that we encode the relation of \textit{not being observable from} to prevent a trivial observation.
\begin{center}

\end{center}

\vskip 0.2in
\bibliography{library}
\bibliographystyle{theapa}




\end{document}